\newtheorem{thm}{Theorem}
\newtheorem*{thm*}{Theorem}
\newtheorem{cor}[thm]{Corollary}
\newtheorem{rmk}{Remark}
\newtheorem{lem}[thm]{Lemma}
\newtheorem*{lem*}{Lemma}
\newtheorem{ex}{Example}
\definecolor{myred}{HTML}{F4938C}
\definecolor{mygrey}{HTML}{E8E3E3}
\definecolor{mypurp}{HTML}{DE83E0}
\definecolor{myblue}{HTML}{a1cced}
\title{\textbf{Discovery and inference beyond linearity for epidemiological data by integrating Bayesian regression, tree ensembles and Shapley values}}
\author{Giorgio Spadaccini (Amsterdam UMC, Leiden University)\\
Marjolein Fokkema (Leiden University)\\
Mark A. van de Wiel (Amsterdam UMC)}
\affil{}
\date{}
\begin{document}
  \maketitle

\begin{abstract}
Machine Learning (ML) is gaining popularity in epidemiology and healthcare studies for hypothesis-free discovery of risk and protective factors. ML is strong at discovering nonlinearities and interactions, but this power is compromised by a lack of reliable inference. Although Shapley values provide local measures of features' effects, valid uncertainty quantification for these effects is typically lacking, thus precluding statistical inference. We propose RuleSHAP, a framework that addresses this limitation by combining a dedicated Bayesian sparse regression model with an improved tree-based rule generator and Shapley value attribution. RuleSHAP provides detection of nonlinear and interaction effects, with uncertainty quantification at the individual level as a key contribution. We derive an efficient formula for computing marginal Shapley values within this framework. We apply RuleSHAP to data from an epidemiological cohort to detect and infer several effects for high cholesterol and blood pressure, such as nonlinear interaction effects between features like age, sex, ethnicity, BMI and glucose level. To conclude, we demonstrate the validity of our framework on simulated data.
\end{abstract}

\noindent%
{\it Keywords:} Bayesian uncertainty quantification, nonlinearity, local feature effect
\vfill

\newpage
\setlength{\parskip}{12pt} 

\section{Introduction}
Risk and protective factors are at the center of many studies in healthcare. These studies aim to understand how policies, behaviors and treatments are associated to negative and positive health outcomes. In most cases, (generalized) linear regression is employed, because it provides readily available inference. This approach, however, is very rigid, as in many healthcare settings the effect of features on the outcome goes beyond linearity. Nonlinear and interaction terms may be included, but this requires prior knowledge or hypotheses about the functional form of features' effects. The expression \enquote{hypothesis-free discovery} is often used to refer to scientific investigation carried out through Machine Learning (ML) models in contexts where linearity is too restrictive  \citep{del2019machine,madakkatel2021combining}. Unlike parametric models such as linear regression, ML models allow to capture more complex patterns in the data, as they do not require any prior specification of the shape of the effects. This gives researchers an edge over the standard approach, as ML can capture trends that a linear model would miss.\par

However, combining reliable inference with interpretability, both of which are key for healthcare applications, remains a challenge for ML models. Feature importance measures such as ALE \citep{ALE}, LIME \citep{ribeiro2016should} and SHAP  \citep{shapley1953value,strumbelj2010efficient} have been developed to make black box models more transparent, but these measures are not suitable for inference, mainly due to a lack of uncertainty quantification \citep{molnar2020interpretable}. Some existing approaches aim to quantify the uncertainty of SHAP values \citep{watson2024explaining,slack2021reliable}, but they mostly focus on aleatoric uncertainty. Specifically, they quantify the uncertainty in the SHAP values of a fixed estimated model, thereby neglecting epistemic uncertainty, i.e. the uncertainty that stems from parameter instability.
While these feature importance measures may be combined with ML models that are naturally equipped with uncertainty quantification \cite[e.g., Bayesian Additive Regression Trees; ][ or Bayesian rule ensembles; \citeauthor{nalenz2018tree} \citeyear{nalenz2018tree}]{chipman2010bart}, we show that such inference is unreliable when used for local hypothesis testing. Careful semi-parametric modeling, e.g. through Generalized Additive Models \cite[GAMs; ][]{GAMlimits,hastie2017GAM} may also be used for this purpose but does not allow for hypothesis-free detection of interactions. Other researchers rely on a two-step procedure in which the features are filtered with an ML model and then tested with linear regression \citep{madakkatel2021combining,liu2023combining,madakkatel2023hypothesis}. However, this approach can yield overly optimistic uncertainty quantifications unless one uses two batches of data (see  Example~\ref{ex:TwoStepbad} in the Supplementary Material). Furthermore, its inference is limited to linear main effects, rendering it potentially inconsistent with the ML model.\par

We aim to obtain both flexibility and inference within one coherent framework and apply it to discover beyond-linear trends related to cardiovascular health. We propose RuleSHAP, a framework for using rule-based, hypothesis-free discovery that combines Bayesian regression, tree ensembles, and Shapley values to both detect and infer complex patterns. RuleSHAP critically improves over existing methods in three main ways. First, it generates rules in a manner that we argue to be better suited for interpretation and uncertainty quantification, compared to existing rule ensembles. For example, it specifically creates rules for beyond-linear effects. Second, it relies on the good coverage properties of Bayesian horseshoe regression \citep{van2017uncertainty} to obtain high-quality uncertainty quantification. Third, RuleSHAP efficiently produces marginal Shapley values \citep{shapley1953value,strumbelj2010efficient} with Bayesian uncertainty quantification, allowing for inference through credible intervals.\par

The work is structured as follows: in Section~\ref{sec:RuleFit}, we discuss the existing RuleFit family of models that RuleSHAP improves upon. In Section~\ref{sec:RuleSHAP}, we introduce RuleSHAP and derive its Shapley values.
In Section~\ref{sec:HELIUS} we apply our method to the data collected in the HELIUS epidemiological cohort study \citep{snijder2017cohort} to explore risk and protective factors for high cholesterol level and high systolic blood pressure, uncovering nonlinear interaction effects of covariates such as age, sex, ethnicity, BMI and glucose level. In Section~\ref{sec:EmpiricalEval} we use simulations to confirm that RuleSHAP correctly reconstructs the effects of signal features and flags noise features' local effects as not significant. In Section~\ref{sec:Discussion}, we summarize our findings and describe strengths, limitations and future directions.

\section{RuleFit family of models}
\label{sec:RuleFit}
In this section, we briefly summarize how the existing RuleFit models work. This includes the original RuleFit model \citep{friedman2008predictive,fokkema2017fitting} and its Bayesian variation HorseRule \citep{nalenz2018tree}.\par

A RuleFit model \cite[also known as Prediction Rule Ensemble; ][]{fokkema2017fitting}, is a linear regression model that is enhanced with dichotomous terms to account for nonlinearity and interactions. These dichotomous terms are defined as binary decision rules and are typically extracted from one or more tree ensembles, such as Random Forests, Gradient Boosting or XGboost. The resulting model takes the form:
\begin{equation}
    F(x) = \hat{a}_0 + \sum_{k=1}^q{\hat{a}_k r_k(x) + \sum_{j=1}^p{\hat{b}_j x_j}},
    \label{eq:RuleFitLinearEq}
\end{equation}
where $r_1,\ldots,r_q$ are the decision rules and $x_1,\ldots,x_p$ are the linear terms which may also undergo Winsorization to enforce stability\footnote{See \citep[Section 5]{friedman2008predictive}, we employ the standard of truncating to the 2.5-th and the 97.5-th percentiles.}. Examples of decision rules are $r_k(x)=I(x_2 < 0.7)$, which models a nonlinear trend, or $r_k(x)=I(x_1 > 0.5,\, x_3 > 0.8)$ which models an interaction. This enhanced linear model $F(x)$ is estimated by fitting a sparse linear regression (such as a LASSO fit) on the terms $x_1,\ldots,x_p,r_1,\ldots,r_q$. The underlying idea is that since all rules defining the tree ensemble are carried over to the linear model, RuleFit should retain (most of) the accuracy of the original tree ensemble that generated the rules. At the same time, RuleFit will simplify it, since the final model is still a sparse linear regression.\par

Heuristic measures for features' effects have been proposed that can easily be computed from the coefficients $\hat{a}_k$ and $\hat{b}_j$ \citep{friedman2008predictive}. However, these measures may be inaccurate when estimating compensating effects or interactions. We expand these considerations further with Example~\ref{ex:FriedImp} in the Supplementary Material.\par

HorseRule \citep{nalenz2018tree} is a Bayesian variation of RuleFit. Instead of the frequentist LASSO, it uses Bayesian regression with a Horseshoe prior \citep{carvalho2010HSselection} to estimate the model in Equation~\ref{eq:RuleFitLinearEq}. HorseRule also introduces a structured penalization: stronger shrinkage is applied to rules with higher depth (i.e. many features involved in its definition) or very extreme support (i.e., almost all or almost no observations follow the rule). More details are given in Subsection~\ref{subsec:HorseRulePrior} of the Supplementary Material. This prior favors the use of linear terms by construction, since it never shrinks linear terms more than a rule, but oftentimes less. Despite this definition, we will show in Section~\ref{sec:EmpiricalEval} that HorseRule still often fails to recognize linear trends.\par 

Even with the Bayesian uncertainty quantification provided by HorseRule, RuleFit's feature effect measure is not suitable for inference. This means that inference on the feature level, which is desirable since features are typically present in multiple terms, remains unattainable. More details are discussed in Subsection~\ref{subsec:NoInfer} of the Supplementary Material.

\section{RuleSHAP}
\label{sec:RuleSHAP}
The goal of RuleSHAP is to adapt the RuleFit family of models to an inferential framework. Currently, three critical issues limit that:
\begin{itemize}[label={-}]
    \item \textbf{Linear fit:} The linear fit used to estimate the coefficients in Equation~\ref{eq:RuleFitLinearEq} should also estimate their uncertainty well. For instance, HorseRule typically overshrinks linear effects (see Figure~\ref{fig:exBARTRS} on page \pageref{fig:exBARTRS} and Figures~\ref{fig:FriedCoeffsp10}-\ref{fig:LogiCoeffsp30} in the Supplementary Material). Our model, on the other hand, separates the shrinkage between rules and linear terms. This is crucial for inference on linear effects.
    \item \textbf{Rule generation:} The coverage of the coefficients in the linear fit is conditional on a fixed set of rules. However, the definition of the rules themselves is also random as it depends on the data at hand. Therefore, the uncertainty in the rule generation also needs to be quantified.
    \item \textbf{Feature effect measure:} The feature effect measures defined for RuleFit models do not provide reliable and interpretable metrics, even if satisfactory uncertainty quantification for the coefficients is available.
\end{itemize}
In this section we expand on these issues and describe how RuleSHAP handles each of them. The current paper is restricted to the setting of binary or continuous outcome, but RuleSHAP may be extended to any type of generalized linear regression where horseshoe prior regression can be implemented and working residuals can be computed.

\subsection{Linear fit}
In the linear fit, we aim to enforce model simplicity as much as possible while keeping the uncertainty quantification reliable. More specifically, we slightly re-define the horseshoe prior so that it shrinks linear terms less and thereby produces more accurate inference for linear effects. The advantages of using linear terms when they adequately approximate the feature-outcome relationship are threefold: first, it gives the model more simplicity. Second, linear terms are not defined conditionally on the data. This means that they are readily compatible with inference, unlike rule terms. Third, linear terms induce an effect across all observations. This allows the local effect estimates to better borrow information from each other, as they are all of the form $\hat{b}_j(x^{(i)}_j-\overline{x}_j)$ with the same $\hat{b}_j$ across all observations. This is in contrast with rules, which instead may target only a handful of observations.\par

When HorseRule is fitted and a horseshoe prior is used, linear terms tend to be shrunken excessively and are replaced by rules. As we show in Section \ref{sec:EmpiricalEval}, this happens even when the true data-generating feature effect is purely linear. We believe this to be caused by the definition of the horseshoe prior: local shrinkage parameters of each coefficient are combined with one overarching global shrinkage. When many rules are uninformative, they can substantially inflate the global shrinkage. Linear terms must then rely on their local shrinkage parameters to counterbalance this effect, and whenever this compensation fails, those terms become excessively shrunken.\par

To prevent this, we change the structured horseshoe prior used by HorseRule and give it a hierarchical shape, similarly to grouped LASSO regularization \citep{lim2015learning}. More specifically, we propose to use separate global shrinkages $\tau_R$ and $\tau_L$ for rules and linear terms respectively:
\begin{gather*}
 y|X_R,X_L,a,b,\sigma^2 \sim \mathcal{N}(X_R \cdot a+X_L \cdot b,\sigma^2I_n),\\
a_k|\lambda,\tau,\tau_R,\sigma^2 \sim \mathcal{N}(0,\lambda_k^2\tau_R^2\tau^2\sigma^2), \qquad \qquad \qquad 
b_j|\gamma,\tau,\tau_L,\sigma^2 \sim \mathcal{N}(0,\gamma_j^2\tau_L^2\tau^2\sigma^2),\\
\lambda_k \sim \mathcal{C}^+(0,A_k),\qquad \qquad \qquad
\gamma_j \sim \mathcal{C}^+(0,1),\,\,\,\\
\tau,\tau_L,\tau_R \sim \mathcal{C}^+(0,1),\qquad\qquad \qquad \sigma^2 \sim \sigma^{-2}d\sigma^2, \qquad \,\,
\end{gather*}
where $y \in \mathbb{R}^n$ are the observed outcomes, $X_L \in \mathbb{R}^{n \times p}$ is the design matrix whose $p$ columns are the linear terms and $X_R \in \mathbb{R}^{n\times q}$ is the design matrix whose $q$ columns are the rule terms appearing in Equation~\ref{eq:RuleFitLinearEq}. The notation $\mathcal{C}^+(0,A)$ denotes the half-Cauchy distribution of location 0 and scale $A$. The scalar $A_k$ is defined as:
$$A_k=\frac{(2\min(\overline{r}_k,1-\overline{r}_k))^{\mu-0.5}}{m_k^\eta\sqrt{2\max(\overline{r}_k,1-\overline{r}_k)}}$$
and is equivalent to the structured shrinkage developed for HorseRule \citep{nalenz2018tree}, rescaled after accounting for standardization. Remark \ref{rmk:Rescaling} in the Supplementary Material clarifies the definition of $A_k$. The hyperparameters $\mu,\eta >0$ determine the level of shrinkage applied to rules with extreme support and high depth, respectively.\par

This prior differs from HorseRule's horseshoe prior, which only relies on $\tau$ instead of $\tau,\tau_L,\tau_R$. The use of rule- and linear-specific global shrinkages means that uninformative rules only inflate $\tau_R$ instead of $\tau$, preventing the shrinkage of linear terms from being affected.\par

To estimate the model, we adapted the Gibbs sampling scheme for plain Horseshoe regression described by \cite{makalic2015simple} and implemented in the work of \cite{horseshoeR,horseshoenlmR,Villani2017HorseRuleR}. More details are provided in the Supplementary Material.

\subsection{Rule generation}
RuleSHAP also alters rule generation, both to accommodate inference and to ensure that rules only model beyond-linear trends.\par

First, we focus on the inference-hindering problem that rule generation is also random and depends on the dataset at hand. This problem is worsened by the fact that the dataset used for the rule generation is the same as for the linear fit. More specifically, in a Random Forest, each tree uses a bootstrapped sample which, up to multiple copies, contains around $63\%$ of the total number of observations. This means that each rule already fits roughly two thirds of the whole data very well, and these observations are used to determine the coefficients of the model and their uncertainty. This might not only lead to overfitting, but also to overly optimistic uncertainty quantification: through the lens of the linear model, the rules \enquote{happen} to fit the data very well, when in fact they were constructed by design to do so. As shown in Section~\ref{sec:EmpiricalEval}, such underestimated uncertainty inflates the significance rates of the noise features' effects and should therefore be avoided.\par

To prevent this, we propose to robustify the case-resampling bootstrapping that defines the Random Forest by augmenting it with a parametric bootstrap. More specifically, we apply the following three-step strategy to generate the rules:

\begin{itemize}[label={}]
    \item \textbf{Step 1:} Fit a Random Forest on the data, using $y_1,\ldots,y_n$ as outcome.
    \item \textbf{Step 2:} Given any probability distribution $\mathcal{F}(\theta)$ with parameters $\theta$, assume:
    $$y_i \sim \mathcal{F}(\theta_i) \qquad \forall i=1,\ldots,n,$$
    and use the Random Forest's out-of-bag prediction to estimate the parameters of the assumed distributions:
    $$y_i \sim \mathcal{F}(\hat{\theta}_i) \qquad \forall i=1,\ldots,n.$$
    For example, assuming homoscedastic, normally distributed outcomes, the distributions and estimated parameters would be:
    $$y_i \sim \mathcal{N}(\hat{\mu}_i,\hat{\sigma}^2), \qquad \hat{\mu}_i = \hat{y}_i, \qquad \hat{\sigma}^2=\frac{1}{n}\sum_{i=1}^n(y_i-\widehat{y}_i)^2 \qquad \qquad \forall i=1,\ldots,n,$$
    where $\hat{y}_1,\ldots,\hat{y}_n$ are the out-of-bag predictions of the Random Forest fitted in Step 1.
    \item \textbf{Step 3:} Generate the rules by fitting a Random Forest which replaces the observed outcomes $y_1,\ldots,y_n$ with synthetic copies sampled from the estimated distributions $\mathcal{F}(\hat{\theta}_1),\ldots,\mathcal{F}(\hat{\theta}_n)$.
    So, if the bootstrap sample that is used to fit the $k$-th tree contains observations indexed as $i_{k,1},\ldots,i_{k,n}$, the synthetic outcomes to use for this bootstrap sample are not $y_{i_{k,1}},\ldots,y_{i_{k,n}}$, but instead:
    $$\accentset{\sim}{y}^{(k)}_1 \sim \mathcal{F}(\hat{\theta}_{i_{k,1}}),\ldots,\accentset{\sim}{y}^{(k)}_n \sim \mathcal{F}(\hat{\theta}_{i_{k,n}}).$$
For example, assuming homoscedastic, normally distributed outcomes, the outcomes used for the $k$-th bootstrap sample are drawn as:
    $$\accentset{\sim}{y}^{(k)}_1 \sim \mathcal{N}(\hat{y}_{i_{k,1}},\hat{\sigma}^2),\ldots,\accentset{\sim}{y}^{(k)}_n \sim \mathcal{N}(\hat{y}_{i_{k,n}},\hat{\sigma}^2).$$
\end{itemize}

The algorithm describes a general, distribution-agnostic mechanism. Because the coefficients in the rule ensemble model are estimated with a Bayesian linear fit, which assumes homoscedastic, normally-distributed outcomes, we invoke the same assumption during rule generation.\par

Our algorithm re-samples the synthetic outcome every time it is used, thus introducing uncertainty in the rule generation. Concretely, this injection of noise regularizes the rule generation, thereby increasing its stability and making it less prone to overfitting (see for instance \cite{bishop1995training}). On the other hand, predecessor methods developed by \cite{friedman2008predictive, fokkema2017fitting, nalenz2018tree}, repeatedly use the original outcome for generating rules as well as estimating the model in Equation \ref{eq:RuleFitLinearEq}. A comparison is shown in Figure~\ref{fig:RFvPRF} in the Supplementary Material and confirms a much more erratic behavior in the case where rules are generated from a non-modified Random Forest. In this example, the non-modified Random Forest induces false significance of the effect of some values of the feature $x_9$. Our approach, on the other hand, results in much smoother estimated feature effects and prevents such false significance. We therefore refer to our approach as \enquote{Smoothing Random Forest}. Although combinations of tree ensembling methods can be employed for rule generation as described by \cite{nalenz2018tree}, our experiments in Section~\ref{sec:EmpiricalEval} indicate that sole use of our Smoothing Random Forest lead to better distinction between noise and signal features.\par

A further measure RuleSHAP takes to encourage interpretability and counter the overfitting of rules is to fully disaggregate the rules. That is, the conditions that appear in a tree path are recombined in all possible ways. As an example, consider the tree in Figure~\ref{fig:TreeEx}. The rules that are generated from the path leading to the third terminal node $I(x_1 \geq 3,x_3<2,x_2\geq 4)$ are:
\begin{gather*}
    I(x_1 \geq 3), \qquad
    I(x_3 < 2), \qquad
    I(x_2 \geq 4), \\
    I(x_1 \geq 3, x_3 < 2), \qquad
    I(x_3 < 2, x_2 \geq 4), \qquad
    I(x_1 \geq 3, x_2 \geq 4), \\
    I(x_1 \geq 3,x_3<2,x_2\geq 4).
\end{gather*}
\begin{figure}
    \centering
    \includegraphics[width=0.6\linewidth]{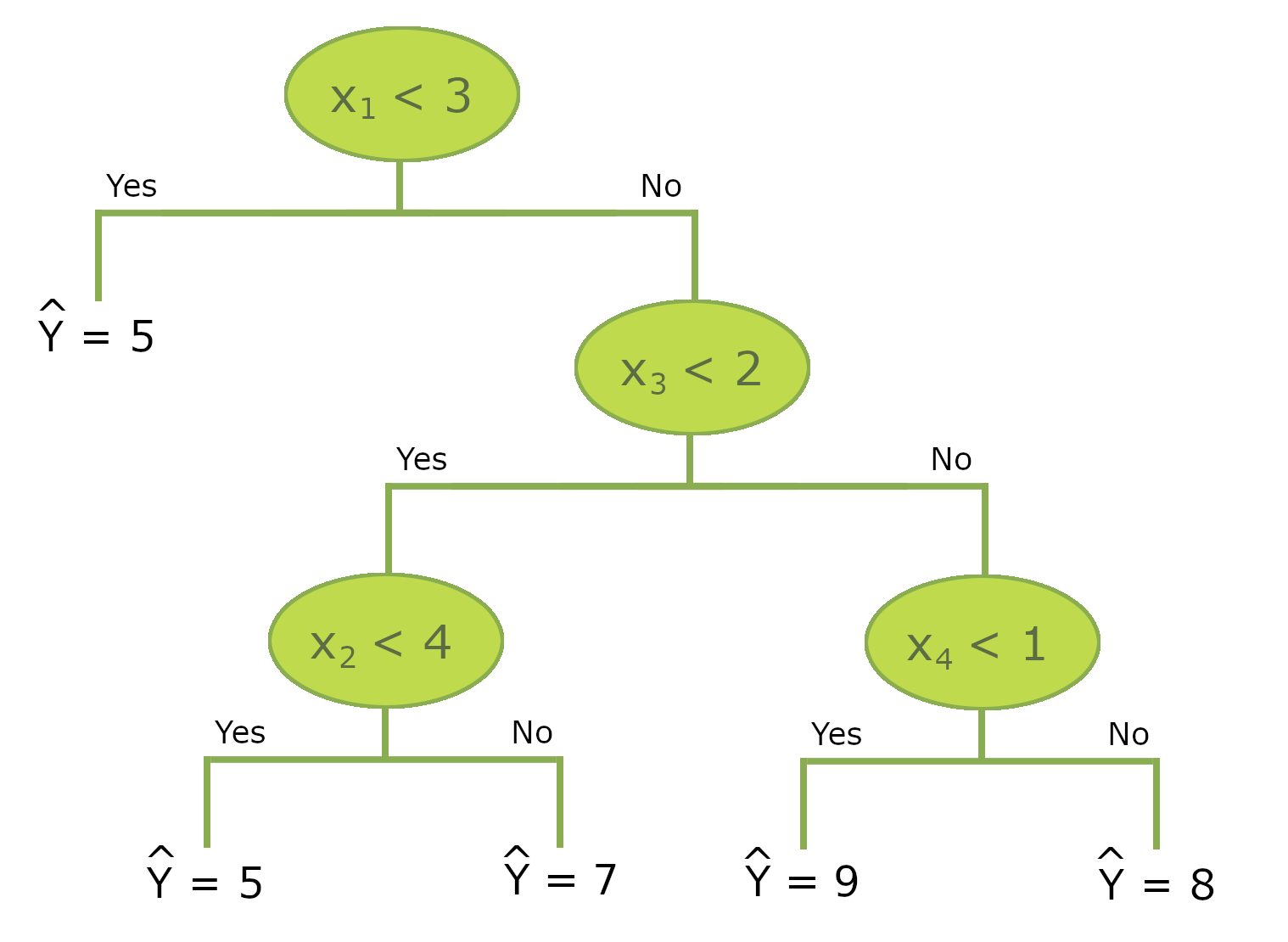}
    \caption{Example of one of the many trees generated from a tree ensemble.}
    \label{fig:TreeEx}
\end{figure}
This expands the set of conditions produced by RuleFit for this specific tree path, which would only be:
\begin{equation*}
I(x_1 \geq 3), \qquad    I(x_1 \geq 3, x_3 < 2), \qquad    I(x_1 \geq 3,x_3<2,x_2\geq 4).
\end{equation*}
By combining this disaggregation with a prior that discourages deeper rules, interactions are freely generated but will only be used if actually necessary to the model. In this example, disaggregation guarantees that the model can rely on e.g. condition $I(x_3 < 2)$. Without disaggregation, this condition would not be available to the model, which therefore might be forced to use the unnecesarily more complex condition $I(x_1 \geq 3,x_3 < 2)$.\par

Finally, to further prevent rules from (partially) modeling linear trends, we fitted the random forest on residuals from a linear model. This approach has previously been applied for fitting single trees \citep{chaudhuri1995generalized, loh2002regression, dusseldorp2010combining}. In line with these works, RuleSHAP does not use the observed outcome to generate rules, but instead uses the residuals from a Bayesian Horseshoe regression on the linear terms. For logistic regression, where residuals are not as straightforwardly defined as in the linear case, we used working residuals \citep{hardin2007generalized}. This is because these estimated residuals are on the scale of the features, which is the same scale that the rules are added to.

\subsection{Feature effect measures}
Using the RuleFit family of models for inference requires a suitable metric for features' effects. As further discussed in the Supplementary Material, RuleFit's importance measures as defined by \cite{friedman2008predictive} do not allow for testing as they induce a boundary testing problem. We therefore propose the use of marginal Shapley values to retrieve the local effect of each feature in the model.\par

Shapley values may be viewed as a weighted average of centered partial dependence functions. More specifically, the marginal Shapley value of the $j$-th feature for a datapoint $x^*$ is computed as:
\begin{equation}
    \label{eq:shapleyformula}
    \phi_j(x^*)=\sum_{S \subseteq \{1,\ldots,p\}\setminus\{j\}}\frac{1}{p\binom{p-1}{|S|}}\Big(\mathbb{E}[F(x)|\text{do}(x_S=x_S^*,x_j=x_j^*)]-\mathbb{E}[F(x)|\text{do}(x_S=x_S^*)]\Big),
\end{equation}
where the notation $x_S=x_S^*$ means $x_k=x_k^* \, \forall k \in S$ and $\mathbb{E}[F(x)|\text{do}(\cdots)]$ is the interventional expectation. In this formula, the weights $\frac{1}{p\binom{p-1}{|S|}}$ take into account the fact that there are fewer subsets $S$ where $|S|$ is very small or very large. Marginal Shapley values have also been proposed as a way to quantify the predictive effect of interactions \citep{lundberg2018consistent}. Their definition is analogous to that of Equation~\ref{eq:shapleyformula}, but contrasts are replaced with ANOVA-like interaction effects:
\begin{align}
\begin{split}
    \label{eq:shapleyformulaInt}
    \phi_{j,k}(x^*)&=\sum_{S \subseteq \{1,\ldots,p\}\setminus\{j,k\}}\frac{1}{(p-1)\binom{p-2}{|S|}}\Big(\mathbb{E}[F(x)|\text{do}(x_S=x_S^*,x_j=x_j^*,x_k=x_k^*)]\\
    &-\mathbb{E}[F(x)|\text{do}(x_S=x_S^*,x_j=x_j^*)]-\mathbb{E}[F(x)|\text{do}(x_S=x_S^*,x_k=x_k^*)]\\
    &+\mathbb{E}[F(x)|\text{do}(x_S=x_S^*)]\Big).
\end{split}
\end{align}
What is left after subtracting all the interaction Shapley values may then be viewed as main effects.\par

The additivity property of Shapley values, combined with the additive nature of RuleFit, allows us to focus on Shapley values of individual terms. The overall model effects are then obtained by summing up the effects across all terms. This process may be repeated for every posterior sample of the model to estimate the full distribution of Shapley values and perform inference.\par

Aside from the many model-agnostic approaches that approximate Shapley values \citep{chen2022algorithms}, there are also tree-specific computations for the models described in Section~\ref{sec:RuleFit}. More specifically, upon viewing an individual rule as a specific 0-1-valued tree, TreeSHAP \citep{treeSHAPpackage,lundberg2020local} may be used to compute the Shapley values of a dichotomous rule. In the next section, we introduce an explicit formula for computing them directly.

\subsection{Direct computation of Shapley values}
\label{subsec:DirectShapleys}
In this section we produce an explicit formula for the computation of marginal Shapley values of a single rule $r(x)$. The formula estimates each expectation $\mathbb{E}[r(x)|\text{do}(x_S=x_S^*)]$ with its respective sample mean and is thus an \enquote{exact} computation in this sense. This formula may be used to efficiently compute marginal Shapley values for any of the RuleFit-based models.

\begin{thm}
\label{thm:ourFormula}
Assume to have a dataset $\mathcal{T}$ of size $n$. Consider a 0-1 coded rule decomposed as the product of single conditions and thus of the form $r(x_1,\ldots,x_p)=\prod_{k=1}^pR_k(x_k)$, with ${R_k:\mathbb{R}\rightarrow\{0,1\}}$. Given $j \in \{1,\ldots,p\}$ and a datapoint $x^*$, the contribution of the $j$-th feature to the prediction $\hat{a} \cdot r(x^*)$ as defined by marginal Shapley values is unbiasedly estimated by:
$$\widehat{\phi}_j(x^*)=\hat{a} \cdot \Bigg(\frac{1}{n(p-q(x^*)+R_j(x^*_j))}\sum_{\substack{t \in \mathcal{T} \text{s.t.}\\ R_k(t_k)=1 \vee R_k(x^*_k)=1 \,\forall k}}\frac{R_j(x_j^*)-R_j(t_j)}{\binom{2p-q(x^*)-q(t)-1+R_j(x^*_j)+R_j(t_j)}{p-q(x^*)+R_j(x^*_j)}}\Bigg),$$
where $q: \mathbb{R}^p \rightarrow \mathbb{N}$ is defined as $q(x)=\sum_{k=1}^pR_k(x_k)$ and $\vee$ is the logical \enquote{or} operator.
\end{thm}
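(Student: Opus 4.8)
The plan is to evaluate the Shapley formula~\eqref{eq:shapleyformula} directly for the single rule $F(x)=\hat a\cdot r(x)=\hat a\prod_{k=1}^p R_k(x_k)$, replacing every interventional expectation $\mathbb{E}[r(x)\mid\mathrm{do}(x_S=x_S^*)]$ by its empirical counterpart $\frac1n\sum_{t\in\mathcal T}\prod_{k\in S}R_k(x_k^*)\prod_{k\notin S}R_k(t_k)$. Because $r$ factorizes and the conditions are $0$--$1$ valued, the product over $k\notin S$ of $R_k(t_k)$ is $1$ iff $t$ satisfies every condition outside $S$, and the product over $k\in S$ of $R_k(x_k^*)$ is $1$ iff $x^*$ satisfies every condition in $S$; otherwise the term vanishes. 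So the whole Shapley sum reduces to counting, for each data point $t$, the subsets $S\subseteq\{1,\dots,p\}\setminus\{j\}$ that contribute, with the appropriate weight $\frac1{p\binom{p-1}{|S|}}$, and separately the subsets contributing to the $x_j=x_j^*$ term versus the baseline term.

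First I would fix a data point $t$ and determine which subsets $S$ make the ``$\mathrm{do}(x_S=x_S^*,x_j=x_j^*)$'' expectation nonzero for that $t$: we need $R_k(x_k^*)=1$ for all $k\in S\cup\{j\}$ and $R_k(t_k)=1$ for all $k\notin S\cup\{j\}$. Writing $q(x^*)=\sum_k R_k(x_k^*)$ and $q(t)=\sum_k R_k(t_k)$, the indices split into four classes according to whether $R_k(x_k^*)$ and $R_k(t_k)$ are $0$ or $1$; a contributing $S$ must contain all indices where only $x^*$ satisfies the condition, must exclude all where only $t$ does, and is free on the indices where both satisfy it. The term is automatically zero unless there is no index where \emph{neither} satisfies the condition --- which is exactly the summation constraint $R_k(t_k)=1\vee R_k(x_k^*)=1\ \forall k$ appearing in the statement. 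Under that constraint the number of ``free'' indices (excluding $j$) is $2p-q(x^*)-q(t)-1+R_j(x_j^*)+R_j(t_j)$ of which a contributing $S$ of size $|S|$ uses some number; the number of forced-in indices among the non-free ones is $p-q(t)-1+R_j(t_j)$ or similar, so $|S|$ ranges over an interval and the count of $S$ of given size is a single binomial coefficient.

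The main obstacle — and the heart of the computation — is the weighted binomial identity that collapses the sum $\sum_S \frac1{p\binom{p-1}{|S|}}$ over the contributing $S$'s into the closed form $\frac1{n(p-q(x^*)+R_j(x_j^*))\binom{2p-q(x^*)-q(t)-1+R_j(x_j^*)+R_j(t_j)}{p-q(x^*)+R_j(x_j^*)}}$. Concretely, after the bookkeeping above one is left with an identity of the type $\sum_{i} \binom{f}{i}\big/\binom{p-1}{c_0+i} = \binom{p}{c_1}\big/\big[(p-c_1)\binom{f}{\,\cdot\,}\big]$ for appropriate $f,c_0,c_1$; this is a standard ``parallel summation / Vandermonde-type'' identity for ratios of binomial coefficients, which I would prove by induction on the number of free indices or by recognizing it as a Beta-integral identity $\frac1{\binom{p-1}{|S|}}=p\int_0^1 u^{|S|}(1-u)^{p-1-|S|}\,du$, turning the sum over $S$ into $\int_0^1$ of a product of simple factors that integrates to another Beta function. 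The same computation, run for the baseline term ``$\mathrm{do}(x_S=x_S^*)$'', produces the term with the $-R_j(t_j)$; subtracting the two and noticing that the contributing index sets and the denominators match up (the presence or absence of $j$ in the ``free'' count is precisely the $+R_j(x_j^*)+R_j(t_j)$ correction) yields the stated single combined sum over $t$. Finally, unbiasedness is immediate: each empirical interventional mean is an unbiased estimator of the corresponding population expectation, and $\widehat\phi_j(x^*)$ is an exact finite linear combination of them, so the claimed unbiasedness follows by linearity of expectation.
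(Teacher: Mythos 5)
Your proposal is correct, and structurally it is the same argument as the paper's: replace each interventional expectation by its sample mean, fix a data point $t$, observe that a coalition $S\subseteq\{1,\ldots,p\}\setminus\{j\}$ contributes only if it contains every index where $t$ fails the condition and avoids every index where $x^*$ fails it (which forces the restriction $R_k(t_k)=1\vee R_k(x_k^*)=1$ for all $k$ on the outer sum), count the contributing coalitions of each size by a single binomial coefficient, and note that the inner weighted sum over $S$ is identical for the two interventional terms, so the difference factors as $R_j(x_j^*)-R_j(t_j)$ times a common weight. Where you genuinely differ is in how that weighted count is collapsed: the paper proves the bespoke Vandermonde-type identity of Lemma~\ref{lem:binomialsum}, $\sum_{u=0}^{c}\binom{a+u}{u}\binom{b-u}{c-u}=\binom{a+b+1}{c}$, by a fairly long double-counting argument, and does so inside a more general theorem in which the rule is multiplied by an arbitrary factor $\varphi(x_\ell)$, Theorem~\ref{thm:ourFormula} then following as Corollary~\ref{cor:ourFormulaCor} with $\varphi\equiv\hat a$; you instead use the Beta representation $\frac{1}{p\binom{p-1}{|S|}}=\int_0^1u^{|S|}(1-u)^{p-1-|S|}\,du$, after which the sum over the free indices is the binomial theorem in $u$ and the remaining Beta integral gives exactly the paper's closed form $\frac{(p-1-q_j(t))!\,(p-1-q_j(x^*))!}{(2p-q_j(x^*)-q_j(t)-1)!}$, where $q_j(x)=q(x)-R_j(x_j)$, which equals the stated $\frac{1}{(p-q_j(x^*))}\binom{2p-q_j(x^*)-q_j(t)-1}{p-q_j(x^*)}^{-1}$. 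Your route is shorter and arguably cleaner; the paper's is more elementary and its extra $\varphi$-generality is reused for the linear-times-rule and interaction results. One slip to fix when writing it out: the number of free indices (those $k\neq j$ with $R_k(x_k^*)=R_k(t_k)=1$) is $q_j(x^*)+q_j(t)-p+1$, not $2p-q(x^*)-q(t)-1+R_j(x_j^*)+R_j(t_j)$; the latter is the upper entry of the final binomial coefficient (forced-in plus forced-out plus one). Your unbiasedness argument (linearity over unbiased sample means of the interventional expectations) is also the paper's.
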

\begin{proof}
See Corollary~\ref{cor:ourFormulaCor} in the Supplementary Material.
\end{proof}

This formula suggests computing Shapley values as sample means, for instance as described in Algorithm~\ref{alg:ourShapley} in the Supplementary Material\footnote{In the Supplementary Material, the results are shown for a broader set of tree-based learners. For instance, it may be used to also estimate (interaction) Shapley values for (terms extracted from) GLM trees \citep{zeileis2008model} and PILOT trees \citep{raymaekers2024fast}. This presents an alternative, direct computation to the work of \cite{zern2023interventional}, where the TreeSHAP algorithm is adjusted to accommodate GLM trees.}. For rules that involve only some of the features, one may may disregard all features not involved in the rule (see Lemma~\ref{lem:uselessfeatures} in the Supplementary Material). Algorithm~\ref{alg:ourShapley} may also be used to compute marginal Shapley values of a regular decision tree, since a decision tree can be decomposed as a linear combination of the rule terms at its terminal nodes. In this case, the computational cost is the same as path-independent TreeSHAP \citep{lundberg2020local}, which also coincides with Algorithm~\ref{alg:ourShapley} in terms of estimates. However, Algorithm~\ref{alg:ourShapley} allows us to look at individual rules rather than whole trees for the same computational cost. The Shapley values for an individual rule involving $D$ features can only be computed with TreeSHAP by converting the rule back into a full 0-1-valued tree of depth $D$, with $D+1$ terminal nodes. Because of this, TreeSHAP is $D+1$ times slower than Algorithm 1 when the purpose is to obtain the Shapley values of each rule separately, as in our setting. \footnote{When $D$ is capped at a low value, the difference in efficiency may be small. Note that the relative computational advantage of our algorithm is larger in contexts where trees are very deep but only use few covariates per tree, since in these settings the inner \textit{for} loop in Algorithm~\ref{alg:ourShapley} only needs to be run for the features involved in the rule. Examples of such setting are Planted Trees and Planted Tree Forests \citep{hiabu2020random}.}.\par

Similar arguments support the computation of interaction Shapley values (see also Algorithm~\ref{alg:ourShapleyInt} in the Supplementary Material):
\begin{thm}
\label{thm:ourFormulaInter}
Assume a dataset $\mathcal{T}$ of size $n$. Consider a 0-1 coded rule decomposed as the product of single conditions and thus of the form $r(x_1,\ldots,x_p)=\prod_{k=1}^pR_k(x_k)$, with $R_k:\mathbb{R} \rightarrow \{0,1\}$. Given two different indices $j,j' \in \{1,\ldots,p\}$ and a datapoint $x^*$, the interaction of the $j$-th and the $j'$-th features within the prediction $\hat{a}\cdot r(x^*)$ as defined by marginal interaction Shapley values is unbiasedly estimated by:
\begin{align*}
\widehat{\phi}_{j,j'}(x^*) \!\begin{multlined}[t]
    =\frac{\hat{a}}{n(p-1-q(x^*)+R_j(x^*_j)+R_{j'}(x^*_{j'}))}\\
    \cdot \sum_{\substack{t \in \mathcal{T} \text{s.t.}\\ R_k(t_k)=1 \vee R_k(x^*_k)=1 \,\forall k}}\frac{R_{j'}(x_{j'}^*)R_j(x_j^*)-R_{j'}(x_{j'}^*)R_j(t_j)-R_{j'}(t_{j'})R_j(x_j^*)+R_{j'}(t_{j'})R_j(t_j)}{\binom{2p-q(x^*)-q(t)+R_{j'}(x^*_{j'})+R_j(x^*_j)+R_{j'}(t_{j'})+R_j(t_j)-3}{p-q(x^*)+R_{j'}(x^*_{j'})+R_j(x^*_j)}},
    \end{multlined}
\end{align*}
where $q: \mathbb{R}^p \rightarrow \mathbb{N}$ is defined as $q(x)=\sum_{k=1}^pR_k(x_k)$ and $\vee$ is the logical \enquote{or} operator.
\end{thm}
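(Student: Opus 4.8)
The plan is to mirror the derivation already used for the single-feature formula of Theorem~\ref{thm:ourFormula}, replacing the simple contrast $\mathbb{E}[r|\text{do}(x_S,x_j)]-\mathbb{E}[r|\text{do}(x_S)]$ by the ANOVA-style second difference appearing in Equation~\ref{eq:shapleyformulaInt}. First I would note that, because $r(x)=\prod_{k=1}^p R_k(x_k)$ is a product of coordinatewise indicators, an interventional expectation over a subset $S\subseteq\{1,\ldots,p\}$ factorizes: $\mathbb{E}[r(x)\mid\text{do}(x_S=x_S^*)]=\big(\prod_{k\in S}R_k(x_k^*)\big)\cdot\mathbb{E}\big[\prod_{k\notin S}R_k(x_k)\big]$, and the remaining expectation is estimated by the sample average $\frac{1}{n}\sum_{t\in\mathcal{T}}\prod_{k\notin S}R_k(t_k)$. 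Substituting these sample means into Equation~\ref{eq:shapleyformulaInt} (with the overall scalar $\hat a$ pulled out) turns $\widehat\phi_{j,j'}(x^*)$ into a double sum: an outer sum over $t\in\mathcal{T}$ and an inner sum over $S\subseteq\{1,\ldots,p\}\setminus\{j,j'\}$ of the weighted four-term alternating product $\big(R_j(x_j^*)-R_j(t_j)\big)\big(R_{j'}(x_{j'}^*)-R_{j'}(t_{j'})\big)\cdot\prod_{k\in S}R_k(x_k^*)\cdot\prod_{k\notin S\cup\{j,j'\}}R_k(t_k)$, each with weight $\frac{1}{(p-1)\binom{p-2}{|S|}}$.

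The second step is to recognize which $t$ contribute. A summand vanishes unless, for every $k\in S$, $R_k(x_k^*)=1$, and for every $k\notin S\cup\{j,j'\}$, $R_k(t_k)=1$. Hence for a given $t$ the only sets $S$ that survive are those satisfying $\{k:R_k(x_k^*)=0,\ k\neq j,j'\}\subseteq S\subseteq\{k:R_k(t_k)=1,\ k\neq j,j'\}$; in particular such an $S$ exists only if $R_k(x_k^*)=1\vee R_k(t_k)=1$ for all $k$, which is exactly the restriction on $t$ in the statement. For such $t$, writing $m=|\{k\neq j,j':R_k(x_k^*)=1,R_k(t_k)=1\}|$ (the number of "free" coordinates), the admissible $S$ are obtained by adjoining to the forced set an arbitrary subset of these $m$ free indices, and the four-term alternating product is the same constant $R_j(x_j^*)R_{j'}(x_{j'}^*)-R_j(t_j)R_{j'}(x_{j'}^*)-R_j(x_j^*)R_{j'}(t_{j'})+R_j(t_j)R_{j'}(t_{j'})$ for every admissible $S$.

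The third step is the combinatorial collapse of the inner sum: with $|S|=c+\ell$ where $c$ is the (fixed) number of forced coordinates and $\ell\in\{0,\ldots,m\}$ the number of chosen free coordinates, the inner sum becomes that constant times $\sum_{\ell=0}^{m}\binom{m}{\ell}\frac{1}{(p-1)\binom{p-2}{c+\ell}}$. I would evaluate this using the Beta-integral identity $\frac{1}{\binom{N}{K}}=(N+1)\int_0^1 u^K(1-u)^{N-K}\,du$ (the same device underlying Theorem~\ref{thm:ourFormula}), which replaces $\binom{m}{\ell}/\binom{p-2}{c+\ell}$ by an integrand that sums over $\ell$ via the binomial theorem to a single power, leaving a Beta integral that re-expresses as one reciprocal binomial coefficient. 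Bookkeeping the indices — $c$, $m$, and the adjustments $+R_j(x_j^*)$, $+R_{j'}(x_{j'}^*)$, $+R_j(t_j)$, $+R_{j'}(t_{j'})$ that appear because $j,j'$ are excluded from $S$ but do or do not satisfy their own conditions — should reproduce exactly the denominator $\binom{2p-q(x^*)-q(t)+R_{j'}(x^*_{j'})+R_j(x^*_j)+R_{j'}(t_{j'})+R_j(t_j)-3}{p-q(x^*)+R_{j'}(x^*_{j'})+R_j(x^*_j)}$ and the prefactor $\frac{1}{n(p-1-q(x^*)+R_j(x^*_j)+R_{j'}(x^*_{j'}))}$, after observing that $c=p-2-m-(\text{number of free-but-}x^*\text{-only or }t\text{-only coords among the }k\neq j,j')$ and $q(x^*)$, $q(t)$ count the satisfied conditions including those at $j,j'$. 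Unbiasedness is then immediate since each $\mathbb{E}[\prod_{k\notin S}R_k(x_k)]$ was estimated by an unbiased sample mean and the map from these means to $\widehat\phi_{j,j'}$ is linear.

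The main obstacle I anticipate is the index bookkeeping in the third step: keeping straight the four cases for each of $j$ and $j'$ (condition satisfied by $x^*$ only, by $t$ only, by both, by neither) and how each shifts the top and bottom of the resulting binomial coefficient, so that the telescoping identity lands on the precise denominator written in the theorem. This is conceptually routine — it is the two-feature analogue of the computation behind Theorem~\ref{thm:ourFormula} — but the algebra is delicate, and the cleanest route is probably to first prove an auxiliary lemma handling a generic "partial" rule (ignoring all coordinates $k$ with $R_k\equiv 1$ on the whole line, cf. Lemma~\ref{lem:uselessfeatures}) and then specialize.
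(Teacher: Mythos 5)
Your plan follows essentially the same route as the paper's own proof (which is given in the Supplementary Material as a more general theorem with an arbitrary per-term factor $\varphi(x_\ell)$ and then specialized to $\varphi\equiv\hat a$ in Corollary~\ref{cor:ourFormulaIntCor}): factor each interventional expectation through the product structure of $r$, replace it by its sample mean over $\mathcal{T}$, swap the sum over $S$ with the sum over $t$, observe that for a fixed $t$ the surviving subsets are exactly those squeezed between the forced set $\{k\neq j,j':R_k(t_k)=0\}$ and $\{k\neq j,j':R_k(x^*_k)=1\}$, and collapse the resulting one-parameter sum of reciprocal binomial coefficients. The one substantive difference is the collapsing device: the paper proves and applies a combinatorial identity (Lemma~\ref{lem:binomialsum}, a Vandermonde-type sum established by a counting argument), whereas you invoke the Beta-integral representation $1/\binom{N}{K}=(N+1)\int_0^1u^K(1-u)^{N-K}\,du$ plus the binomial theorem. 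Contrary to what you write, that is not the device used for Theorem~\ref{thm:ourFormula}, but it does evaluate $\sum_{\ell=0}^m\binom{m}{\ell}\big/\binom{p-2}{c+\ell}$ to the same closed form, so the substitution is harmless and arguably cleaner and self-contained. Working directly with the constant coefficient $\hat a$ also lets you bypass the paper's case split ($j'=\ell$ versus $j'\neq\ell$), which is only needed for the more general $\varphi$; the auxiliary reduction via Lemma~\ref{lem:uselessfeatures} you mention is not needed. One small point to make explicit, as the paper does at the end of its proof: non-vanishing of the inner sum only forces $R_k(t_k)=1\vee R_k(x^*_k)=1$ for $k\neq j,j'$; for $k\in\{j,j'\}$ it is the four-term contrast itself that vanishes, which is what justifies restricting the outer sum exactly as in the statement.

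A caution on the final bookkeeping, which you defer: with $c=p-2-q^t_{jj'}$ forced coordinates and $m=q^*_{jj'}+q^t_{jj'}-p+2$ free ones, where $q^*_{jj'}=q(x^*)-R_j(x^*_j)-R_{j'}(x^*_{j'})$ and $q^t_{jj'}=q(t)-R_j(t_j)-R_{j'}(t_{j'})$, your Beta integral gives the inner sum $\frac{c!\,(p-2-c-m)!}{(p-1-m)!}=\frac{(p-2-q^t_{jj'})!\,(p-2-q^*_{jj'})!}{(2p-3-q^*_{jj'}-q^t_{jj'})!}=\frac{1}{(p-1-q^*_{jj'})}\binom{2p-q^*_{jj'}-q^t_{jj'}-3}{p-1-q^*_{jj'}}^{-1}$, in agreement with the paper's intermediate quantity $\mathcal{E}$. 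The prefactor $\frac{1}{n(p-1-q(x^*)+R_j(x^*_j)+R_{j'}(x^*_{j'}))}$ is thus reproduced, but the lower index of the binomial comes out as $p-1-q(x^*)+R_j(x^*_j)+R_{j'}(x^*_{j'})$ (equivalently, by symmetry, $p-2-q(t)+R_j(t_j)+R_{j'}(t_{j'})$), one less than in the displayed statement; a sanity check at $p=2$ or $p=3$ (where the displayed lower index exceeds the upper one) confirms the smaller index is the correct one. So do not expect your derivation to land literally on the printed denominator — the discrepancy is an off-by-one in the display, not a flaw in your argument.
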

\begin{proof}
See Corollary~\ref{cor:ourFormulaIntCor} in the Supplementary Material.
\end{proof}

Theorems~\ref{thm:ourFormula}~and~\ref{thm:ourFormulaInter} write Shapley values as sample means over the dataset, which allows us to asses the instability of the Shapley value estimation itself if sample sizes are very small. However, since this instability vanishes with the sample sizes required for a RuleSHAP model, we consider this instability to be negligible for our setting. Furthermore, this formulation can be convenient for contexts where re-weighting of the observations may be needed, such as with stratified sampling.\par

\section{Risk and protective factors in Cardiovascular Health}
\label{sec:HELIUS}
We fit RuleSHAP to data from the large-scale epidemiological HELIUS study \citep{snijder2017cohort}, which collects information about physical and mental health across different ethnic groups in the Netherlands. The very large sample size of the HELIUS study ($n = 21'570$) allows us to further investigate the quality of our model for different sample sizes. Following \citep{van2024linked}, we focus on two different outcomes: systolic blood pressure and cholesterol level, both standardized to have unit variance. These are predicted from BMI, socio-demographic features (age, ethnicity, gender) and habits-related features (smoker Y/N, yearly cigarettes consumption, coffee drinker Y/N). As a form of negative control, four standard normally distributed noise features are added to the datasets.\par

In order to fit the RuleSHAP model, categorical features are dummy-coded. In the case of ethnicity, which presents five categories, we follow the same logic as contrast coding and code each dummy variable with the values -0.5,2. By doing so, we ensure that the dummy variables are standardized in the setting where the five categories are balanced. First, we assess the validity of our model across different settings: for each of the two outcomes, we fit different models (Ordinary Least Squares, LASSO regression, RuleSHAP, RuleFit, HorseRule, Random Forest) on subsamples of different sizes ($n=100,300,500,1'000,3'000,5'000$) and record their test predictive performance. More specifically, 15'000 observations are retained for training, and the remaining 6'570 observations are used for testing. For sample sizes $n=100,300,500,1'000,3'000$, five non-overlapping datasets are drawn from the 15'000 training observations, without replacement. For the sample size of $n=5'000$, where 5 non-overlapping datasets would require 25'000 training observations, we allow the datasets to overlap with each other, while still ensuring that no observation appears multiple times in the same dataset. To conclude, we perform a single fit on $n=10'000$ of the training observations and discuss its significant estimated effects.\par

Predictive performance is presented in Figure~\ref{fig:HeliusMSEs} for a representative subset of sample sizes and is fully shown in Figures~\ref{fig:HeliusMSEchol}~and~\ref{fig:HeliusMSEsbp} of the Supplementary Material. For the prediction of systolic blood pressure, which is already quite accurately described by a linear model, RuleSHAP performs similarly to a linear fit as it relies mostly on linear terms. For the prediction of cholesterol levels, on the other hand, a simple linear fit is too rigid, and this is reflected in a stronger predictive discrepancy between OLS and RuleSHAP.\par

\begin{figure}[h!]
    \centering
    \includegraphics[width=\linewidth]{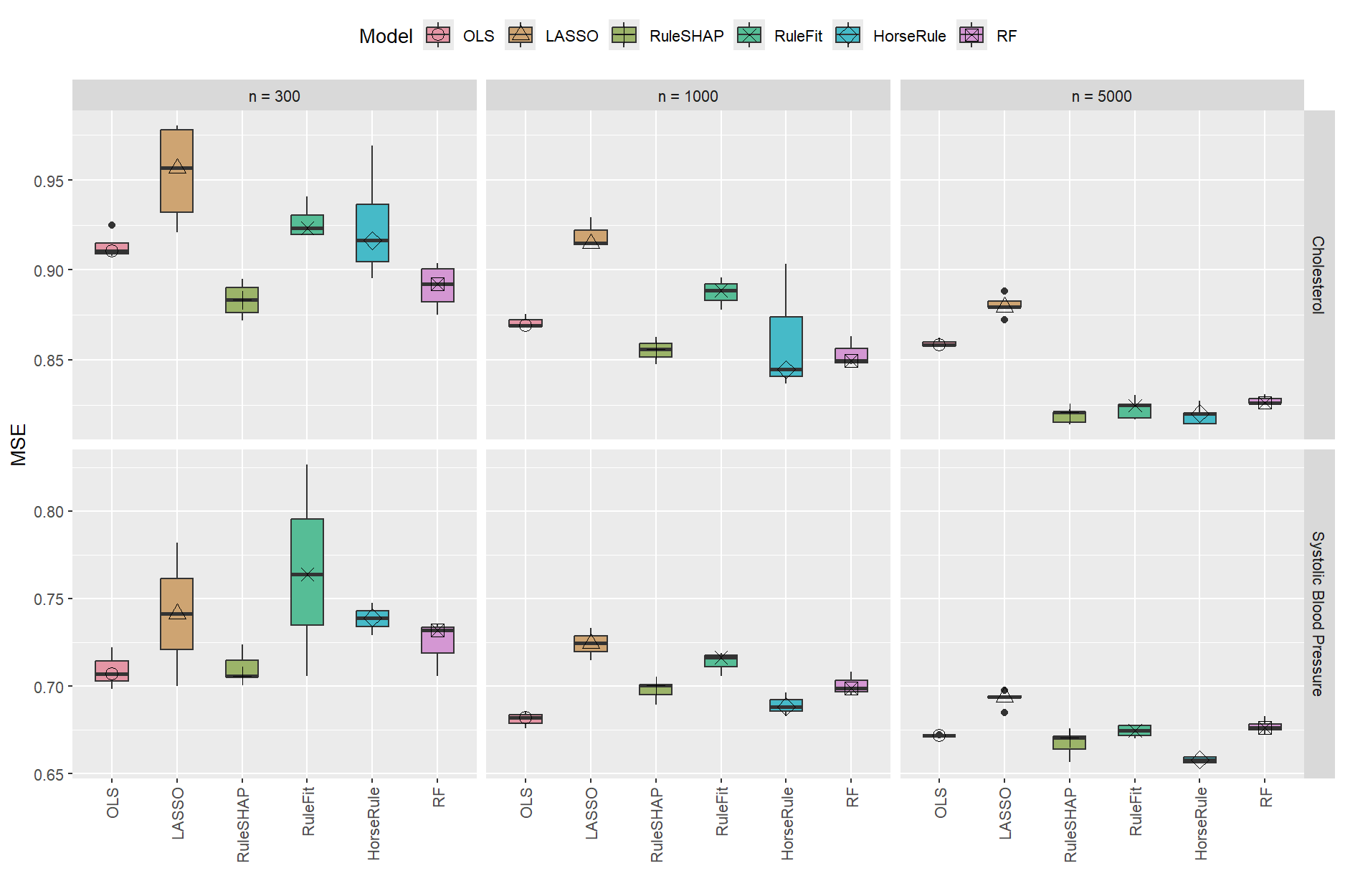}
    \caption{Test MSE across five fits of OLS linear regression, LASSO regression, RuleFit regression, HorseRule regression, RuleSHAP regression and Random Forest. These models are fitted on subsamples of different sizes $n$ taken from the HELIUS study data, with predicted outcome of cholesterol (top row) and systolic blood pressure (bottom row). Both outcomes have unit variance.}
    \label{fig:HeliusMSEs}
\end{figure}

Figures~\ref{fig:ShapleysSBP}~and~\ref{fig:ShapleysChol} show the Shapley values from the RuleSHAP fit on $n=10'000$ observations, for both outcomes. For both outcomes, all noise features obtained near zero, non-significant Shapley values. Significant marginal effects are observed for all features except being a smoker or coffee drinker. We see heavy wiggliness in the marginal effects of some features. This does not denote instability, but rather the presence of meaningful interactions. For example, the effect of a fixed value of age strongly oscillates across different ethnicities. The heatmaps in Figures~\ref{fig:HeatmapSBP}~and~\ref{fig:HeatmapChol} in the Supplementary Material use interaction Shapley values to produce a quick, global insight into the significant interaction trends estimated by RuleSHAP for both outcomes. The figures show interaction effects between age and other features (ethnicity and sex for prediction of systolic blood pressure; glucose, BMI, ethnicity and sex for prediction of cholesterol). Using marginal Shapley values, these can be further explored. Figure~\ref{fig:SexAgeInter}, for instance, specifically focuses on the nonlinear interaction between age and sex that is detected when predicting cholesterol level, showing that the difference between sexes in the effect of age on cholesterol is largest above 52 years of age. Figure~\ref{fig:EthnAgeInter} in the Supplementary Material explores the interaction between age and ethnicity for the prediction of systolic blood pressure.

\begin{figure}[h!]
    \centering
    \includegraphics[width=\linewidth]{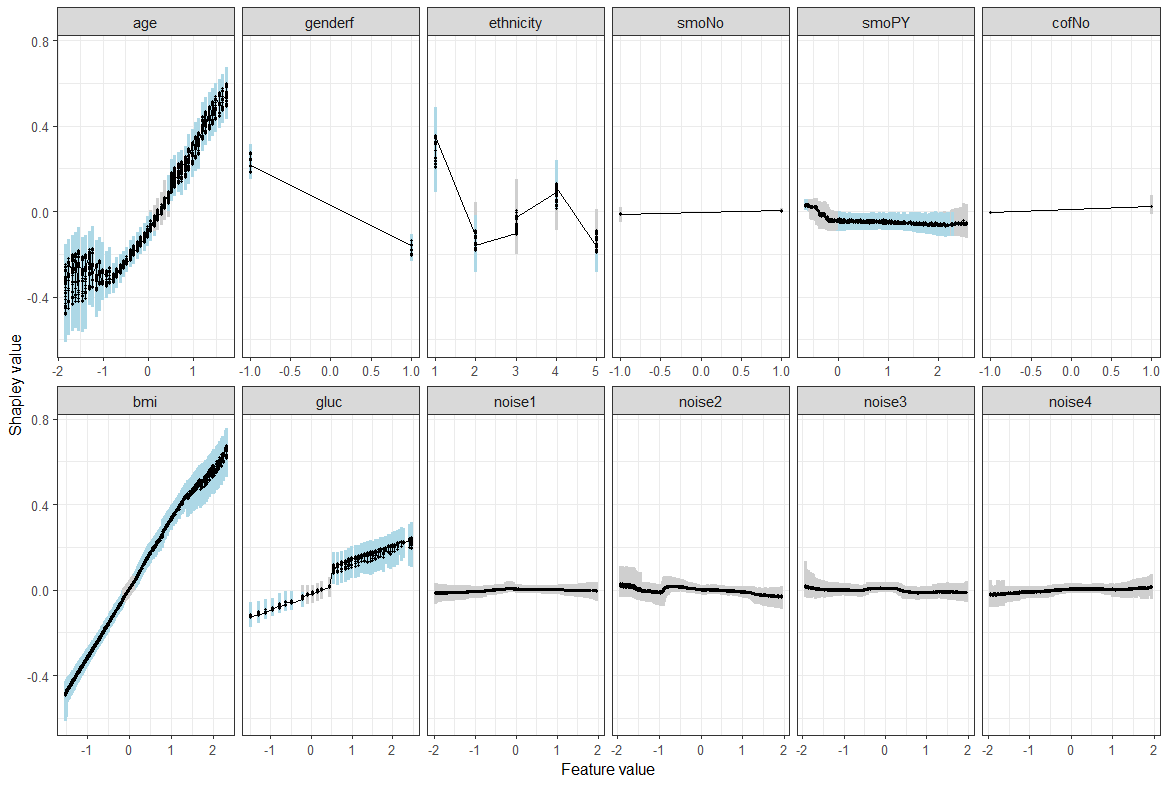}
    \caption{Shapley values computed from a RuleSHAP model fitted on $n=10'000$ observations from the HELIUS study, to predict systolic blood pressure. Point-wise 95\% credible intervals are shown and color-coded based on whether they contain zero (grey) or not (light blue).}
    \label{fig:ShapleysSBP}
\end{figure}

\begin{figure}[h!]
    \centering
    \includegraphics[width=\linewidth]{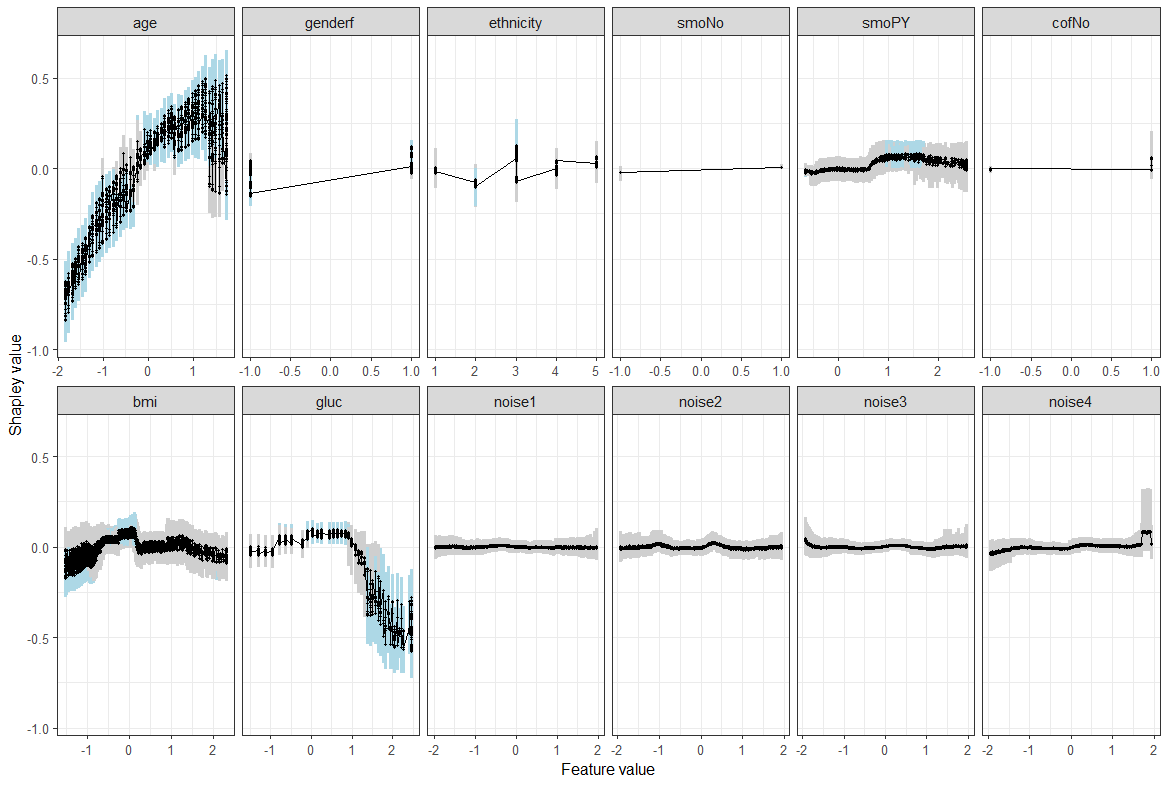}
    \caption{Shapley values computed from a RuleSHAP model fitted on $n=10'000$ observations from the HELIUS study, to predict cholesterol. Point-wise 95\% credible intervals are shown and color-coded based on whether they contain zero (grey) or not (light blue).}
    \label{fig:ShapleysChol}
\end{figure}

\begin{figure}[h!]
    \centering
    \includegraphics[width=\linewidth]{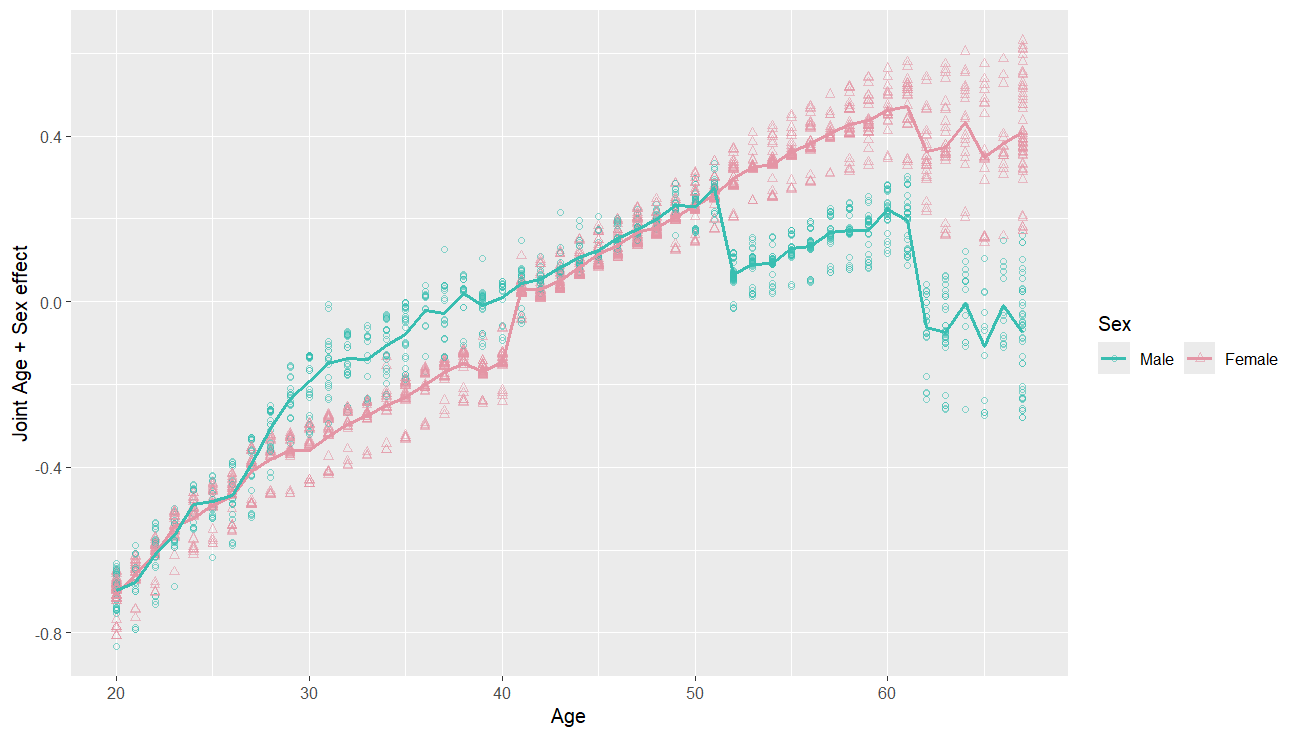}
    \caption{RuleSHAP's marginal Shapley values fitted on $n=10'000$ observations from the HELIUS study, to predict cholesterol, stratified by sex. The marginal Shapley values of sex and age are summed up together to visualize their joint effect. Lines show the overall trend of the mean joint contribution. The individual joint contributions, which oscillate between observations due to the other interaction effects of age, are represented by dots and triangles.}
    \label{fig:SexAgeInter}
\end{figure}

\section{Empirical evaluation}
\label{sec:EmpiricalEval}
In this section we describe the experiments that we conduct on simulated datasets to confirm the validity of the results observed in the previous section. Our aims are twofold: first, we compare RuleSHAP to the analogous approaches RuleFit, HorseRule and Random Forest. This part focuses on signal reconstruction and shows the limitations motivating us to develop RuleSHAP. Second, we compare RuleSHAP with BART and HorseRule, since these may also be combined with Shapley values to provide uncertainty estimates for local importance measures. This part focuses on the task of distinguishing signal from noise, as this is a typical use of uncertainty quantification in research.

\subsection{Evaluation of feature effect reconstruction}
\subsubsection{Method}
We first set up a simulation study to compare RuleSHAP with other methods typically used for prediction and interpretation purposes: OLS regression, Lasso regression \cite[from R package \textit{glmnet};][]{hastie2021introduction}, RuleFit \cite[from R package \textit{pre};][]{fokkema2017fitting}, HorseRule \cite[from R package \textit{horserule};][]{Villani2017HorseRuleR} and Random Forest \cite[from R package \textit{randomForest};][]{RFpackage}. The comparison is carried out in terms of predictive performance, recovery of individual effects and ability to prioritize linear terms over (many) rules. The methods are tested on the following benchmark data-generating procedure, initially proposed by \cite{friedman1991multivariate}:
\begin{align}
    \begin{aligned}
    \label{eq:FriedmanGenFunct}
    x_1,\ldots,x_{5} &\sim \mathcal{U}(0,1),\\
    \epsilon &\sim \mathcal{N}(0,\sigma^2),\\
    y&=10\sin(\pi x_1x_2)+20(x_3-0.5)^2+10x_4+5x_5+\epsilon.
\end{aligned}
\end{align}
The feature set is extended with noise features also drawn uniformly at random between 0 and 1. A correlation of 0.3 is ensured across all features. By default, the error variance in the Friedman procedure is set to $\sigma^2=1$, which produces an irreducible error of less than 5\% of the total variance. We instead choose $\sigma^2=100$, which produces about 80\% of irreducible error, for a more realistic setting.\par

Similarly to the experiments conducted in Section~\ref{sec:HELIUS}, we vary sample size $n=100,300,500,1'000,3'000,5'000$ and dimensionality $p=10,30$. For each combination of $n$ and $p$, the fitting is repeated on five simulated training datasets and predictive performance is assessed on a test dataset of size 10'000, common across all replicates. The experiment is analogously repeated for the case of logistic regression, where the outcome is defined by:
\begin{align}
\begin{aligned}
    \eta&=10\sin(\pi x_1x_2)+20(x_3-0.5)^2+10x_4+5x_5,\\
p&=\frac{\exp(\eta-\overline{\eta})}{\exp(\eta-\overline{\eta})+1},\\
y &\sim \mathcal{B}\textbf{e}(p),
\end{aligned}
\end{align}
with $\overline{\eta} \approx 14.4$ computed as the approximate mean of $\eta$ under this data-generation scheme, in the case of independent features.\par

A Bayesian regression typically requires a check on the convergence of the estimates. However, as this simulation is repeated for many settings in multiple iterations, we simply use a very large number of MCMC samples to ensure convergence without the need for manual checks (22'000 samples drawn per model, out of which 2'000 were burn-in). The results are coherent between replicates, which we interpret as a sign that the chains have indeed converged.

\subsubsection{Results: Estimation and prediction}
We first focus on the recovery of linear signal. 

Boxplots showing the estimated linear coefficients of RuleSHAP and the other methods are depicted in Figures~\ref{fig:FriedCoeffsp10} through~\ref{fig:LogiCoeffsp30} in the Supplementary Material. As expected, all models are able to correctly identify $x_6$ as noise, but RuleFit completely penalizes the linear terms out of the model. HorseRule does not induce a satisfactory improvement over RuleFit in this sense. RuleSHAP, on the other hand, is very likely to correctly identify $x_4$ and $x_5$ as linear terms.\par

Next, we focus on how well RuleSHAP reconstructs the effect of each feature. Figure~\ref{fig:exBARTRS} shows an example of how Shapley values may be plotted for such a purpose. We focus on the continuous outcome, where predictions of Random Forest, HorseRule and RuleSHAP are all on the same scale. Shapley values are computed for each of these models, to evaluate the quality of the feature effect reconstruction. For each replicate, the average squared distance between the true local effect and the estimated local effect across all datapoints is computed. To compensate for the fact that each feature's effect has a different scale, the squared distance of each signal feature is rescaled by the variance of that effect. The quantity thus indicates the \enquote{fraction of unexplained effect variance}. For example, the mean squared distance for the linear feature $x_4$ is defined as $\frac{1}{n}\sum_{i=1}^n\big(\phi_4(x^{(i)})-10 \cdot (x^{(i)}_4-0.5)\big)^2$, where $\phi_4(x^{(i)})$ is the Shapley value of $x_4$ computed for observation $x^{(i)}$, and 10 and 0.5 are the true linear effect and mean of $x_4$. To rescale, we divide by $\frac{1}{n}\sum_{i=1}^n\big(10 \cdot (x^{(i)}_4-0.5)\big)^2$. \par

\begin{figure}[h!]
    \centering
    \includegraphics[width=\linewidth]{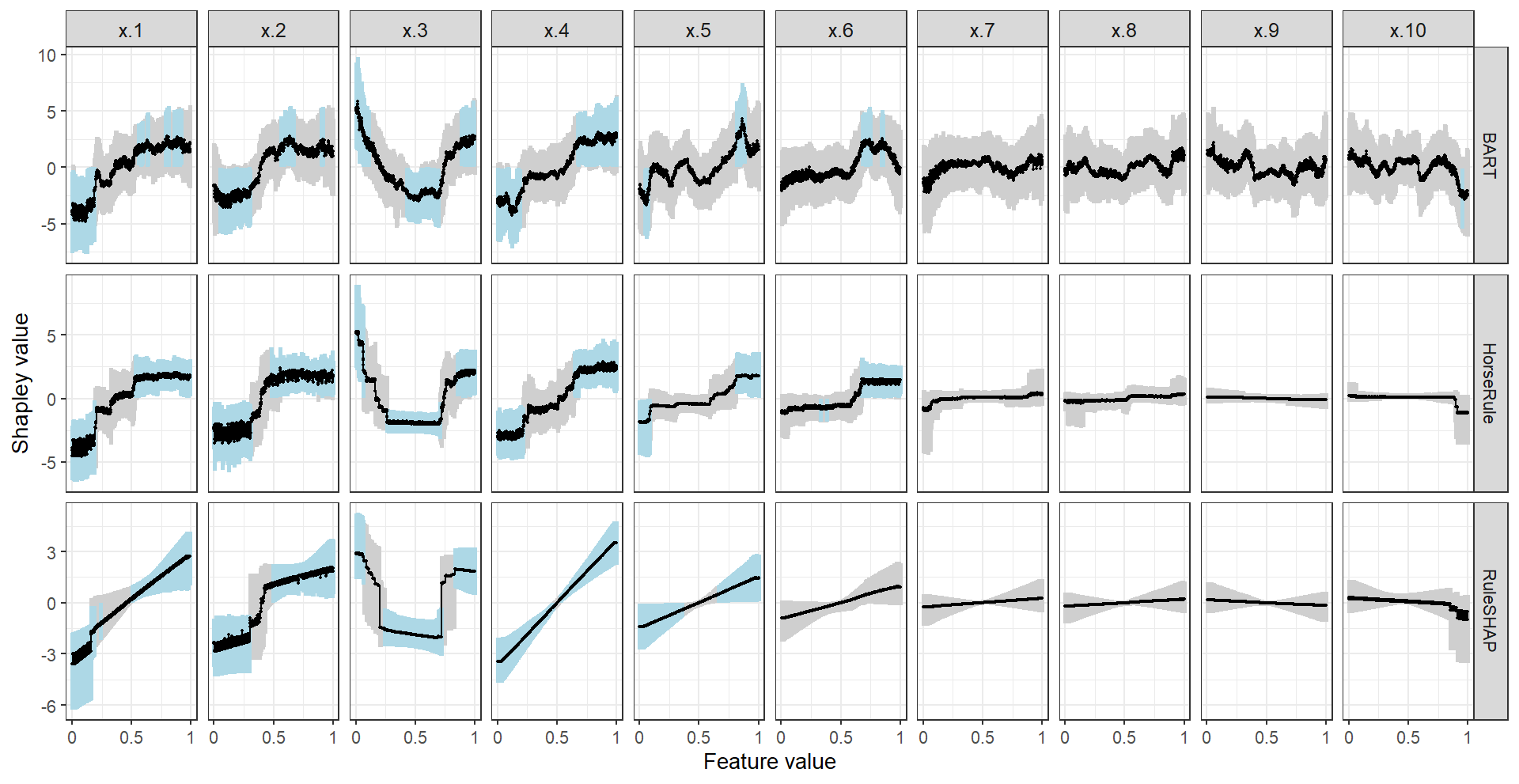}
    \caption{Example of Shapley values estimated from BART (first row), HorseRule (second row) and RuleSHAP (last row) fitted on the same $n=1'000$ Friedman-generated observations. Point-wise 95\% credible intervals are color-coded based on whether they contain zero (grey) or not (light blue).}
    \label{fig:exBARTRS}
\end{figure}

The boxplots in Figure~\ref{fig:LocImpDistsPreview} show the rescaled mean squared distances across replicates, for $p=10$ features and the three largest sample sizes. Full results for for $p=10$ and $p=30$ features are shown in the Supplementary Material in Figures~\ref{fig:LocImpDistsp10}~and~Figure~\ref{fig:LocImpDistsp30}, respectively. From these visualizations we observe that signal recovery for all methods improves with more observations. For low sample sizes, RuleSHAP performs better than all other models at identifying noise and purely linear effects. It performs comparable to the other methods in recovering the interaction but does not detect the purely nonlinear effect of feature $x_3$. This reflects an important property of RuleSHAP: the effects of $x_1,x_2,x_4,x_5$ are (partially) linear and reconstructed well across all sample sizes. The effect of feature $x_3$, on the other hand, can only be reconstructed by rules, and low sample sizes do not give enough certainty in the rule generation scheme for $x_3$ to emerge. HorseRule and Random Forest, on the other hand, employ a more uncertainty-insensitive rule-generation scheme, providing better recovery of the nonlinear effect of $x_3$ but also producing spurious effects of the noise features. At larger sample sizes, RuleSHAP's estimates of the feature effect are comparable to or better than the other methods', including for $x_3$ and especially for $p=30$. Random Forests remain unsatisfactory at effect reconstruction, even for settings with large sample size. The greedier nature of HorseRule and Random Forest imply a worse estimation of the (null) effect of noise, which in turn negatively affects the discrimination between noise and signal features, as we show with our next experiment.\par

\begin{figure}[h!]
    \centering
    \includegraphics[width=\linewidth]{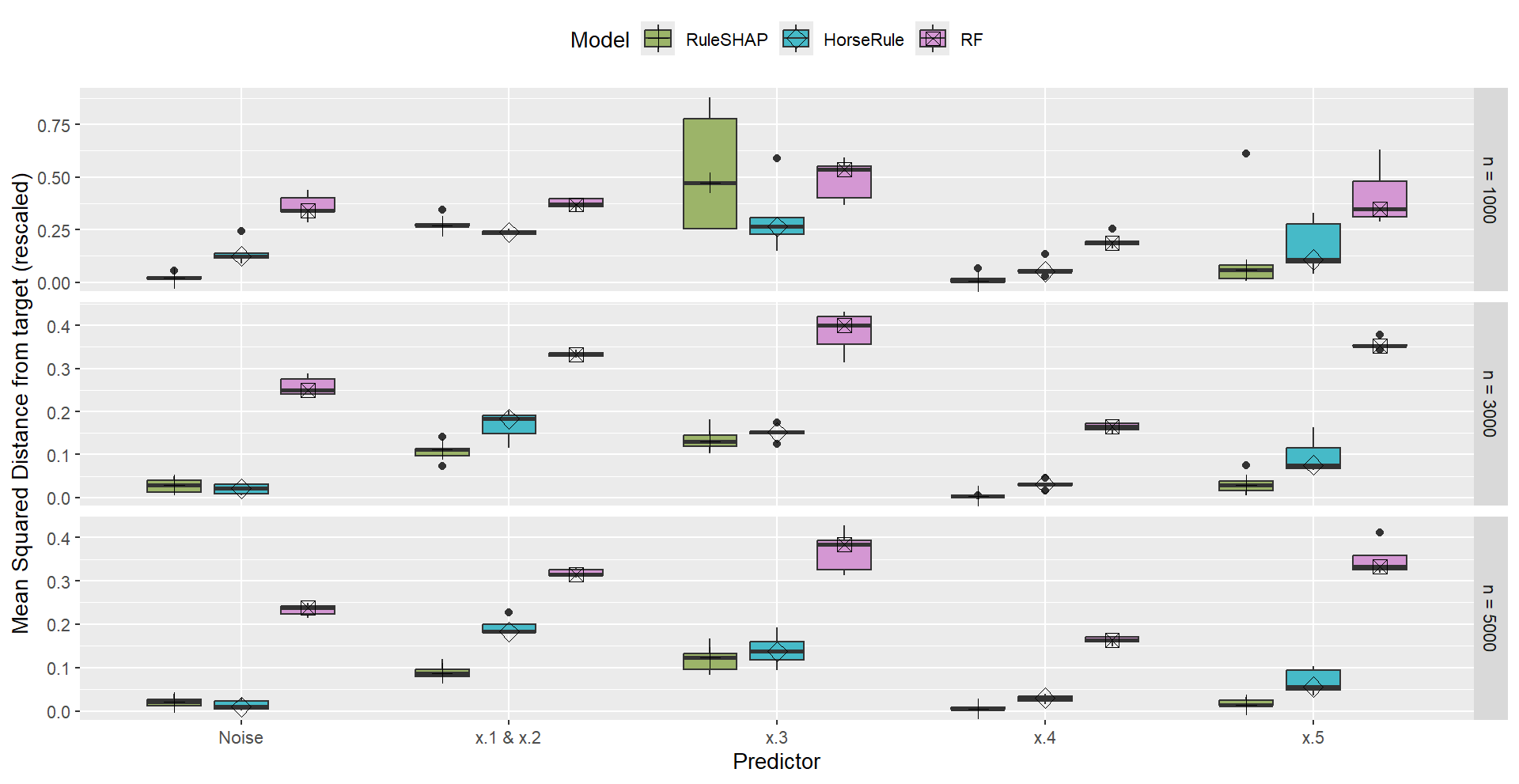}
    \caption{Rescaled mean squared distance between feature effects estimated by the model and the target effects for five replicates of different models fitted on Friedman-generated data, for $p=10$ features and some of the different sample sizes $n$. Full figure is shown in the Supplementary Material as Figure~\ref{fig:LocImpDistsp10}.}
    \label{fig:LocImpDistsPreview}
\end{figure}

In terms of predictive performance, RuleSHAP remains strongly competitive for all sample sizes. The test MSEs for the continuous outcome (Figure~\ref{fig:FriedMSE}) and the test area under the ROC curve for the binary outcome (Figure~\ref{fig:FriedAUC}) are shown in the Supplementary Material.

\subsection{Evaluation of uncertainty quantification}
\subsubsection{Method}
To evaluate uncertainty quantification, we compare RuleSHAP to BART and HorseRule, since all three may be combined with Shapley values to provide uncertainty estimates for local feature effects. In this experiment, we fit all three methods to datasets with three different $n/p$ ratios. More specifically, we use the Friedman generating function to generate $n=500,1'000,3'000$ observations of $p=10$ features. The error variance and correlation across features are fixed to $\sigma^2=100$ and $\rho=.3$ respectively, as in the previous experiments. BART, HorseRule and RuleSHAP are fitted on the data and Shapley values with 95\% credible intervals are computed for all three. Shapley values are computed using the \textit{treeshap} package \citep{treeSHAPpackage} for BART models, and using the expressions derived in Subsection \ref{subsec:DirectShapleys} for the RuleSHAP and HorseRule models. The experiment is replicated 100 times for each value of $n$.\par

We evaluate the credible intervals at the local level for both noise and signal features. For all features, we estimate the proportion of Shapley values with 95\% credible interval not containing zero. These proportions of `significant' values are averaged across noise features and signal features separately, rendering rejection rates, which are analyzed across replicates of the experiment. When estimating rejection rates, we further discriminate between small significant local effects (lowest 10\% in absolute value) and large significant local effects (anything else). These rates should be low for noise features and relatively high for the signal features, as the latter induce a true non-zero effect on all observations by design.

\subsubsection{Results: Local inference}

The density plots in Figure~\ref{fig:LocInfDens} show the estimated rejection rates, with average rejection rates indicated by a dot. We observe that the rejection rates of noise and signal features are the closest to each other for BART models. Although both RuleFit-based models better distinguish between signal and noise, RuleSHAP shows the lowest rejection rates for noise features and the highest ones for signal features.\par

\begin{figure}[h!]
    \centering
    \includegraphics[width=\linewidth]{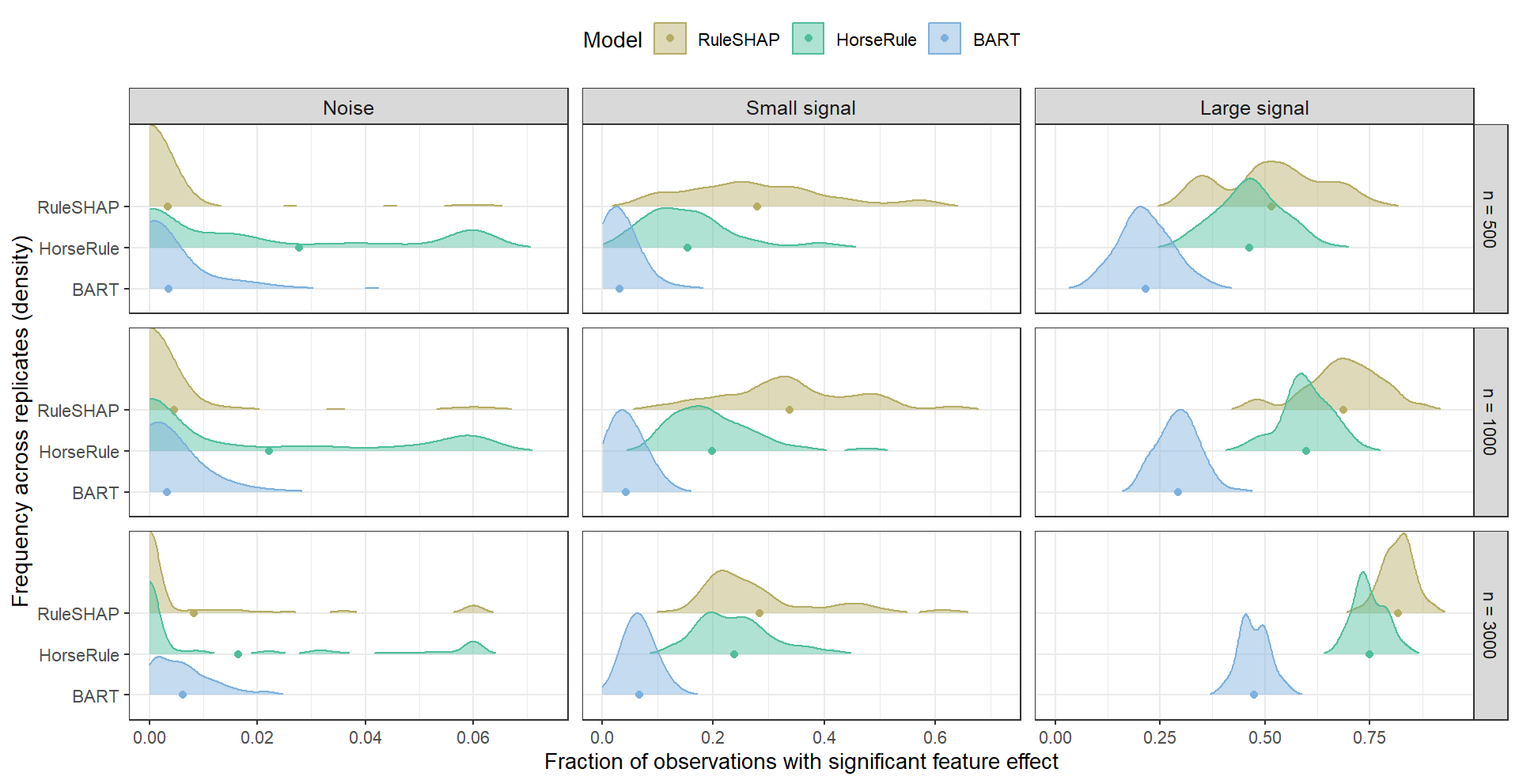}
    \caption{Density plot for rejection rates: the average proportion of significant Shapley values. Averages are computed separately for null effects from noise features (left column), small effects from signal features (middle column) and large effects from signal features (right column). The plot shows the variability across the 100 replicates of the experiment and compares BART, HorseRule and RuleSHAP. For noise features, estimated rejection rates are capped at .06.}
    \label{fig:LocInfDens}
\end{figure}

To illustrate the reason behind the good discriminative properties of RuleSHAP, Figure~\ref{fig:exBARTRS} shows the Shapley values produced by one of the BART, HorseRule and RuleSHAP fits. BART's (and to a lesser extent HorseRule's) feature effect estimations are less stable and underestimate uncertainty, yielding more frequently significant local effects for noise features. The RuleSHAP effect estimates appear less erratic and none of the Shapley values for noise features obtained significant credible intervals. This is likely due to RuleSHAP's more stable rule generation. \par

\section{Discussion}
\label{sec:Discussion}
We introduced RuleSHAP, which combines Bayesian sparse regression with tree ensembles and Shapley values to provide local inference for feature importance measures. We developed a rule-generation procedure that counters overfitting as well as overoptimistic uncertainty quantification. We developed a structured horseshoe prior that separates shrinkage of rule and linear effects. Finally, we derived an explicit formula for efficient computation of Shapley values. The results on simulated data indicate that RuleSHAP provides competitive predictive accuracy and superior local inference. More specifically, RuleSHAP strikes a better balance between linear terms and rules than existing methods, especially for larger sample sizes. This sets RuleSHAP apart from its predecessors, since both RuleFit and HorseRule seem mostly unable to accurately recognize linear signal, even when the underlying generating function is purely linear. This pattern confirms the findings by \cite{nalenz2018tree}, where the presence of linear terms in the HorseRule model was not relevant for predictive performance. \par

Further, RuleSHAP provides reliable detection and inference of beyond-linear trends and the latter can be performed on a local level. Many other ML alternatives are also typically considered for this context, but to our knowledge none of them can produce satisfactory results at the level of local effects. Generalized Additive Models (GAMs), for instance, can detect nonlinear effects and perform inference with them. However, such inference is only valid on a global level, since credible intervals for local effects are available but still suboptimal \citep{marra2012GAMcoverage}. Most importantly, GAMs (like GLMs) require manual specification of any interaction terms. These two elements impede discovery and reliable statistical inference for novel local effects. Tree ensembles, on the other hand, capture both nonlinearity and interaction effects without prior specification. In the context of Random Forests, they even include importance measures with uncertainty quantification \citep{ishwaran2019standard}. These measures, however, are also limited to a global setting and combine the marginal and interaction effects into one single feature importance. This makes any interpretation of what is being tested far from straightforward. The coverage of these intervals also requires further study, as the Out-Of-Bag estimation that they rely upon is similar to a nesting of bootstrapping and cross-validation. The bias of cross-validation can be non-negligible \citep{bates2024cross} and this may be propagated into the uncertainty quantification. Our simulations (Example~\ref{ex:TwoStepbad} in the Supplementary Material) showed very high type I error rates of these intervals.\par

\subsection{Results on cardiovascular health}
When applied to the data from the HELIUS study, RuleSHAP uncovers several significant trends related to systolic blood pressure (SBP). Our model suggests a significant, positive association between SBP and age. This association is well documented: for instance, \cite{sbp_age1} systematically found older participants to have higher SBP across different types of hypertension, while \cite{sbp_age2} recorded higher hypertention rates as age increased. Through capturing interaction terms, our model also reveals this assocation to be heterogeneous between ethnicities and between sexes, which has also been observed before \citep{sbp_age_ethn,sbp_age_sex}. RuleSHAP's significant, positive association between SBP and BMI is also well established in the literature \citep{sbp_bmi1,sbp_bmi2}, and described by \cite{sbp_bmi3} as \enquote{nearly linear}, matching the association shape depicted in Figure~\ref{fig:ShapleysSBP}. Lastly, RuleSHAP identifies a positive, significant association between SBP and glucose level, which for instance is also observed by \citep{sbp_gluc1} in middle-aged men and by \cite{sbp_gluc2} in both men and women. RuleSHAP estimates smoking to have a paradoxically negative association with SBP. These results have also been observed before \citep{sbp_smoke1,sbp_smoke_a}. \cite{sbp_smoke2} suggest that this kind of paradox might be due to systematic error in the measurement of SBP that is specific to smokers. Fitting a linear model on the same training data as RuleSHAP indeed produces the same significant, negative association, as shown in Table~\ref{tab:sbpLM} of the Supplementary Material.\par

The RuleSHAP model fitted to predict total cholesterol (TC) levels detects a positive association with smoking, a negative association with glucose level and a
complex association with age. The positive association between TC and smoking habits is well documented \citep{chol_smoke1,chol_smoke2,chol_smoke3}. However, the relationship between glucose and TC varies across papers. Previous work by \cite{chol_gluc2}, where TC was used to predict glucose level, specifically found non-linearity in the form of a negative association between the two, but only for low values of TC. This matches the shape we observe in Figure~\ref{fig:ShapleysChol}, albeit with reversed roles of predictor and outcome. However, work such as by \cite{chol_gluc3} suggests the opposite effect, as they estimated significantly higher odds for high TC among those with higher glucose level. A possible explanation for the discrepancy in results is that neither our analysis nor the one by \cite{chol_gluc2} accounted for diagnoses or treatments of diabetes, while the analyses of \cite{chol_gluc3} did. Our results may thus be affected by not accounting for this possible confounder. When a linear model is fitted on the same training data, a negative association is also identified (see Table~\ref{tab:cholLM} in the Supplementary Material). The association between TC and age is positive overall, with a slight decline for older ages. This matches what was for instance reported by \cite{chol_age}. The trend is also strongly hetereogeneous, stratified by BMI, glucose level, ethnicity, sex. In this motivating example, we focused on the strongest interaction. This is the interaction between sex and age, where female participants show notably higher cholesterol levels only for higher values of age, as also found by \cite{chol_age_sex}. The plot in Figure~\ref{fig:SexAgeInter} estimates that the strong discrepancy in the sex-specific age trajectories starts at 52 years of age, which is compatible with the average menopausal age of 50-51 in Western countries \citep{menopauseWest} and the broader average menopausal age range of 45-55 years \citep{menopauseAll}. This agrees with the literature suggesting that women have higher cholesterol levels after menopause \citep{chol_age_sex}.

\subsection{Related methods}
Other approaches than RuleSHAP also aim to quantify the uncertainty of local importance measures computed from an ML method. In the past few years, some of the literature has focused on estimating uncertainty at the level of the importance measures themselves \citep{watson2024explaining,slack2021reliable,williamson2020efficient}. However, this work is either only applicable to population-wide goodness-of-fit measures \citep{williamson2020efficient} or, more often, only focuses on ensuring that the \enquote{true} Shapley values of a fixed, specific model are covered by the confidence/credible intervals. This means that the variance of the model itself is not taken into account, thus underestimating the overall uncertainty of these measures. To overcome this issue and account for uncertainty in the model fit, a Bayesian Additive Regression Tree (BART) has also been used as a way to propagate model uncertainty to the Shapley values \citep{horiguchi2023estimating}. However, due to their complex nature, BART models tend to require more observations in order to work well. Furthermore, the intricacy of the model makes it difficult to specify a prior which can ensure valid coverage and inferential properties: earlier work shows its coverage to be suboptimal \citep{hahn2020bayesian} and to induce false positives, especially with smaller sample sizes \citep{sparapani2021nonparametric}. Valid coverage for spike-and-tree variations of BART have been proven asymptotically \citep{rovckova2020posterior}, but RuleSHAP's horseshoe prior allows for more realistic sample sizes instead \citep{van2017uncertainty}. Our simulations confirm the unsatisfactory uncertainty quantifications of BART at the level of the feature's effect. \par

\subsection{Limitations}
The current main limitation of RuleSHAP is its computational cost: the Bayesian linear fit used to estimate the coefficient requires $O(nP\min(n,P))$, where $P=p+q \approx 10'000$ is the number of terms in a RuleSHAP model. This means that computational time grows quadratically with sample size for $n \lessapprox 10'000$, and linearly thereafter. We relied on the use of a high performance cluster, where the computational time of fitting a single RuleSHAP model spanned from about 40 minutes for $n=300$ observations to short of five days for $n=5'000$ observations. A dataset of $n=1'000$ observations took approximately six hours to analyze. However, all these models comprised 20,000 MCMC samples to ensure convergence of the Markov chain without the need to assess it manually. For most practical settings, a Markov Chain would converge with fewer samples. Breaking or parallelizing the Bayesian regression Markov Chain into multiple smaller chains allows for computation on larger sample sizes. Importantly, computation time is relatively insensitive to $p$, as the number of terms in the model mostly depends on the number of rules that are generated. Nevertheless, as for the other methods, a larger value of $p$ would reduce the power to correctly identify the effect of signal features.\par

A number of common caveats apply for interpretation of a RuleSHAP model: without appropriate modifications, the results cannot be interpreted in a causal sense. Furthermore, the interpretation depends on the data at hand. This includes our analysis of the data from the HELIUS study: for example, self-sampling might also explain the interactions that we observed, since diabetes incidence varies with age and individuals with a diabetes diagnosis might be less inclined to participate to the study. These associations and interactions are thus not necessarily meaningful in an etiological sense and might be specific to the dataset that RuleSHAP is fitted on.\par

\subsection{Outlook}
Overall, these findings suggest that RuleSHAP offers a valid framework for hypothesis-free discovery and local inference. It offers a promising tool for both healthcare research, where inference options are strongly restrictive, and for transparent Machine Learning, where concerns about forms of bias or lack of transparency require optimal interpretability. These fields of application naturally inspire the need for further research and development. A thorough analysis of coverage could further support the use of RuleSHAP for hypothesis testing in scientific research. Global coverage may also be enabled by the use of tolerance intervals \citep{wallis1951tolerance}. Furthermore, the natural decomposition into feature effects offered by Shapley values might make it more accessible to test for the significance of groups of features or interactions. Faster implementations of RuleSHAP could be explored, for instance through mini-batch Gibbs sampling \citep{smolyakov2018adaptive} or with fast approximations of the Horseshoe regression \citep{johndrow2020scalable}. Finally, further research could focus on the direct formula that we produced for the computation of marginal Shapley values: using importance sampling in the estimation of the sample means could extend the results of Theorems~\ref{thm:ourFormula}~and~\ref{thm:ourFormulaInter} to estimate non-marginal Shapley values.\par

\section{Acknowledgements}
\label{sec:Acknowledgements}
We would like to thank Benny Markovitch for suggesting disaggregation of rules to improve separation of main and interaction effects. Marjolein Fokkema and Giorgio Spadaccini are supported by an NWO Vidi grant (number VI.Vidi.231G.065). The Amsterdam University Medical Centres and the Public Health Service of Amsterdam (GGD Amsterdam) provided core financial support for HELIUS. The HELIUS study is also funded by research
grants of the Dutch Heart Foundation (Hartstichting; grant no. 2010T084), the Netherlands Organization for Health Research and Development (ZonMw; grant no. 200500003), the European Integration Fund (EIF; grant no. 2013EIF013) and the European Union (Seventh Framework Programme, FP-7; grant no. 278901). This work was performed using the computational resources from the Academic Leiden Interdisciplinary Cluster Environment (ALICE) provided by Leiden University.

\section{Supplementary code}
\label{sec:Supplements}
The R code used to run the experiments and analyses is available on GitHub at the link \url{https://github.com/GiorgioSpadaccini/ruleSHAP}. The code to fit a RuleSHAP model has been converted into an R package which may be installed, for instance, using the \textit{ install\_github} function from the \textit{remotes} package. More details are available on the webpage linked above. The analyses were run under the R version 4.4.0 (2024-04-24).

\section{Data Availability Statement}
The synthetic data that support the findings of this study may be generated using the R code available in the GitHub repository linked in Section~\ref{sec:Supplements} using the R version 4.4.0 (2024-04-24). The data from the HELIUS study~\citep{snijder2017cohort} are available upon reasonable request to the HELIUS Scientific Coordinator (e-mail address: \href{mailto:heliuscoordinator@amsterdamumc.nl}{heliuscoordinator@amsterdamumc.nl}). Details about availability of the data may be found at \url{https://heliusstudy.nl/en/researchers}. The analysis in this study was approved by the HELIUS Executive Board (approval number: 240717) and deemed to not conflict with ethical approvals and informed consent forms of the HELIUS study.

\bibliographystyle{agsm}
\bibliography{reference}
\addcontentsline{toc}{chapter}{References}

\newpage

\section{Supplementary Material}

\renewcommand{\thefigure}{S\arabic{figure}}
\setcounter{figure}{0}
\renewcommand{\thetable}{S\arabic{table}}
\setcounter{table}{0}
\renewcommand{\thealgorithm}{S\arabic{algorithm}}
\setcounter{algorithm}{0}

\subsection{RuleFit's feature effect measures}
\label{subsec:RuleFitImp}
RuleFit's definitions of feature importance are based on the fact that RuleFit is a linear model. More specifically, the local importance of the $j$-th feature for the prediction of the datapoint $x$ is defined as:
\begin{equation}
    \label{eq:LocImpDef}
    \mathcal{I}_j(x):=|\hat{b}_j|\cdot |x_j-\overline{x}_j|+\sum_{\substack{k \text{ s.t.}\\ x_j \in r_k}}\frac{|\hat{a}_k|\cdot |r_k(x)-\overline{r}_k|}{m_k},
\end{equation}
where $\overline{x}_j$ and $\overline{r}_k$ are the observed means of $x_j$ and $r_k$ respectively, $m_k$ denotes the number of features involved in the $k$-th rule and, with an abuse of notation, $x_j \in r_k$ is meant as \enquote{the $j$-th feature is involved in rule $r_k$}. Concretely, this means that the local effect of the linear term $|\hat{b}_j|\cdot |x_j-\overline{x}_j|$ is added to the local effects of all rules that involve the $j$-th feature, and the effect of a rule $|\hat{a}_k| \cdot |r_k(x)-\overline{r}_k|$ is divided equally across all of the $m_k$ features it involves.\\

An analogous definition is used for the global importance of the $j$-th feature across all points:
\begin{equation}
    \label{eq:GlobImpDef}
    \mathcal{I}_j:=|\hat{b}_j|\cdot sd(x_j)+\sum_{\substack{k \text{ s.t.}\\ x_j \in r_k}}\frac{|\hat{a}_k|\cdot \sqrt{\overline{r}_k\cdot (1-\overline{r}_k)}}{m_k}.
\end{equation}
The same rationale is used here.\\

While these definitions come naturally and allow for easy computation, these measures are based on the intuition that the effects of the rules should be equally split across the features involved, which does not necessarily have to be the case. Furthermore, in Equation~\ref{eq:GlobImpDef}, the importances of the different terms are added up, even though they could possibly correspond to conflicting or complementary effects that partially compensate each other. We expand these considerations further with the following example:

\begin{ex}
\label{ex:FriedImp}
The feature importance measures $\mathcal{I}_j(x)$ and $\mathcal{I}_j$ from Equations~\ref{eq:LocImpDef}~and~\ref{eq:GlobImpDef} may be misleading:
\end{ex}
For a setting where $X_1,X_2,X_3 \sim \mathcal{N}(0,1)$, consider the two models:
$$F(x_1,x_2,x_3)=x_1+I(x_1 < -1)+I(x_2 < 3 , x_3 \geq 0) = x_1+r_1(x)+r_2(x)$$
$$G(x_1,x_2,x_3)=x_1+I(x_3 \geq 0)=x_1+r_3(x),$$
$$r_1(x):=I(x_1 < -1), \qquad r_2(x):=I(x_2 < 3 , x_3 \geq 0), \qquad r_3(x):=I(x_3 \geq 0).$$

In this context, RuleFit measures the importance of feature $X_1$ to be higher in model $F(x)$ than it is in model $G(x)$, both locally and globally:
$$\mathcal{I}^{(F)}_1=sd(x_1)+\sqrt{\overline{r}_1\cdot (1-\overline{r}_1)} > sd(x_1) = \mathcal{I}^{(G)}_1,$$
$$\mathcal{I}^{(F)}_1(x^*)=|x^*_1-\overline{x}_1|+|r_1(x^*)-\overline{r}_1| > |x^*_1-\overline{x}_1| = \mathcal{I}^{(G)}_1(x^*).$$
For this feature, however, the rule $r_1(x)$ has a dampening role on the effect of the linear term $x_1$, so the importance of $X_1$ should be lower in model $F(x)$ than in model $G(x)$. Although this issue could be addressed by removing the absolute values in the definition of the importance measures, that the concern of unfairly splitting the effect of interaction effects remains. With probability higher than $99.5\%$, rules $r_2(x)$ and $r_3(x)$ coincide, since the condition $X_2 < 3$ is very likely to be true. This means that $X_2$ technically appears in the model $F(x)$, but is essentially not relevant in it, similar to $G(x)$. Furthermore, the importance of $X_3$ in models $F(x)$ and $G(x)$ should be approximately the same. Yet, because RuleFit's feature importance measure splits the effect of an interaction equally between the involved features, the followingconditions are not satisfied either:
$$\mathcal{I}^{(F)}_2=\frac{\sqrt{\overline{r}_2\cdot (1-\overline{r}_2)}}{2} > 0 = \mathcal{I}^{(G)}_2,$$
$$\mathcal{I}^{(F)}_2(x)=\frac{|r_2(x)-\overline{r}_2|}{2} > 0 = \mathcal{I}^{(G)}_1(x),$$
$$\mathcal{I}^{(F)}_3=\frac{\sqrt{\overline{r}_2\cdot (1-\overline{r}_2)}}{2} \approx \frac{\sqrt{\overline{r}_3\cdot (1-\overline{r}_3)}}{2} < \sqrt{\overline{r}_3\cdot (1-\overline{r}_3)} = \mathcal{I}^{(G)}_3,$$
$$\mathcal{I}^{(F)}_3(x)=\frac{|r_2(x)-\overline{r}_2|}{2} \approx \frac{|r_3(x)-\overline{r}_3|}{2} < |r_3(x)-\overline{r}_3| = \mathcal{I}^{(G)}_1(x).$$

\subsection{HorseRule's Bayesian prior}
\label{subsec:HorseRulePrior}
HorseRule is a Bayesian variation of RuleFit where, instead of a LASSO fit, a Bayesian horseshoe regression is used:
$$y \in \mathbb{R}^n, \,\, X_L \in \mathbb{R}^{n \times p}, \,\, X_R \in \mathbb{R}^{n \times q}$$
$$y|X_R,X_L,a,b,\sigma^2 \sim \mathcal{N}(X_R \cdot a+X_L \cdot b,\sigma^2I_n),$$
$$a_k|\lambda,\tau,\sigma^2 \sim \mathcal{N}(0,\lambda_k^2\tau^2\sigma^2),$$
$$b_j|\gamma,\tau,\sigma^2 \sim \mathcal{N}(0,\gamma_j^2\tau^2\sigma^2),$$
$$\lambda_k \sim \mathcal{C}^+(0,A_k),$$
$$\gamma_j \sim \mathcal{C}^+(0,1),$$
$$\tau \sim \mathcal{C}^+(0,1),$$
$$\sigma^2 \sim \sigma^{-2}d\sigma^2,$$
where $X_L$ is the design matrix whose $p$ columns are the linear terms and $X_R$ is the design matrix whose $q$ columns are the rule terms appearing in Equation~\ref{eq:RuleFitLinearEq}. The notation $\mathcal{C}^+(0,A)$ denotes the half-Cauchy distribution of location 0 and scale $A$. The value $A_k$ that determines the shrinkage of the coefficient of a rule $r_k$ is defined as:
\begin{equation}
    \label{eq:NVHorseShoept2}
    A_k=\frac{\big(2\cdot \min(1-\overline{r}_k,\overline{r}_k)\big)^\mu}{(m_k)^\eta},
\end{equation}
where $\overline{r}_k$ is the observed frequency/mean of the rule and $m_k$ denotes the number of features involved in the definition of the $k$-th rule. The hyperparameters $\mu,\eta >0$ determine the level of shrinkage applied to specific, complex rules and the authors suggest default values of $\mu=1,\eta=2$ \citep{nalenz2018tree}. This prior favors the use of linear terms by construction, since linear terms are never shrunken more than a rule and are shrunken as much as a rule of support $\overline{r}_k=0.5$ and depth $m_k=1$.\\

RuleFit also discourages rules with unbalanced support, but does so implicitly. More specifically, it avoids standardization of the rule terms. This means that unbalanced rules are purposefully not rescaled by their small standard deviation and are therefore shrunken more. The two approaches are connected, since rescaling a feature before fitting a horseshoe model is equivalent to leaving the feature unchanged and re-scaling the local parameter $A_j$ instead:
\begin{rmk}
\label{rmk:Rescaling}
Standardizing or rescaling a feature is directly connected to its shrinkage in a horseshoe prior:
\end{rmk}
For simplicity, we refer to the horseshoe prior with structured shrinkage defined without distinction between linear terms and rules:
$$y \in \mathbb{R}^n, \,\, X \in \mathbb{R}^{n \times p},$$
$$y|X,\beta,\sigma^2 \sim \mathcal{N}(X \cdot \beta,\sigma^2I_n),$$
$$\beta_k|\lambda,\tau,\sigma^2 \sim \mathcal{N}(0,\lambda_k^2\tau^2\sigma^2),$$
$$\lambda_k \sim \mathcal{C}^+(0,A_k),$$
$$\tau \sim \mathcal{C}^+(0,1),$$
$$\sigma^2 \sim \sigma^{-2}d\sigma^2.$$
Given the feature vectors $x_1,\ldots,x_p$, which are columns of the design matrix $X$, one can rescale the features and turn them into $x_1'=x_1/\alpha_1,\ldots,x'_p=x_p/\alpha_p$, columns of the design matrix $X'$. If one fits a Bayesian regression on such rescaled features and produces a coefficient vector estimate $\beta'$, then returning on the original scale means computing the vector $\beta=\begin{pmatrix} \beta'_1/\alpha_1 \cdots \beta'_p/\alpha_p \end{pmatrix}^t$. In this case, the probability density is:
\begin{align*}
    \pi(\beta=b|X,y) &= \pi(\beta'=\alpha \odot b|X',y)\\
    &\propto e^{-\frac{1}{2\sigma^2}||y-\sum_{k=1}^p\frac{x_k}{\alpha_k} \cdot \alpha_kb_k||^2}\cdot e^{-\frac{1}{2\tau^2}\big(\sum_{k=1}^p\frac{b_k^2\alpha_k^2}{\lambda_k^2}\big)} \cdot \frac{1}{1+\tau^2} \cdot \prod_{k=1}^p \frac{1}{A_k^2+\lambda_k^2}\\
    &\propto e^{-\frac{1}{2\sigma^2}||y-\sum_{k=1}^px_kb_k||^2}\cdot e^{-\frac{1}{2\tau^2}\big(\sum_{k=1}^p\frac{b_k^2}{(\lambda_k/\alpha_k)^2}\big)} \cdot \frac{1}{1+\tau^2} \cdot \prod_{k=1}^p \frac{1}{(A_k/\alpha_k)^2+(\lambda_k/\alpha_k)^2}
\end{align*}
which, up to re-parametrizing $\lambda_k \leftarrow \lambda_k/\alpha_k$, is equivalent to the same prior as in HorseRule, but with the structured shrinkage parameter $A_k$ being replaced by $A_k/\alpha_k$. If we consider the specific setting of HorseRule where a rule term $r_k(x)$ would be rescaled by its standard deviation, this means replacing $A_k$ with:

\begin{align*}
    A_k'&=\frac{A_k}{sd(r_k)}\\
    &=\frac{A_k}{\sqrt{\overline{r}_k(1-\overline{r}_k)}}\\
    &=\frac{(2\min(\overline{r}_k,1-\overline{r}_k))^\mu}{m_k^\eta \sqrt{\overline{r}_k(1-\overline{r}_k)}}\\
    &=\frac{(2\min(\overline{r}_k,1-\overline{r}_k))^{\mu-0.5}}{m_k^\eta}\cdot\sqrt{\frac{2\min(\overline{r}_k,1-\overline{r}_k)}{\overline{r}_k(1-\overline{r}_k)}}\\
    &=\frac{(2\min(\overline{r}_k,1-\overline{r}_k))^{\mu-0.5}}{m_k^\eta}\cdot\sqrt{\frac{2}{\max(\overline{r}_k,1-\overline{r}_k)}}.
\end{align*}

In particular, standardization is equivalent to reducing the penalization parameter $\mu$ by 0.5 and introducing a (much more stable) multiplicative factor of $\sqrt{\frac{2}{\max(\overline{r}_k,1-\overline{r}_k)}}$. This rescaling prevents instability in the standardization process while having the same effect on shrinkage. This means that standardization may be performed implicitly by defining the following local shrinkage scale:
\begin{align*}
    A_k&=\frac{(2\min(\overline{r}_k,1-\overline{r}_k))^{\mu-0.5}}{m_k^\eta}\cdot\sqrt{\frac{2}{\max(\overline{r}_k,1-\overline{r}_k)}}\\
\end{align*}
If one wishes to ensure that the scale parameter $A_k$ for a rule is never higher than that of a linear term, $A_k$ may be multiplied by $\frac{1}{2}$. RuleSHAP does so, thus using the following local shrinkage scale:\\
\begin{align*}
    A_k&=\frac{(2\min(\overline{r}_k,1-\overline{r}_k))^{\mu-0.5}}{m_k^\eta\sqrt{2\max(\overline{r}_k,1-\overline{r}_k)}}\\
\end{align*}

\subsection{Lack of inference for Machine Learning models}
\label{subsec:NoInfer}
It is not straightforward to use HorseRule's uncertainty of the coefficients to perform inference on the importance measures defined for RuleFit, since the natural null for this measure, which would be $\mathcal{I}_j(x)=0$, is at the edge of the space where the measure is defined, i.e. $\mathcal{I}_j(x) \in \mathbb{R}^+$. This means that even for a noise feature, the confidence interval of $\mathcal{I}_j(x)$ would always exclude the zero and thus would always reject the null hypothesis. For example, a HorseRule model of the form $F(x)=a_1 I(x_1 > 0)+b_1 x_1$ would only show a non-significant importance if several posterior samples had $a_1=b_1=0$. However, the Gibbs sampling scheme almost surely precludes any coefficient from being exactly zero, since the Gibbs sampling scheme requires to draw every posterior sample of the coefficients by fitting a generalized Ridge regression \citep{makalic2015simple}.\\

In the literature one can also find other approaches used to combine ML with inference. For example, Random Forests may also rely on uncertainty quantifications for their variable importance measure (RF-VImp, \citep{ishwaran2019standard}). Other researchers rely on a two-step approach that retains the most significant features as detected from a ML model and then performs a linear regression to establish significance \citep{madakkatel2021combining,liu2023combining,madakkatel2023hypothesis}. However, this approach is suboptimal when using a single dataset for both steps: the noise features that are retained by the ML model are precisely the ones that, by chance, fit the current sample best, and the regression stage does not take this into account. The following example shows how both approaches produce high type I error rates.

\begin{ex}
\label{ex:TwoStepbad}
The two-step procedure that selects features with an ML model and subsequently performs inference with an OLS fit on the same data could lead to high type I error rates. Performing inference using RF-VImp with a Random Forest might also lead to high type I error rates:
\end{ex}

We set up an exemplary simulation to show the problems of this approach: more specifically, we simulated data under the Friedman generating function described in Equation~\ref{eq:FriedmanGenFunct}. The dataset comprised five signal and 25 noise features, for a total of  $p=30$ independent features and $n=1'000$ observations. The experiment was repeated 100 times. In each replicate, we fitted the two-step approach on the data. The approach involved fitting an XGBoost tree ensemble (R package \textit{xgboost}; \cite{XGBcite}). The eight features with highest average absolute Shapley value were selected and used to fit a linear model on the same dataset (R function \textit{lm}). In each replicate, the linear model was used to test for significance on these eight features. Each replicate also included fitting a Random Forest, and using RF-VImp to estimate uncertainty and thus perform inference. As shown in Table~\ref{tab:RFVImp}, the type I error rate estimated across replicates is much higher than the nominal level of $\alpha=0.05$, for both approaches.

\begin{table}[]
    \centering
    \begin{tabular}{l|cc}
      \multirow{2}{*}{Quantity} & \multicolumn{2}{c}{Model} \\
       & \text{RF-VImp} & \text{Two-step} \\\hline
       \# Null hypothesis rejections for noise features & 2'438 & 68 \\
       \# Tested noise features & 2'500 & 309 \\
       Estimated type I error rate & .9752 & .2201 \\
    \end{tabular}
    \caption{Observed frequencies of rejection of the null hypothesis for inference performed with Random Forest combined with RF-VImp and for inference performed with a two-step approach combining XGBoost and linear regression. Frequencies are summed across all 100 replicates of the experiment and type I error rate is estimated as the ratio between the times that the null hypothesis was rejected for any noise feature and the number of times that any noise feature was included in the testing model.}
    \label{tab:RFVImp}
\end{table}

\subsection{RuleSHAP implementation details}
Some details about the rule generation and the practical implementation of the RuleSHAP function are given in this subsection.\\

To prevent too many almost identical rules from being generated, some minor measures were taken on top of the ones discussed in the main text. For instance, winsorization was taken further and we enforced that the terminal nodes of the trees also contain at least 2.5\% of the observations, and in any case at least 10 observations. Furthermore, the thresholds involved in the definition of the rules were rounded to the third digit, if the feature was continuous. Lastly, we enforced a maximum rule depth of three features for optimal interpretability.\\

The practical implementation of the model followed the work of \citep{makalic2015simple,horseshoeR,horseshoenlmR,Villani2017HorseRuleR}, but the Gibbs sampling scheme was adapted to accommodate rule-specific and linear-specific shrinkages. The algorithm uses the following conditional distributions:

\begin{gather*}
\begin{bmatrix}
    a\\
    b
\end{bmatrix} \Big |\cdot \sim \mathcal{N}(\Sigma X^ty,\sigma^2\Sigma),\\
\lambda_k^2|\cdot \sim \mathcal{IG}(1,\frac{1}{\eta_k}+\frac{a_k^2}{2A_k^2\tau^2\tau_R^2\sigma^2}),\\
\eta_k | \cdot \sim \mathcal{IG}(1,1+\frac{1}{\lambda_k^2}),\\
\gamma_j^2|\cdot \sim \mathcal{IG}(1,\frac{1}{\nu_j}+\frac{b_j^2}{2\tau^2\tau_L^2\sigma^2}),\\
\nu_j | \cdot \sim \mathcal{IG}(1,1+\frac{1}{\gamma_j^2}),\\
\tau_R^2|\cdot \sim \mathcal{IG}(\frac{q+1}{2},\frac{1}{\xi_R}+\sum_{k=1}^q\frac{a_k^2}{2A_k^2\tau^2\lambda_k^2\sigma^2}),\\
\xi_R | \cdot \sim \mathcal{IG}(1,1+\frac{1}{\tau_R^2}),\\
\tau_L^2|\cdot \sim \mathcal{IG}(\frac{p+1}{2},\frac{1}{\xi_L}+\sum_{j=1}^p\frac{b_j^2}{2\tau^2\gamma_j^2\sigma^2}),\\
\xi_L | \cdot \sim \mathcal{IG}(1,1+\frac{1}{\tau_L^2}),\\
\tau^2|\cdot \sim \mathcal{IG}(\frac{p+q+1}{2},\frac{1}{\xi}+\sum_{k=1}^q\frac{a_k^2}{2A_k^2\tau_R^2\lambda_k^2\sigma^2}+\sum_{j=1}^p\frac{a_j^2}{2\tau_L^2\gamma_j^2\sigma^2}),\\
\xi | \cdot \sim \mathcal{IG}(1,1+\frac{1}{\tau^2}),\\
\sigma^2 | \cdot \sim \mathcal{IG}\Big(\frac{n+p}{2},\frac{||y-X_R \cdot  - X_L \cdot b||^2}{2}+\sum_{k=1}^q\frac{a_k^2}{2A_k^2\tau^2\tau_R^2\lambda_k^2}+\sum_{j=1}^p\frac{a_j^2}{2\tau^2\tau_L^2\gamma_j^2}\Big),\\
\end{gather*}
with $X=(X_R|X_L),\Sigma=(X^tX+\Lambda^{-1})^{-1}$ and
$$\Lambda=\begin{pmatrix}
    \tau^2\tau_R^2\lambda_1^2A_1^2 & 0 & \cdots & 0 & 0 & \cdots & 0 \\
    0& \tau^2\tau_R^2\lambda_2^2A_2^2 & \cdots & 0 & 0 & \cdots & 0\\
    \vdots & \vdots & \ddots & \vdots & \vdots & \ddots & \vdots\\
    0 & 0 & \cdots & \tau^2\tau_R^2\lambda_k^2A_k^2 & 0 & 0 & 0\\
    0 & 0 & \cdots & 0 & \tau^2\tau_L^2\gamma_1^2 & 0 & 0 \\
    \vdots & \vdots & \ddots & \vdots & \vdots & \ddots & \vdots\\
    0 & 0 & \cdots & 0 & 0 & \cdots & \tau^2\tau_L^2\gamma_p^2\\
\end{pmatrix}$$

To estimate features' effects, marginal Shapley values are computed with Algorithms~\ref{alg:ourShapley}~and~\ref{alg:ourShapleyInt} to estimate overall effects and interaction effects respectively.

\begin{algorithm}[hbt!]
\caption{Estimating Shapley values $\phi_1(x^*),\ldots,\phi_p(x^*)$ for $r(x)=\prod_{k=1}^pR_k(x_k)$}
\label{alg:ourShapley}
\begin{algorithmic}
\Require 
\State Dataset $x^{(1)},\ldots,x^{(n)} \in \mathbb{R}^p$
\State Observation $x^{*} \in \mathbb{R}^p$
\State $R_1(x_1^{(1)}),\ldots,R_p(x_p^{(1)}),\ldots,R_1(x_1^{(n)}),\ldots,R_p(x_p^{(n)}) \in \{0,1\}$
\State $R_1(x_1^*),\ldots,R_p(x_p^*) \in \{0,1\}$
\Ensure
\State $q^* \gets \sum_{k=1}^p R_k(x^{*}_k)$
\State $phiSum_1 \gets \cdots \gets phiSum_p \gets 0$ \Comment{Preallocate summations as in formula}
\For{$i=1,\ldots,n$} \Comment{Loop over points used for estimates}
\If{$\sum_{k=1}^p(1-R_k(x_k^{(i)}))(1-R_k(x_k^{*}))=0$} \Comment{Check if point contributes to estimates}
    \State $q \gets \sum_{k=1}^p R_k(x^{(i)}_k)$
    \For{$j=1,\ldots,p$}
    \State $phiSum_j \gets phiSum_j + \frac{R_j(x_j^*)-R_j(x_j)}{\binom{2p-q^*-q-1+R_j(x^*_j)+R_j(x^{(i)}_j)}{p-q^*+R_j(x^*_j)}}$ \Comment{Update summations}
    \EndFor
\EndIf
\EndFor
\For{$j=1,\ldots,p$}
    \State $\phi_j \gets \frac{phiSum_j}{n(p-q^*+R_j(x^*_j))} $ \Comment{Rescale summations to obtain Shapley values}
\EndFor
\end{algorithmic}
\end{algorithm}

\begin{algorithm}[hbt!]
\caption{Estimating interaction Shapley values $\{\phi_{j,j'}(x^*)\}_{j,j'=1,\ldots,p}$ for $r(x)=\prod_{k=1}^pR_k(x_k)$}
\label{alg:ourShapleyInt}
\begin{algorithmic}
\Require 
\State $R_1(x_1^{(1)}),\ldots,R_p(x_p^{(1)}),\ldots,R_1(x_1^{(n)}),\ldots,R_p(x_p^{(n)}) \in \{0,1\}$
\State $R_1(x_1^*),\ldots,R_p(x_p^*) \in \{0,1\}$
\Ensure
\State $q^* \gets \sum_{k=1}^p R_k(x^{*}_k)$
\State $phiSum_{1,1} \gets \cdots \gets phiSum_{1,p} \cdots \gets phiSum_{p,p} \gets 0$ \Comment{Preallocate summations as in formula}
\For{$i=1,\ldots,n$} \Comment{Loop over points used for estimates}
\If{$\sum_{k=1}^p(1-R_k(x_k^{(i)}))(1-R_k(x_k^{*}))=0$} \Comment{Check if point contributes to estimates}
    \State $q \gets \sum_{k=1}^p R_k(x^{(i)}_k)$
    \For{$j=2,\ldots,p$}
    \For{$j'=1,\ldots,j-1$}
    \State $phiSum_{j,j'} \gets phiSum_{j,j'} + \frac{R_j(x_j^*)R_{j'}(x_{j'}^*)+R_j(x^{(i)}_j)R_{j'}(x^{(i)}_{j'})-R_j(x_j^*)R_{j'}(x^{(i)}_{j'})-R_j(x^{(i)}_j)R_{j'}(x_{j'}^*)}{\binom{2p-q^*-q-3+R_j(x^*_j)+R_{j'}(x^*_{j'})+R_j(x^{(i)}_j)+R_{j'}(x^{(i)}_{j'})}{p-q^*+R_j(x^*_j)+R_{j'}(x^*_{j'})}}$
    \EndFor
    \EndFor
\EndIf
\EndFor
\For{$j=2,\ldots,p$}
\For{$j'=1,\ldots,j-1$}
    \State $\phi_{j,j'} \gets \frac{phiSum_{j,j'}}{n(p-1-q^*+R_j(x^*_j)+R_k(x^*_{j'}))} $ \Comment{Rescale summations to obtain Shapley values}
    \State $\phi_{j',j} \gets \phi_{j,j'}$
\EndFor
\EndFor
\end{algorithmic}
\end{algorithm}

\FloatBarrier

\newpage
\subsection{Figures and results}
\FloatBarrier
\begin{figure}[h]
    \centering
    \includegraphics[width=\linewidth]{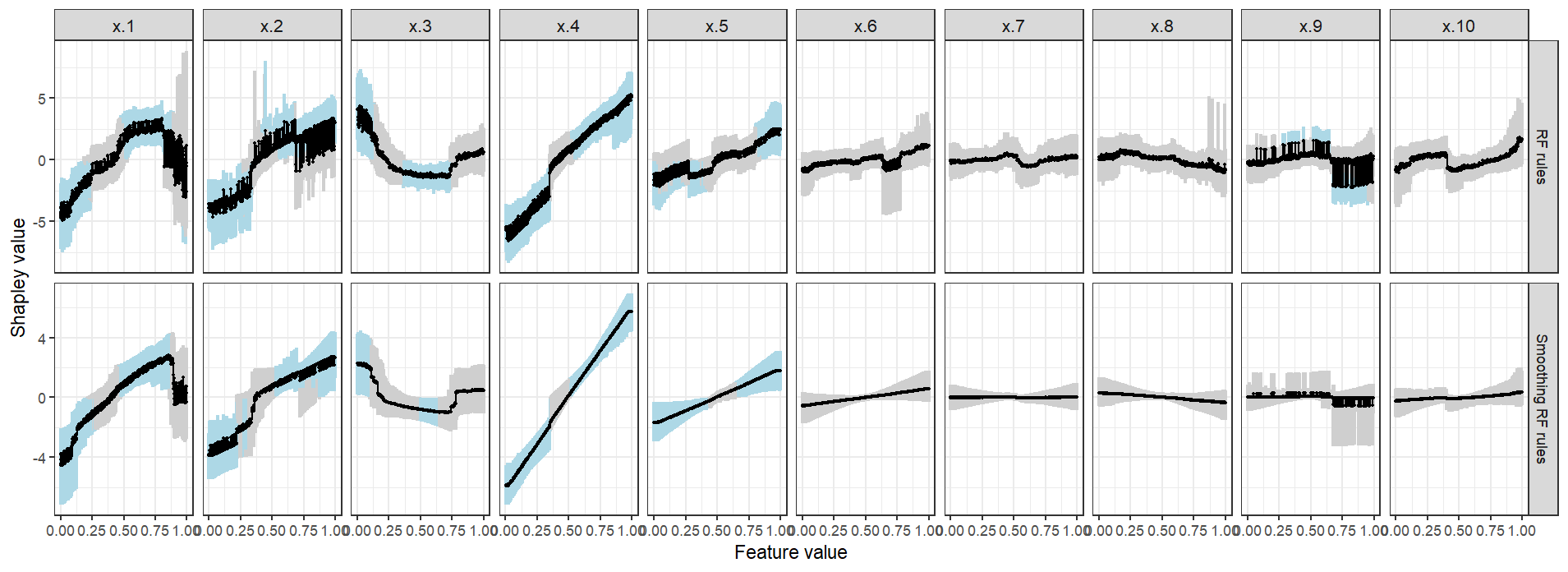}
    \caption{Feature effects as estimated with marginal Shapley values computed on a RuleSHAP model. In the top row, rules are generated using a non-modified Random Forest. In the bottom row, rules are generated using a Smoothing Random Forest. 95\% credible intervals for the Shapley values are gray if they overlap with zero and light blue otherwise. Fits were performed on $n=1'000$ observations for $p=10$ features with a pairwise correlation of $\rho=0.3$. The data was generated under the Friedman generation scheme, with an error variance $\sigma^2=100$.}
    \label{fig:RFvPRF}
\end{figure}

\begin{figure}[h]
    \centering
    \includegraphics[width=\linewidth]{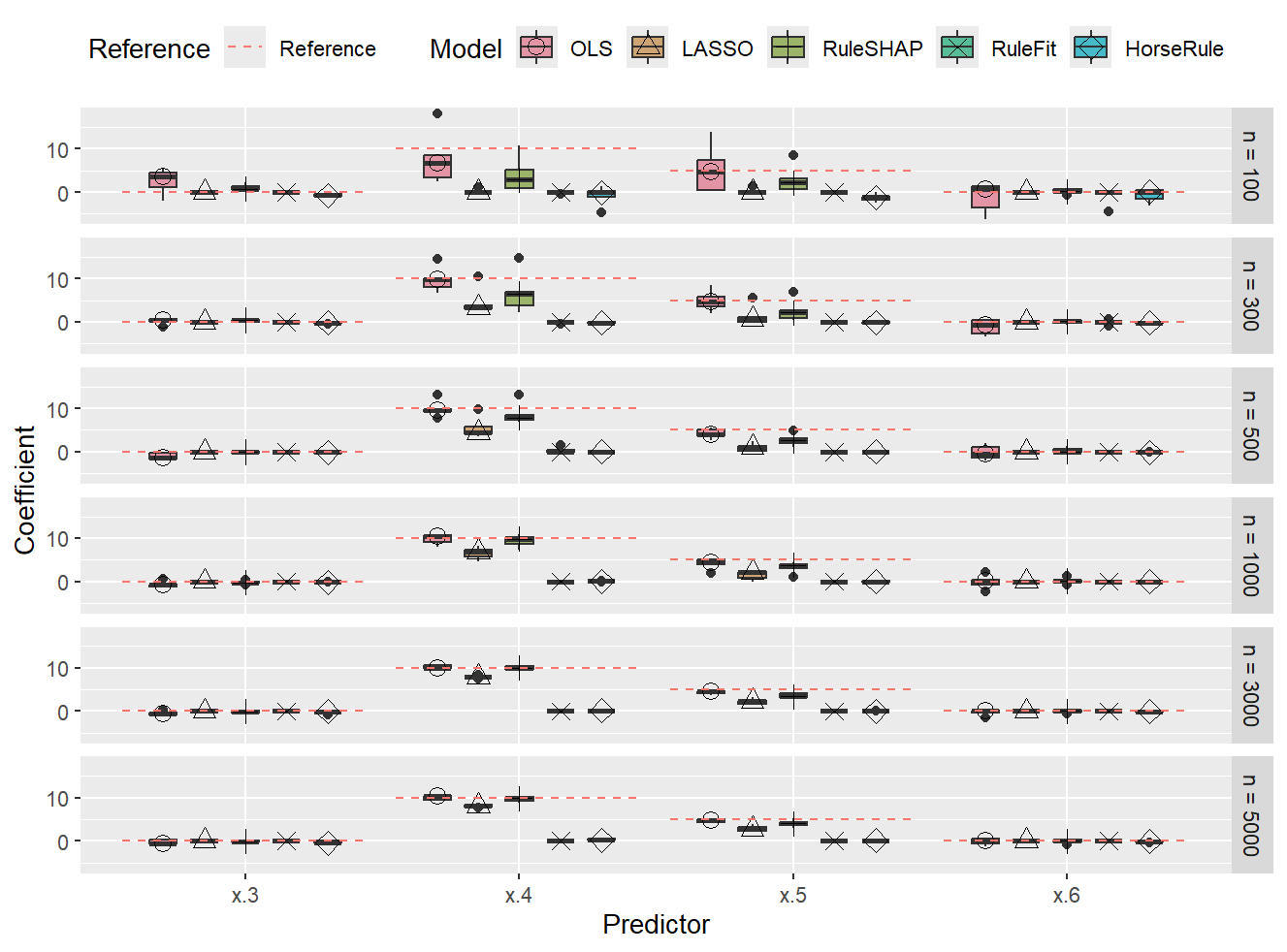}
    \caption{Coefficients of linear terms across five fits of OLS linear regression, LASSO regression, RuleFit regression, HorseRule regression \iffalse(HR1 for default settings; HR2 for custom settings)\fi  and RuleSHAP regression. These models are fitted on the Friedman-generated data with continuous outcome and $p=10$, across different values of $n$. Target coefficients are presented with dashed red lines. Note that the boxplots do not represent uncertainty, but rather variation across the five replicates of the experiment.}
    \label{fig:FriedCoeffsp10}
\end{figure}

\begin{figure}[h]
    \centering
    \includegraphics[width=\linewidth]{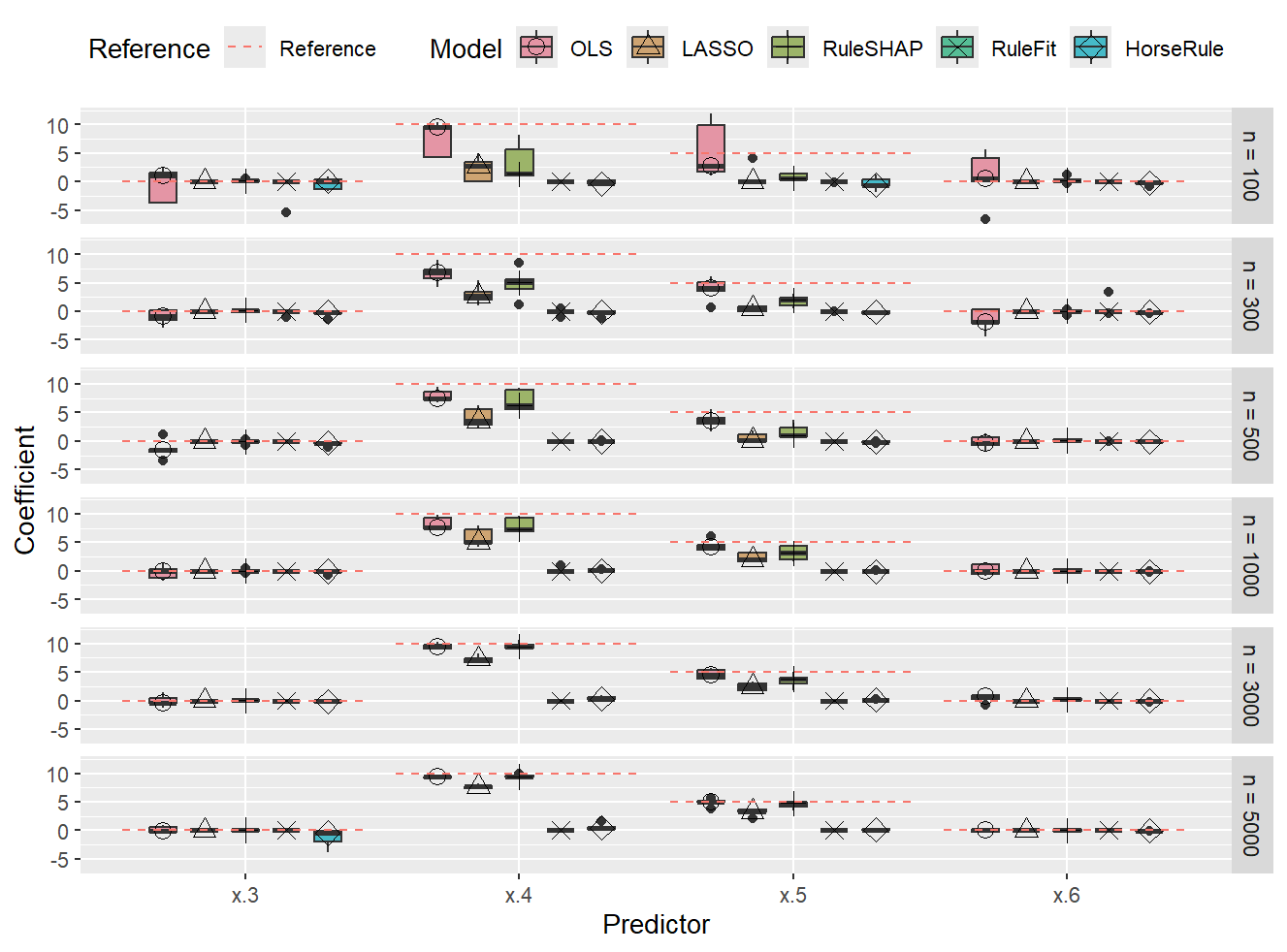}
    \caption{Coefficients of linear terms across five fits of OLS linear regression, LASSO regression, RuleFit regression, HorseRule regression \iffalse(HR1 for default settings; HR2 for custom settings)\fi  and RuleSHAP regression. These models are fitted on the Friedman-generated data with continuous outcome and $p=30$, across different values of $n$. Target coefficients are presented with dashed red lines. Note that the boxplots do not represent uncertainty, but rather variation across the five replicates of the experiment.}
    \label{fig:FriedCoeffsp30}
\end{figure}

\begin{figure}[h]
    \centering
    \includegraphics[width=\linewidth]{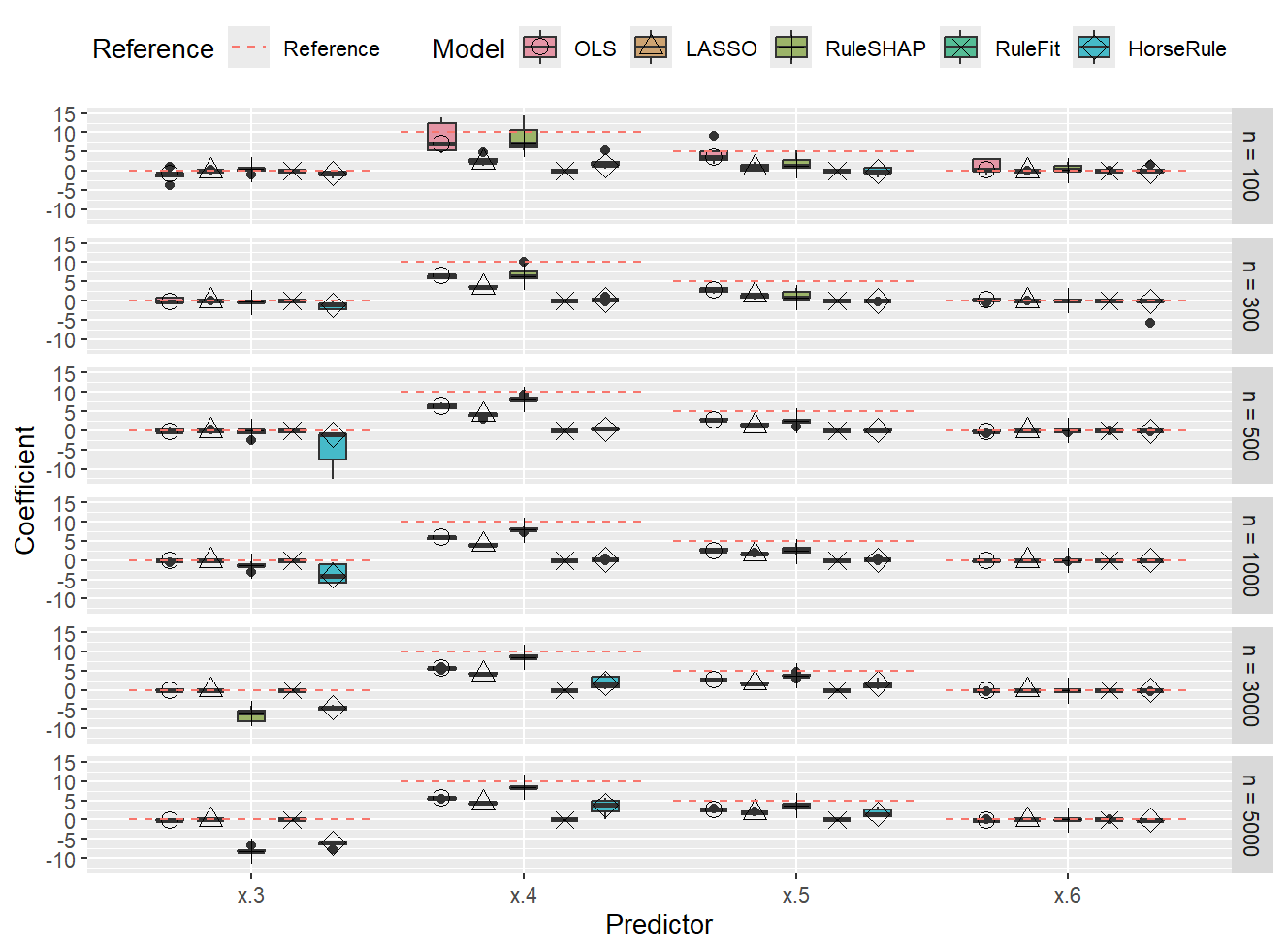}
    \caption{Coefficients of linear terms across five fits of logistic linear regression, LASSO logistic regression, RuleFit logistic regression, HorseRule logistic regression \iffalse(HR1 for default settings; HR2 for custom settings)\fi  and RuleSHAP logistic regression. These models are fitted on the Friedman-generated data with binary outcome and $p=10$, across different values of $n$. Target level is presented with dashed red lines. Note that the boxplot does not represent uncertainty quantifications, but rather the variation across the five replicates of the experiment.}
    \label{fig:LogiCoeffsp10}
\end{figure}

\begin{figure}[h]
    \centering
    \includegraphics[width=\linewidth]{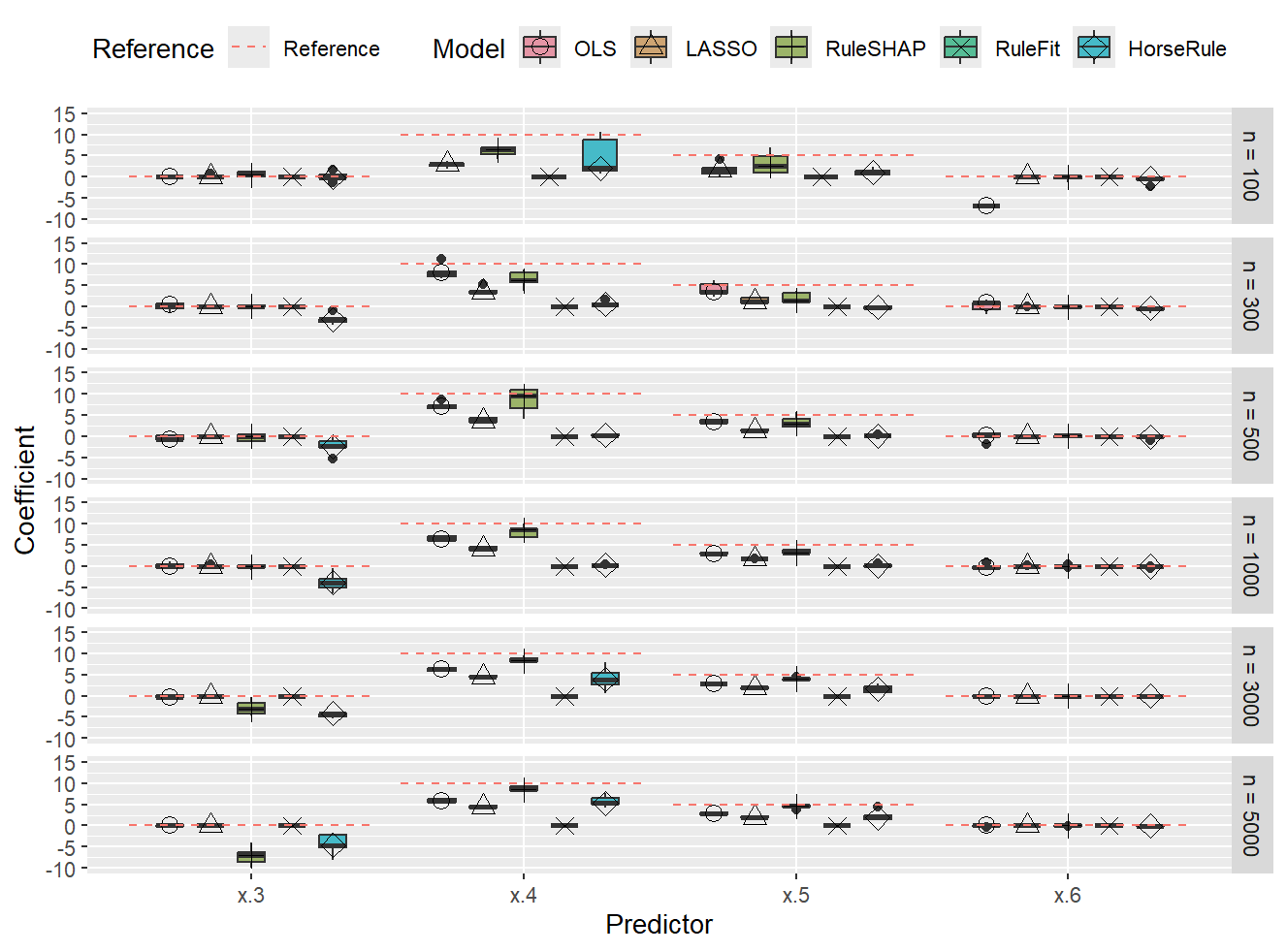}
    \caption{Coefficients of linear terms across five fits of logistic linear regression, LASSO logistic regression, RuleFit logistic regression, HorseRule logistic regression \iffalse(HR1 for default settings; HR2 for custom settings)\fi  and RuleSHAP logistic regression. These models are fitted on the Friedman-generated data with binary outcome and $p=30$, across different values of $n$. Target level is presented with dashed red lines. Note that the boxplot does not represent uncertainty quantifications, but rather the variation across the five replicates of the experiment.}
    \label{fig:LogiCoeffsp30}
\end{figure}

\begin{figure}[h]
    \centering
    \includegraphics[width=\linewidth]{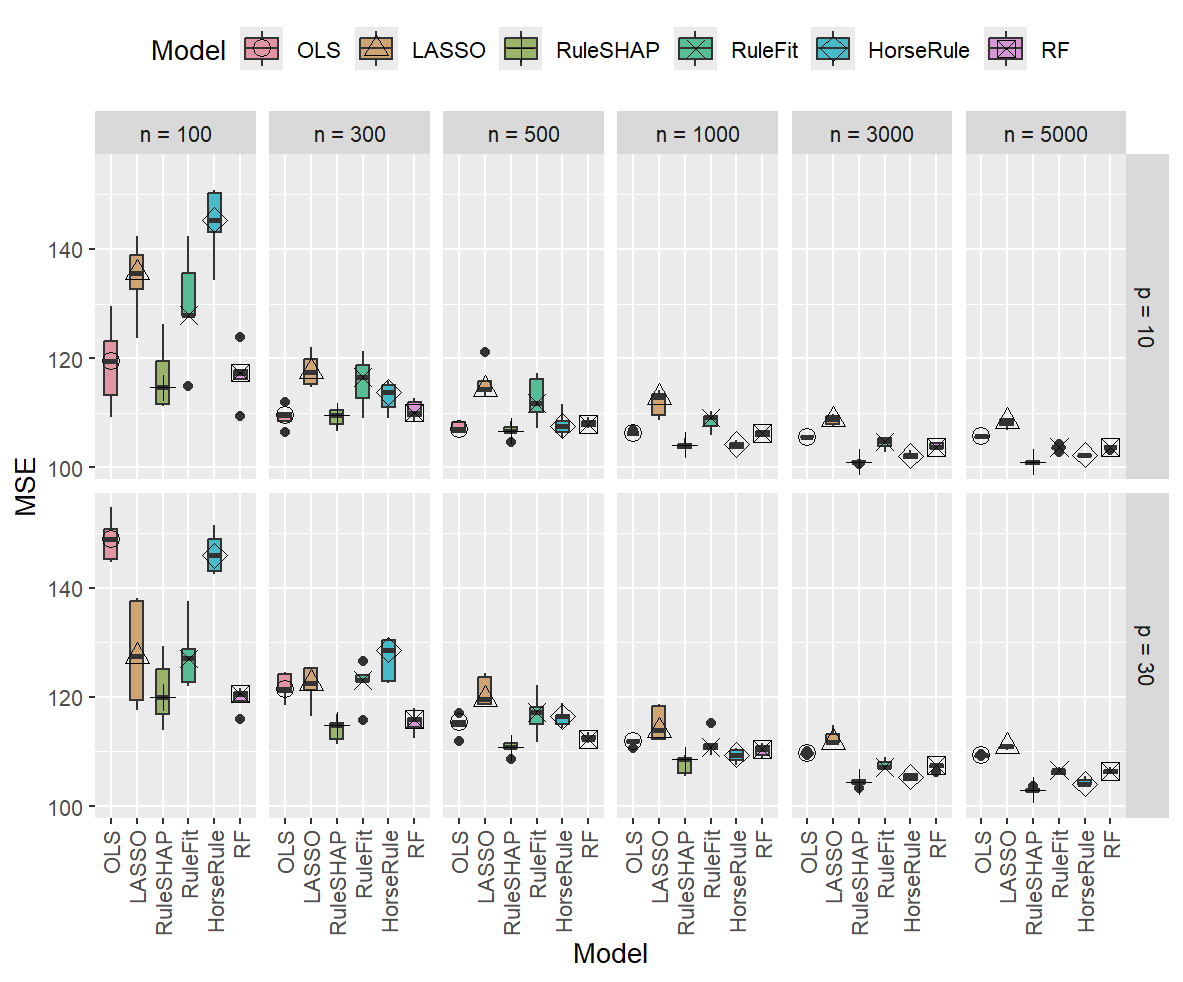}
    \caption{Test MSE across five fits of OLS linear regression, LASSO regression, RuleFit regression, HorseRule regression \iffalse(HR1 for default settings; HR2 for custom settings)\fi , RuleSHAP regression and Random Forest. These models are fitted on the Friedman-generated data with continuous outcome, across different values of $n$ and $p$.}
    \label{fig:FriedMSE}
\end{figure}

\begin{figure}[h]
    \centering
    \includegraphics[width=\linewidth]{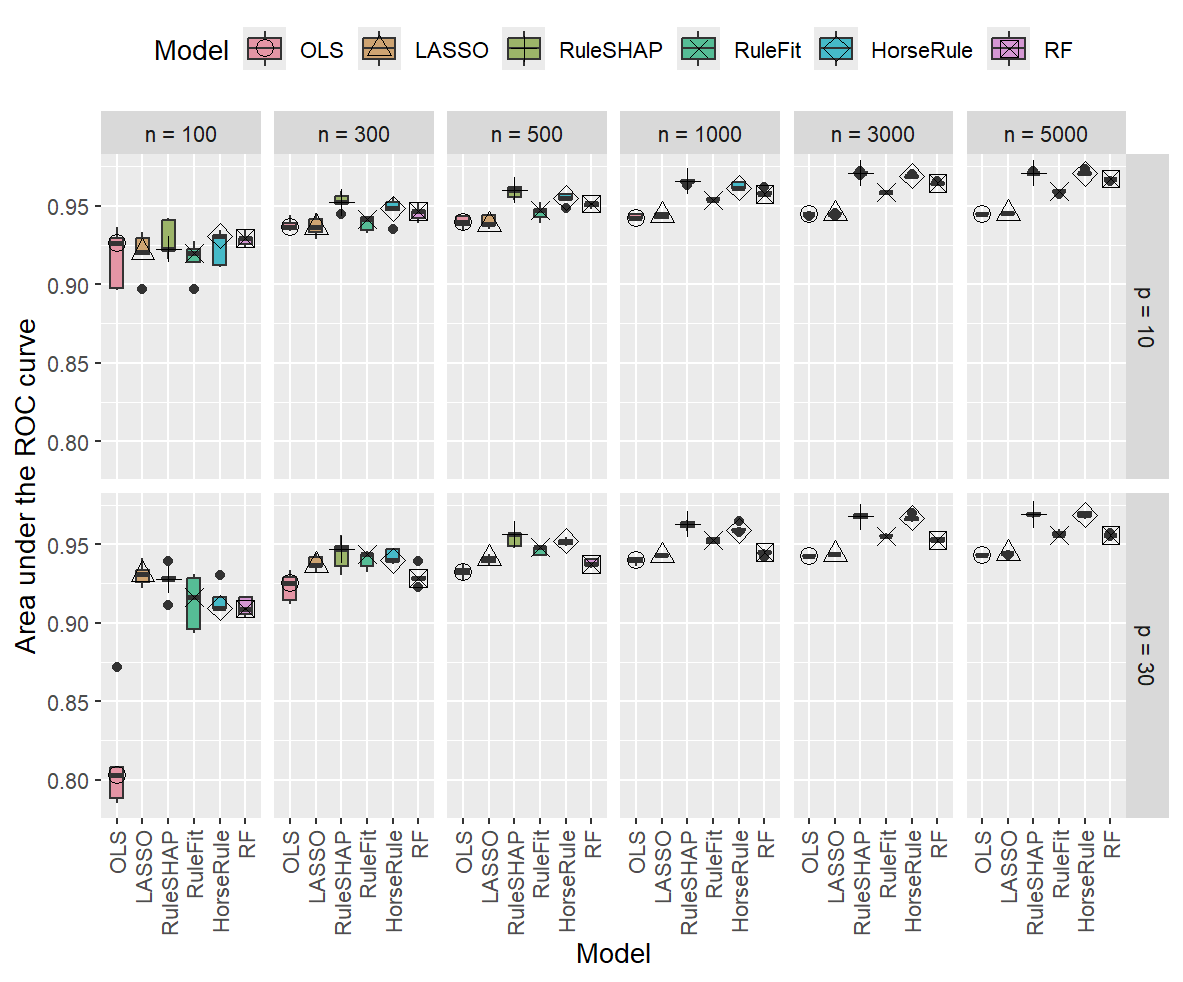}
    \caption{Test Area Under the ROC Curve (AUC) across five fits of logistic linear regression, LASSO logistic regression, RuleFit logistic regression, HorseRule logistic regression \iffalse(HR1 for default settings; HR2 for custom settings)\fi , RuleSHAP logistic regression and Random Forest. These models are fitted on the Friedman-generated data with binary outcome, across different values of $n$ and $p$.}
    \label{fig:FriedAUC}
\end{figure}

\begin{figure}[h]
    \centering
    \includegraphics[width=\linewidth]{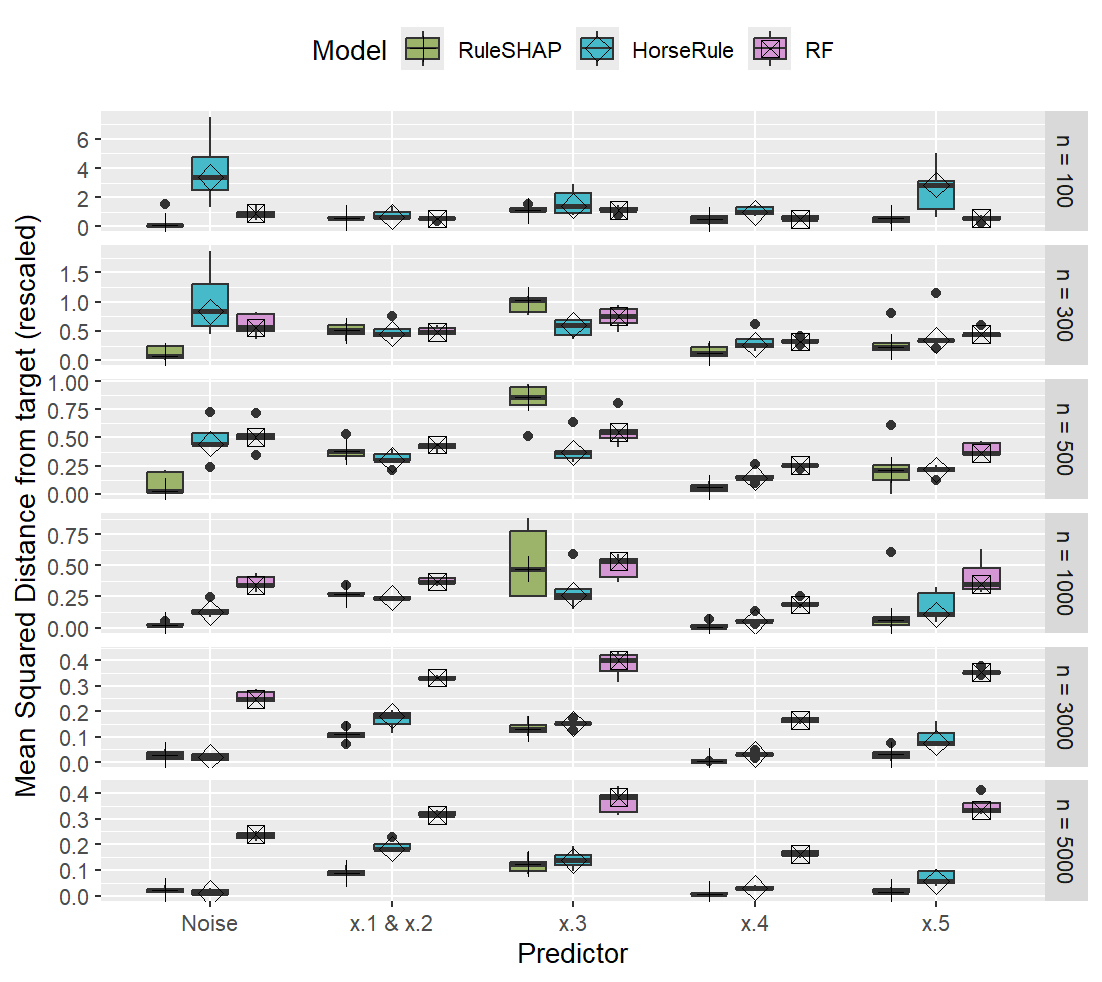}
    \caption{Re-scaled mean squared distance between feature effects estimated by the model and the target effects for five replicates of different models fitted on Friedman-generated data with continuous outcomes, for $p=10$ features and different sample sizes $n$. Estimated feature effects are computed as marginal Shapley values. Since $x_1$ and $x_2$ induce an interacting effect, their feature effects are analyzed jointly, as the sum of the Shapley values of the two features. The distance is averaged across all replicates of the experiment, across all points of the dataset. All noise features are averaged out together. To ease interpretation across different effect magnitudes, the squared distance of each signal feature was rescaled by the overall variance of that effect. The scale on the y axis varies across plots.}
    \label{fig:LocImpDistsp10}
\end{figure}

\begin{figure}[h]
    \centering
    \includegraphics[width=\linewidth]{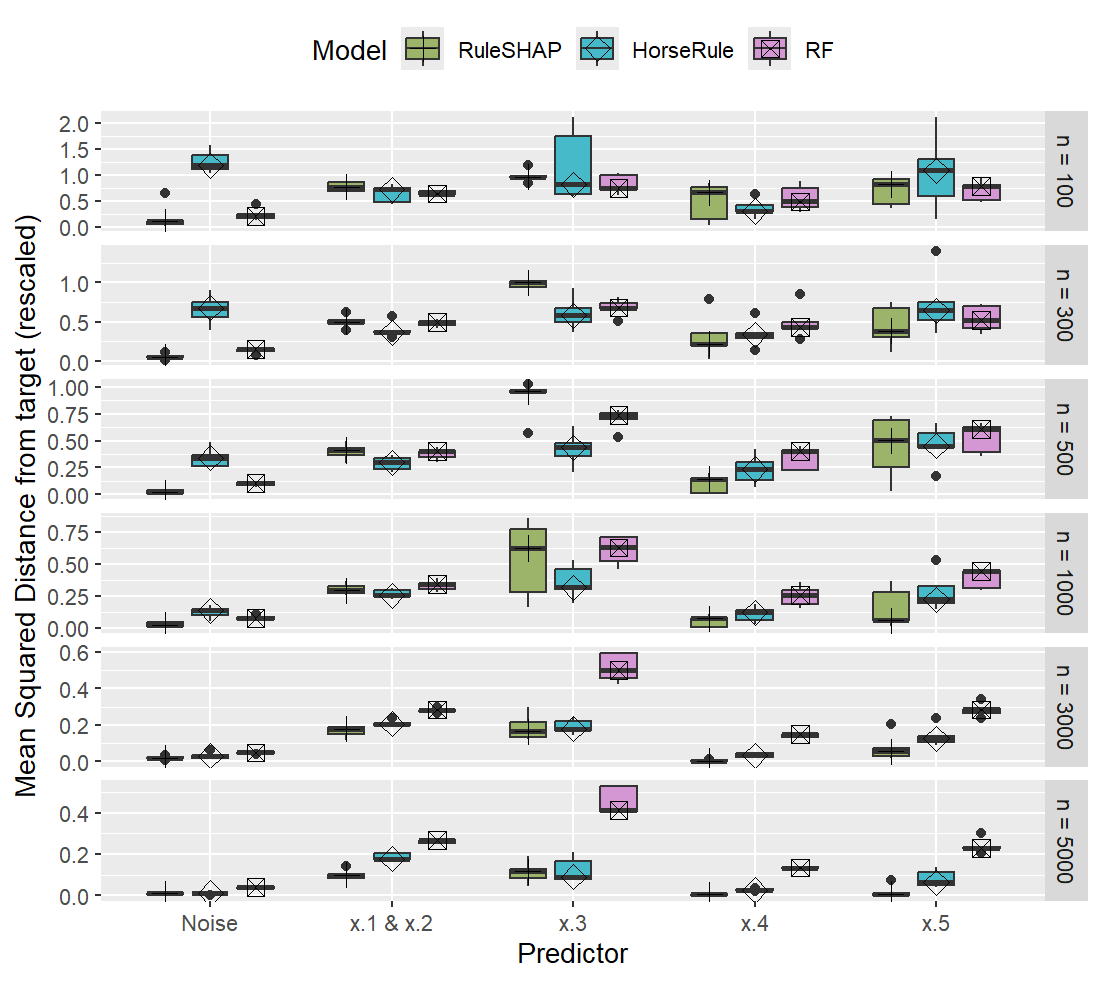}
    \caption{Re-scaled mean squared distance between feature effects estimated by the model and the target effects for five replicates of different models fitted on Friedman-generated data with continuous outcome, for $p=30$ features and different sample sizes $n$. Estimated feature effects are computed as marginal Shapley values. Since $x_1$ and $x_2$ induce an interacting effect, their feature effects are analyzed jointly, as the sum of the Shapley values of the two features. The distance is averaged across all replicates of the experiment, across all points of the dataset. All noise features are averaged out together. To ease interpretation across different effect magnitudes, the squared distance of each signal feature was rescaled by the overall variance of that effect. The scale on the y axis varies across plots.}
    \label{fig:LocImpDistsp30}
\end{figure}

\begin{figure}[h]
    \centering
    \includegraphics[width=\linewidth]{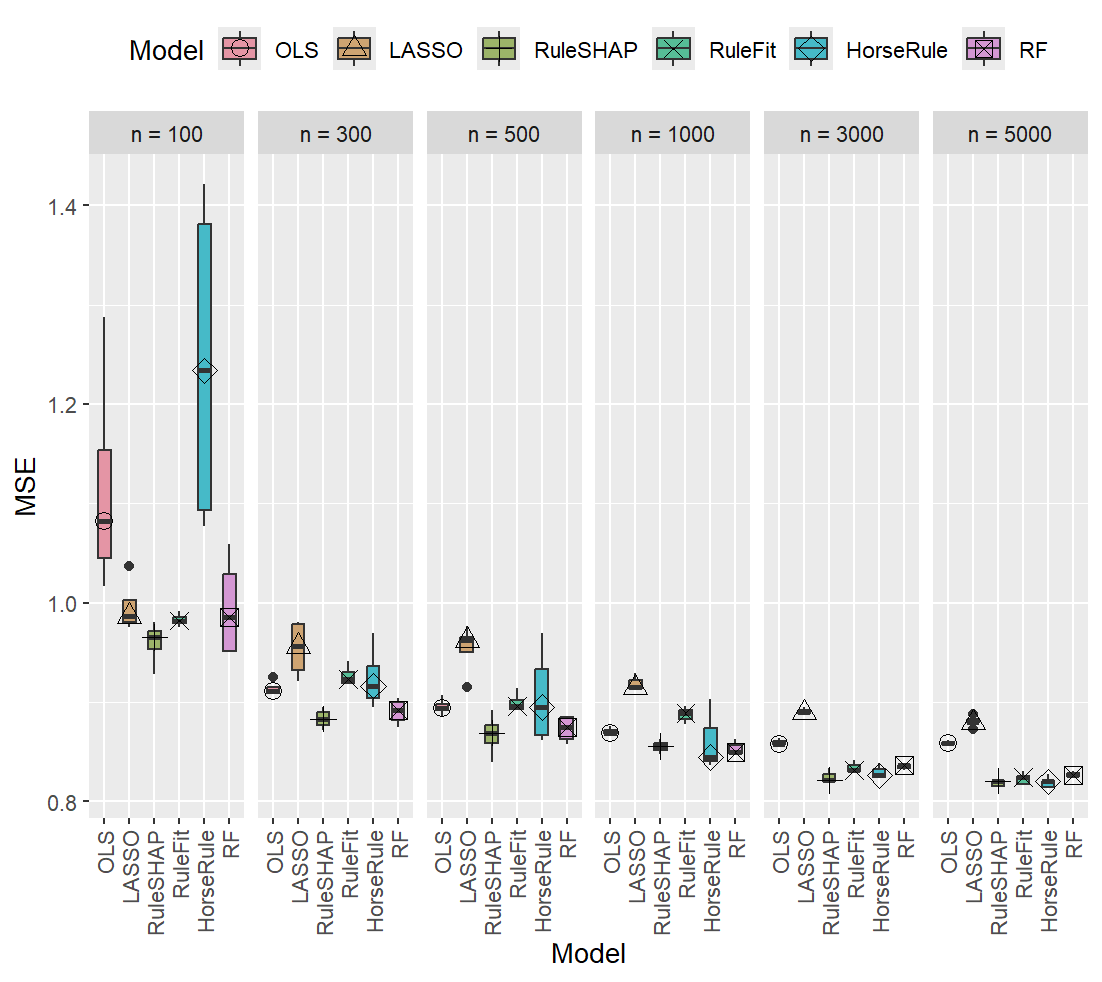}
    \caption{Test MSE across five fits of OLS linear regression, LASSO regression, RuleFit regression, HorseRule regression \iffalse(HR1 for default settings; HR2 for custom settings)\fi , RuleSHAP regression and Random Forest. These models are fitted on subsamples of different sizes $n$ taken from the HELIUS study data, with \textbf{cholesterol} as predicted outcome.}
    \label{fig:HeliusMSEchol}
\end{figure}

\begin{figure}[h]
    \centering
    \includegraphics[width=\linewidth]{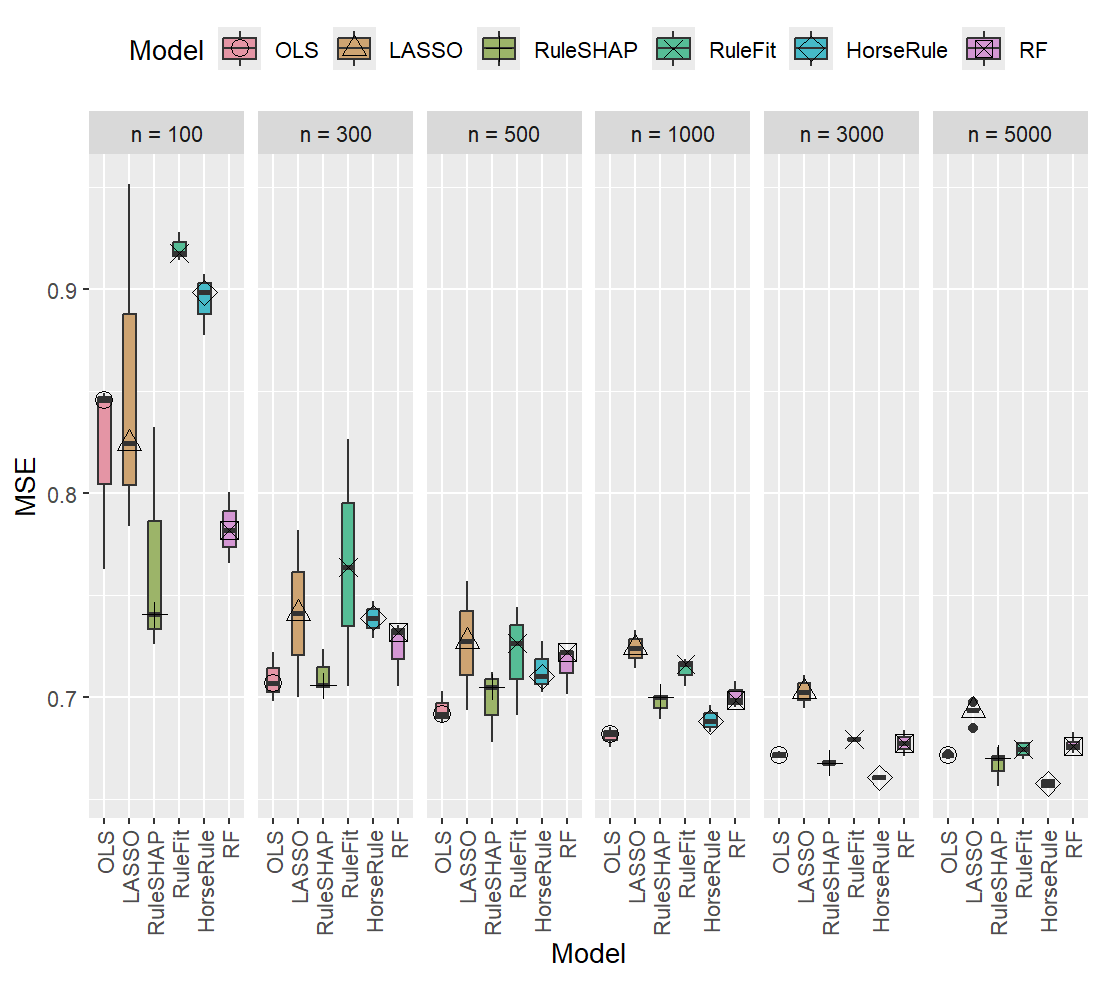}
    \caption{Test MSE across five fits of OLS linear regression, LASSO regression, RuleFit regression, HorseRule regression \iffalse(HR1 for default settings; HR2 for custom settings)\fi , RuleSHAP regression and Random Forest. These models are fitted on subsamples of different sizes $n$ taken from the HELIUS study data, with \textbf{systolic blood pressure} as predicted outcome.}
    \label{fig:HeliusMSEsbp}
\end{figure}

\begin{figure}[h]
    \centering
    \includegraphics[width=\linewidth]{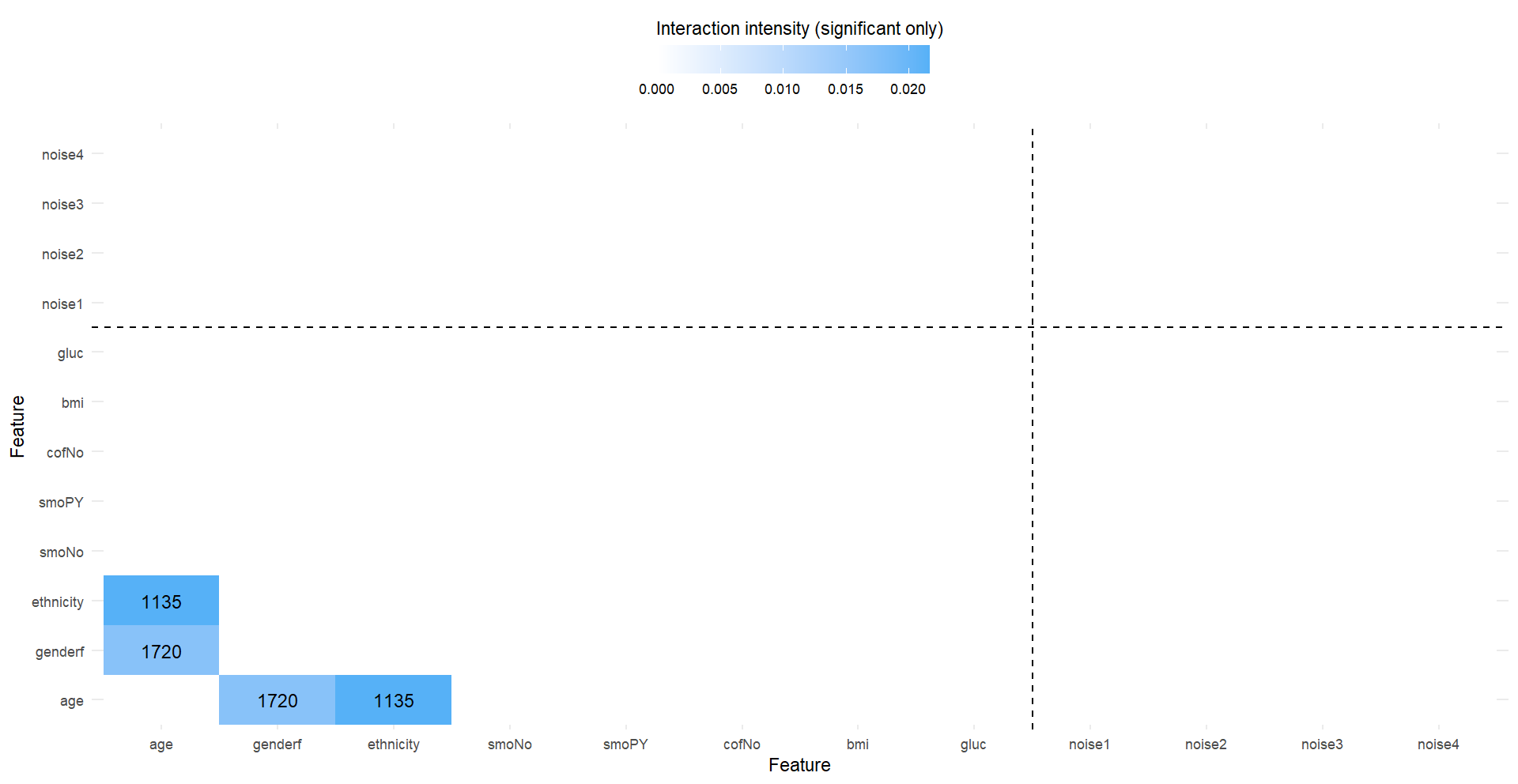}
    \caption{Interaction Shapley values computed from a RuleSHAP model fitted on $n=10'000$ observations from the HELIUS study, to predict \textbf{systolic blood pressure}. Point-wise 95\% credible intervals are used to determine significance of interactions, and only significant interactions are shown in the form of mean absolute values. Dashed line separates artificial noise features from the rest. Numbers in the tiles count for how many observations the interaction Shapley value is significant.}
    \label{fig:HeatmapSBP}
\end{figure}

\begin{figure}[h]
    \centering
    \includegraphics[width=\linewidth]{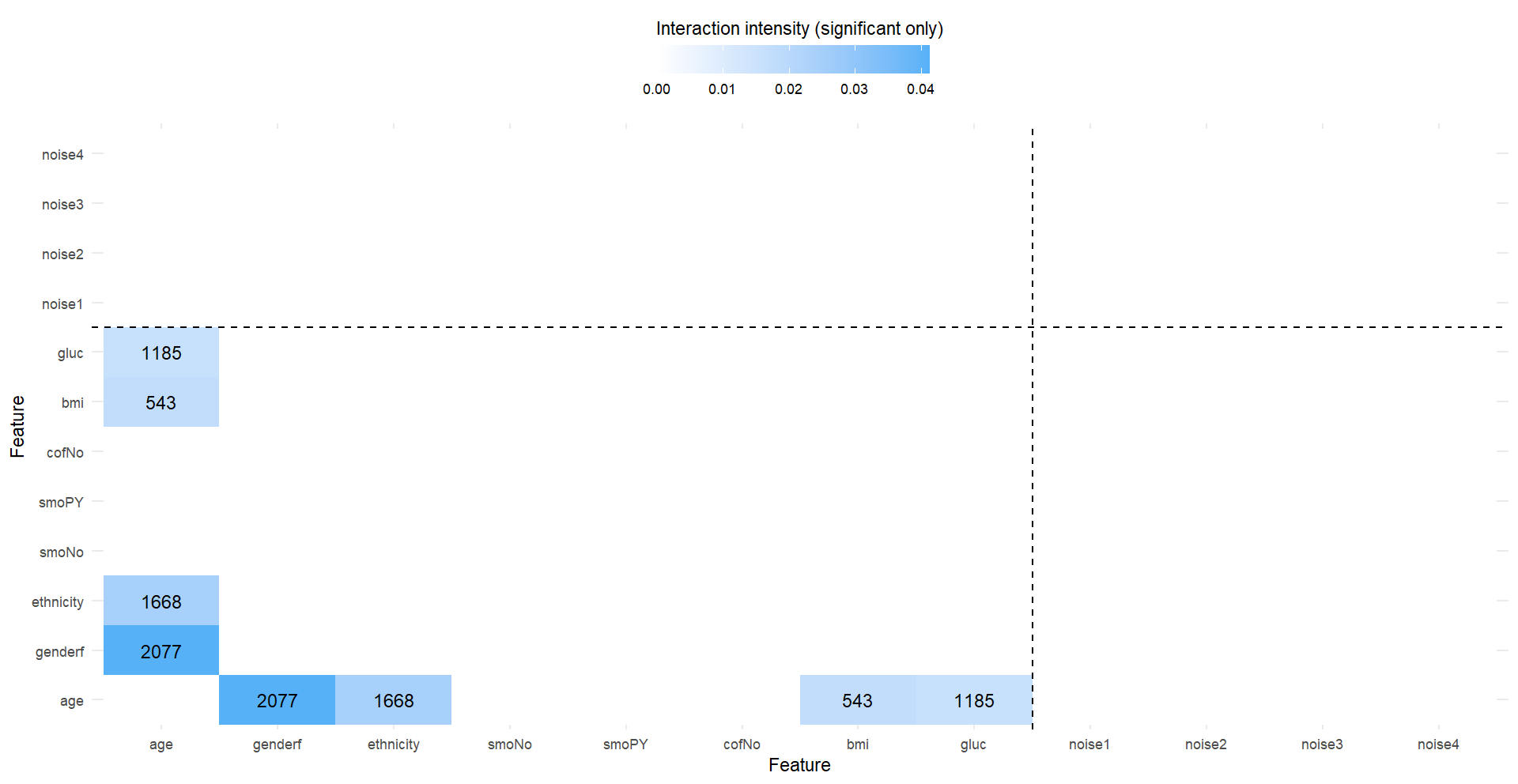}
    \caption{Interaction Shapley values computed from a RuleSHAP model fitted on $n=10'000$ observations from the HELIUS study, to predict \textbf{cholesterol}. Point-wise 95\% credible intervals are used to determine significance of interactions, and only significant interactions are shown in the form of mean absolute values. Dashed line separates artificial noise features from the rest. Numbers in the tiles count for how many observations the interaction Shapley value is significant.}
    \label{fig:HeatmapChol}
\end{figure}

\begin{figure}[h]
    \centering
    \includegraphics[width=\linewidth]{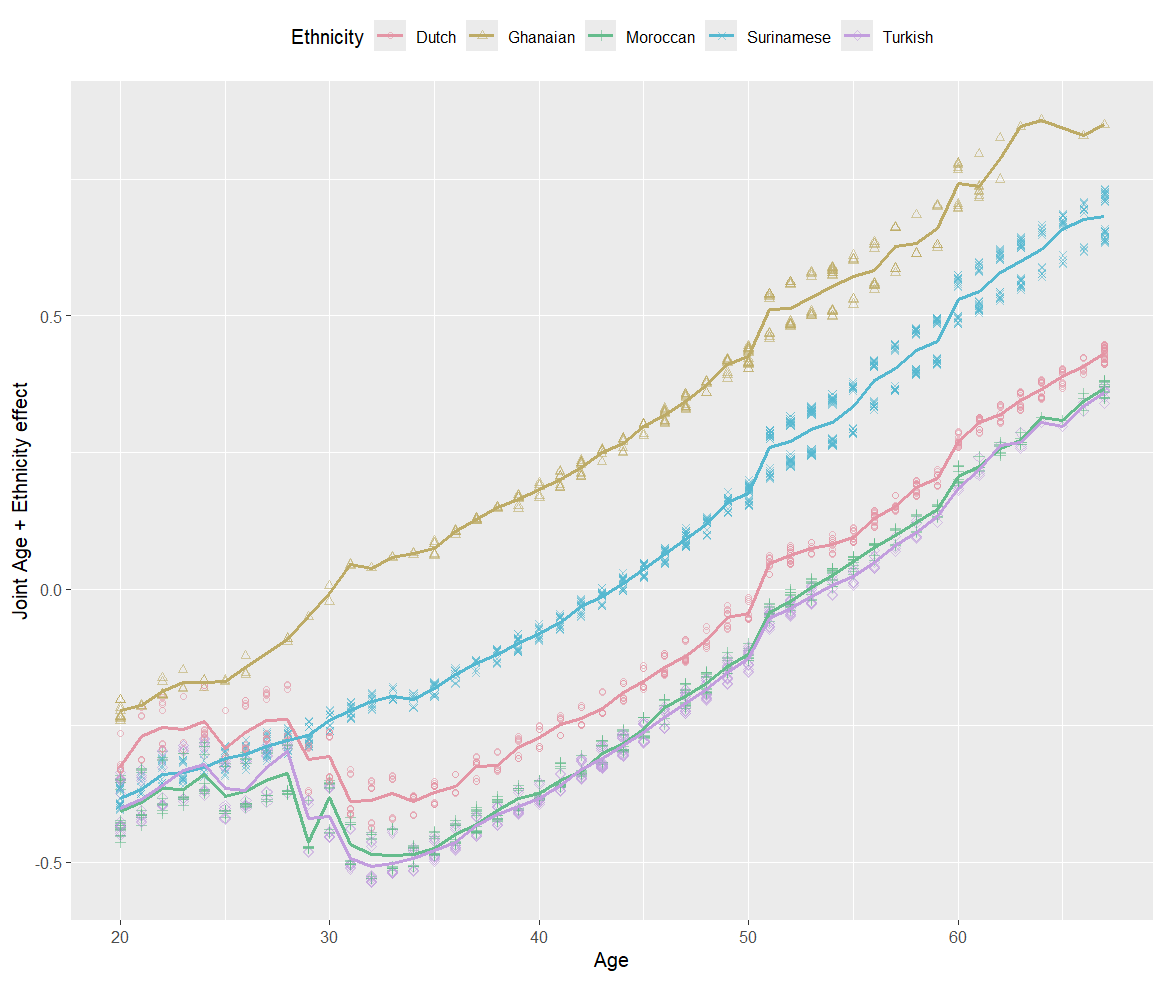}
    \caption{Marginal Shapley values computed from a RuleSHAP model fitted on $n=10'000$ observations from the HELIUS study, to predict systolic blood pressure, stratified by ethnicity. The marginal Shapley values of ethnicity and age are summed up together for each observation, so that their joint effect may be visualized. Lines show the overall trend of the mean joint contribution. The individual joint contributions, which (weakly) oscillate between observations due to the other interaction effects of age, are represented by dots.}
    \label{fig:EthnAgeInter}
\end{figure}

\begin{table}[h]
    \centering
    \begin{tabular}{c|cccc}
        Feature & Estimate & Std. Error & $t$ value & p-value \\\hline
(Intercept) & 0.442581 & 0.026095 & 16.960 & $< 0.001$ \\
age & 0.273671 & 0.009756 & 28.053 & $< 0.001$ \\
sex: female & -0.199646 & 0.008875 & -22.495 & $< 0.001$ \\
ethnicity: Moroccan & -0.605203 & 0.032872 & -18.411 & $< 0.001$ \\
ethnicity: Dutch & -0.442634 & 0.032807 & -13.492 & $< 0.001$ \\
ethnicity: Surinamese & -0.292533 & 0.029498 & -9.917 & $< 0.001$ \\
ethnicity: Turkish & -0.607234 & 0.033842 & -17.943 & $< 0.001$ \\
smoPY & -0.047704 & 0.008897 & -5.362 & $<0.001$ \\
coffee: No & 0.030347 & 0.012410 & 2.445 & 0.0145 \\
bmi & 0.281364 & 0.009284 & 30.308 & $< 0.001$ \\
gluc & 0.099396 & 0.009627 & 10.325 & $< 0.001$ \\
noise1 & 0.003980 & 0.008151 & 0.488 & 0.6254 \\
noise2 & -0.012675 & 0.008227 & -1.541 & 0.1234 \\ 
noise3 & -0.007220 & 0.008129 & -0.888 & 0.3744 \\ 
noise4 & 0.010030 & 0.008149 & 1.231 & 0.2184 
    \end{tabular}
    \caption{Coefficients and p-values from fitting linear regression to predict systolic blood pressure on the dataset from the HELIUS study.}
    \label{tab:sbpLM}
\end{table}

\begin{table}[h]
    \centering
    \begin{tabular}{c|cccc}
        Feature & Estimate & Std. Error & $t$ value & p-value \\\hline
(Intercept) &  -0.034152 &  0.030688 & -1.113 &  0.2658 \\
age & 0.289759 &  0.011473 & 25.257 & $< 0.001$ \\
sex: female & 0.003713 &  0.010437 &  0.356 &  0.7221 \\
ethnicity: Moroccan &  -0.190261 &  0.038657 & -4.922 & $< 0.001$ \\
ethnicity: Dutch &  0.181669 &  0.038581 &  4.709 & $< 0.001$ \\
ethnicity: Surinamese & 0.005175 &  0.034689 &  0.149 &  0.8814 \\
ethnicity: Turkish &  0.056655 &  0.039798 &  1.424 &  0.1546 \\
smoPY & 0.021367 &  0.010463 &  2.042 &  0.0412 \\
cofNo &  -0.026024 &  0.014594 & -1.783 &  0.0746 \\
bmi & 0.042939 &  0.010917 &  3.933 & $< 0.001$ \\
gluc & -0.091602 &  0.011321 & -8.091 & $< 0.001$ \\
noise1 & -0.001924 &  0.009586 & -0.201 &  0.8410 \\
noise2 &  0.002711 &  0.009675 &  0.280 &  0.7793 \\
noise3 & -0.002317 &  0.009559 & -0.242 &  0.8085 \\
noise4 &  0.015465 &  0.009583 &  1.614 &  0.1066
    \end{tabular}
    \caption{Coefficients and p-values from fitting linear regression to predict cholesterol level on the dataset from the HELIUS study.}
    \label{tab:cholLM}
\end{table}

\FloatBarrier

\subsection{Theoretical results}
\begin{lem}
    \textbf{(Variation of Vandermonde's identity)}For any $a,b,c \in \mathbb{N}$ such that $c \leq b$, the following equality holds:
    $$\sum_{u=0}^c\binom{a+u}{u}\binom{b-u}{c-u}=\binom{a+b+1}{c}.$$
    \label{lem:binomialsum}
\end{lem}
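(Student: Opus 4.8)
The plan is to prove the identity $\sum_{u=0}^{c}\binom{a+u}{u}\binom{b-u}{c-u}=\binom{a+b+1}{c}$ by a generating-function argument, which I expect to be the cleanest route. I would recall the negative binomial series $\sum_{u\ge 0}\binom{a+u}{u}z^u=(1-z)^{-(a+1)}$ and the finite expansion $\sum_{v\ge 0}\binom{b-u}{v}z^v=(1+z)^{\,b-u}$ — but to keep both factors as nonnegative-power series I would instead write $\binom{b-u}{c-u}$ as the coefficient of $z^{c-u}$ in $(1-z)^{-(\,\cdot\,)}$ is awkward, so the more symmetric choice is to use $\sum_{u\ge 0}\binom{a+u}{a}z^u=(1-z)^{-(a+1)}$ and $\sum_{w\ge 0}\binom{b-u}{w}(-1)^{?}$ — here the sign bookkeeping is the only subtlety. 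A cleaner variant: use $(1-z)^{-(a+1)}=\sum_{u\ge0}\binom{a+u}{u}z^u$ and $(1-z)^{-(b-?)}$ does not directly give $\binom{b-u}{c-u}$. So I would actually set it up via the \emph{upper-negation} identity $\binom{b-u}{c-u}=(-1)^{c-u}\binom{-(b-c+1)+(c-u)-?}{c-u}$, or — simplest of all — just prove it by induction.

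Concretely, I would proceed by induction on $b$ (with $c$ allowed to range over $0\le c\le b$), using Pascal's rule. For the base case $b=c=0$ both sides equal $1$; more generally for $c=0$ both sides equal $1$ for every $b$. For the inductive step, write $\binom{a+b+1}{c}=\binom{a+b}{c}+\binom{a+b}{c-1}$ via Pascal. The first term is, by the induction hypothesis at $b-1$ (applied with the same $c$, valid since $c\le b-1$ when $c<b$; the boundary case $c=b$ is handled separately because then $\binom{a+b}{c}$ pairs with the $u=c=b$ term), equal to $\sum_{u=0}^{c}\binom{a+u}{u}\binom{(b-1)-u}{c-u}$, and the second term, by the induction hypothesis at $b-1$ with $c-1$ in place of $c$, equals $\sum_{u=0}^{c-1}\binom{a+u}{u}\binom{(b-1)-u}{(c-1)-u}$. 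Adding these two and applying Pascal's rule \emph{inside} the sum — $\binom{(b-1)-u}{c-u}+\binom{(b-1)-u}{(c-1)-u}=\binom{b-u}{c-u}$ — collapses the combination to $\sum_{u=0}^{c}\binom{a+u}{u}\binom{b-u}{c-u}$, which is exactly the left-hand side. The boundary terms (the $u=c$ term when $c=b$, and the vanishing of $\binom{(b-1)-u}{c-u}$ when $u>b-1$) need a line of care but are routine.

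The main obstacle is purely bookkeeping: making sure the index ranges and the degenerate cases ($c=b$, or $u$ near the top of the range where one of the binomial coefficients is zero) line up so that the two invocations of the induction hypothesis and the in-sum Pascal step combine without off-by-one errors. I would therefore state at the outset the convention $\binom{m}{k}=0$ for $k<0$ or $k>m$ with $m\ge0$, so that all sums can be taken over $u=0,\dots,c$ uniformly and the boundary cases dissolve automatically. An alternative, if one prefers to avoid induction, is the combinatorial proof: count lattice-path or committee selections — choose a $c$-subset of $\{1,\dots,a+b+1\}$, condition on the position of the $(a+1)$-st chosen element from the left being $a+1+u$; this forces $u$ of the first $a+u$ "small" slots to be occupied in $\binom{a+u}{u}$ ways (actually $\binom{a+u}{a}=\binom{a+u}{u}$) and the remaining $c-1-?$ — again a small amount of care pins down the second factor as $\binom{b-u}{c-u}$ — giving the identity directly; I would include whichever of the two the referee finds more transparent.
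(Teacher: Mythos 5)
Your induction-on-$b$ argument is correct, and it is a genuinely different route from the paper's. You apply Pascal's rule twice, once to split $\binom{a+b+1}{c}=\binom{a+b}{c}+\binom{a+b}{c-1}$, invoke the induction hypothesis at $b-1$ with parameters $c$ and $c-1$, and then recombine inside the sum via $\binom{(b-1)-u}{c-u}+\binom{(b-1)-u}{(c-1)-u}=\binom{b-u}{c-u}$; the boundary case $c=b$, which you flag, indeed reduces to the hockey-stick step $\sum_{u=0}^{b-1}\binom{a+u}{u}=\binom{a+b}{b-1}$ (the induction hypothesis at $(b-1,b-1)$) plus the isolated $u=c=b$ term, so the gap you leave is routine. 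The paper instead first establishes, by a double-counting argument over a nested family of partitions of $\{1,\ldots,a+b\}$ (with an explicit correction for subsets counted at consecutive indices), the intermediate identity $\binom{a+b}{c}=\sum_{u=0}^{c}\binom{a+u}{u}\binom{b-u}{c-u}-\sum_{u=0}^{c-1}\binom{a+u}{u}\binom{b-u-1}{c-u-1}$, and then inducts on $c$, recognizing the subtracted sum as the same expression at $(b-1,c-1)$ and finishing with Pascal's rule. Your proof is shorter and more elementary, trading the paper's combinatorial bookkeeping (which occupies most of its proof and a figure) for two applications of Pascal's rule and careful index ranges; the paper's version carries some independent combinatorial content in the intermediate formula but is otherwise heavier. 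Note that your opening generating-function sketch and the closing bijective sketch are left incomplete, so the induction is the argument that actually carries your proof; it would be worth presenting it alone, with the convention $\binom{m}{k}=0$ for $k<0$ or $k>m\ge 0$ stated up front and the $c=b$ case written out explicitly.
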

\begin{proof}    
We firstly produce the following intermediate formula with a combinatorial argument:
$$\binom{a+b}{c}=\sum_{u=0}^{c} \binom{a+u}{u}\binom{b-u}{c-u}-\sum_{u=0}^{c-1} \binom{a+u}{u}\binom{b-u-1}{c-u-1}.$$
Notice that the result holds straightforwardly for values of $a=0$ or $b=0$ or $b=c$. For the remaining cases, we rely on two ways to compute the number of subsets. Consider the set $G=\{1,\ldots,a+b\}$. Then the number of all possible subsets $S \subseteq G$ of size $c$ is precisely $\binom{a+b}{c}$. On the other hand, consider the following disjoint partitions of $G$ into $G^{(u)}_1$ and $G^{(u)}_2$ for any index $u \in \{0,\ldots,c\}$:
\begin{itemize}[label={}]
    \item $G^{(0)}_1=\{1,\ldots,a\}, \qquad G^{(0)}_2=\{a+1,\ldots,a+b\}$,
    \item $G^{(1)}_1=\{1,\ldots,a+1\}, \qquad G^{(1)}_2=\{a+2,\ldots,a+b\}$,
    \item $\vdots$
    \item $G^{(u)}_1=\{1,\ldots,a+u\}, \qquad G^{(u)}_2=\{a+u+1,\ldots,a+b\}$,
    \item $\vdots$
    \item $G^{(c)}_1=\{1,\ldots,a+c\}, \qquad G^{(c)}_2=\{a+c+1,\ldots,a+b\}$.
\end{itemize}

A simple induction on $b$ can show that any subset of $G$ of size $c$ can be represented as a set $S$ which shares $u$ elements with $G^{(u)}_1$ and $c-u$ elements with $G^{(u)}_2$, for at least one index $u$. For a given value of $u$, notice that there are:
$$\binom{|G^{(u)}_1|}{u} \cdot \binom{|G^{(u)}_2|}{c-u}=\binom{a+u}{u} \cdot \binom{b-u}{c-u}$$
such sets. One can therefore sum over all indices $u$ and count the number of subsets $S$ that take $u$ elements from $G^{(u)}_1$ and $c-u$ elements with $G^{(u)}_2$. This count would include all of the $\binom{a+b}{c}$ subsets $S \subseteq G$.\\

However, this could count the same subset multiple times, so for the summation to be equal to $\binom{a+b}{c}$ one needs to subtract the subsets counted multiple times. More specifically notice that, due to the sequential nature of the partitions, a set $S$ being decomposed in the aforementioned form also takes a sequential nature: if $S$ may be written as $u'$ elements from $G^{(u')}_1$ and $c-u'$ elements from $G^{(u')}_2$ but also as $u''$ elements from $G^{(u'')}_1$ and $c-u''$ elements from $G^{(l'')}_2$, then this also holds for any other value of $u$ between $u'$ and $u''$. In particular, this means that for any fixed $u$ one may simply count all $\binom{a+u}{u} \cdot \binom{b-u}{c-u}$ subsets and then subtract the ones that will also be counted for the value $u+1$. This way, no multiple copies are included.\\

For a fixed $u$, the sets that will also be counted again at the next iteration with $u+1$, and that therefore need to be subtracted from the count, are the ones that have exactly $u$ elements in $G_1^{(u)}$ but then will also have exactly $u+1$ elements in $G_1^{(u+1)}$. This happens (only) when the element \enquote{$a+u+1$} is in the subset $S$. Therefore, such type of double-counted subset contains:
\begin{itemize}[label={-}]
    \item the element \enquote{$a+u+1$}
    \item any $u$ elements in $G_1^{(u)}$
    \item any $c-u-1$ elements in $G_2^{(u+1)}$
\end{itemize}
Figure~\ref{fig:combisumEx} shows an example to clarify.\\

\begin{figure}
    \centering
    \begin{tikzpicture}
    \foreach \i in {1,...,5}
{
 \filldraw[color=mygrey, fill=myred, very thick] (\i-1,0) rectangle (\i,1);
}
    \foreach \i in {6,...,13}
{
 \filldraw[color=mygrey, fill=myblue, very thick] (\i-1,0) rectangle (\i,1);
}

\node[below] at (9,0) {$\underbrace{\hspace{7.9cm}}$};
\node[below] at (9,-0.3) {Choose 4};
\node[below] at (2.5,0) {$\underbrace{\hspace{4.9cm}}$};
\node[below] at (2.5,-0.3) {Choose 0};

\node[above] at (9,1) {\Large $\textcolor{myblue}{\,\,\, G_2^{(0)}}$};
\node[above] at (2.5,1) {\Large $\textcolor{myred}{\,\,\,G_1^{(0)}}$};

\node[left] at (-1,0.5) {\huge $\substack{u=0}$};

\node[below] at (0,-1.5) {$\,$};
\end{tikzpicture}

    \begin{tikzpicture}
    \foreach \i in {1,...,6}
{
 \filldraw[color=mygrey, fill=myred, very thick] (\i-1,0) rectangle (\i,1);
}
    \foreach \i in {7,...,13}
{
 \filldraw[color=mygrey, fill=myblue, very thick] (\i-1,0) rectangle (\i,1);
}

\node[below] at (9.5,0) {$\underbrace{\hspace{6.9cm}}$};
\node[below] at (9.5,-0.3) {Choose 3};
\node[below] at (3,0) {$\underbrace{\hspace{5.9cm}}$};
\node[below] at (3,-0.3) {Choose 1};

\node[above] at (9.5,1) {\Large $\textcolor{myblue}{\,\,\, G_2^{(1)}}$};
\node[above] at (3,1) {\Large $\textcolor{myred}{\,\,\,G_1^{(1)}}$};

\node[left] at (-1,0.5) {\huge $\substack{u=1}$};

\node[below] at (0,-1.5) {$\,$};
\end{tikzpicture}
    \begin{tikzpicture}
    \foreach \i in {1,...,5}
{
 \filldraw[color=mygrey, fill=myred, very thick] (\i-1,0) rectangle (\i,1);
}
\filldraw[color=mygrey, fill=mypurp, very thick] (5,0) rectangle (6,1);
    \foreach \i in {7,...,13}
{
 \filldraw[color=mygrey, fill=myblue, very thick] (\i-1,0) rectangle (\i,1);
}

\node[below] at (9.5,0) {$\underbrace{\hspace{6.9cm}}$};
\node[below] at (9.5,-0.3) {Choose 3};
\node[below] at (5.5,0) {$\underbrace{\hspace{0.9cm}}$};
\node[below] at (5.5,-0.3) {Choose};
\node[below] at (2.5,0) {$\underbrace{\hspace{4.9cm}}$};
\node[below] at (2.5,-0.3) {Choose 0};

\node[above] at (9,1) {\Large $\textcolor{myblue}{\,\,\, G_2^{(1)}}$};
\node[above] at (2.5,1) {\Large $\textcolor{myred}{\,\,\,G_1^{(0)}}$};

\node[left] at (-1,0.5) {\huge $\substack{u=0\\\text{and}\\u=1}$};
\end{tikzpicture}

    \caption{Example of subsets being counted twice for $a=5,b=8,c=4$ and $u\in\{0,1\}$. A subset here is counted for both $u=0$ and $u=1$ if and only if it has no elements in $G_1^{(0)}$ and one element in $G_1^{(1)}$, four elements in $G_2^{(0)}$ and three elements in $G_2^{(1)}$. This happens when the purple element is in the set, then no elements are on its left and $3=c-1$ elements are on the right.}
    \label{fig:combisumEx}
\end{figure}
This means that for every index $u$ there are $\binom{a+u}{u}\binom{b-u-1}{c-u-1}$ subsets that need to be removed from the count because they will also be counted at the next index, i.e. for $u+1$. This quantity is not subtracted at the last index, i.e. for $u=c$, as there will be no next index to double-count the subsets. This means that we can write:
\begin{equation}
    \label{eq:intermediateComb}
    \binom{a+b}{c}=\sum_{u=0}^{c} \binom{a+u}{u}\binom{b-u}{c-u}-\sum_{u=0}^{c-1} \binom{a+u}{u}\binom{b-u-1}{c-u-1}
\end{equation}
Now that we have this formula, we can prove the Lemma by induction on $c$. For $c=0$, we have:
$$\sum_{u=0}^0\binom{a+u}{u}\binom{b-u}{c-u}=\binom{a}{0}\binom{b}{0}=1=\binom{a+b+1}{0}=\binom{a+b+1}{c}$$
So equality holds. For the induction step, we isolate the first summation in Equation \ref{eq:intermediateComb}:
$$\sum_{u=0}^{c} \binom{a+u}{u}\binom{b-u}{c-u}=\binom{a+b}{c}+\sum_{u=0}^{c-1} \binom{a+u}{u}\binom{b-u-1}{c-u-1}=\binom{a+b}{c}+\binom{a+b}{c-1}$$
The last equality holds by induction, since $\sum_{u=0}^{c-1} \binom{a+u}{u}\binom{b-u-1}{c-u-1}$ is exactly the summation we are inductively calculating, but with $b-1$ instead of $b$ and with $c-1$ instead of $c$.\\
Using Pascal's rule, we conclude that $\sum_{u=0}^{c}\binom{a+u}{u}\binom{b-u}{c-u}=\binom{a+b+1}{c}$
\end{proof}

\begin{lem}
\label{lem:uselessfeatures}
    Consider a function $F: \mathbb{R}^p \rightarrow \mathbb{R}$, and take $q < p$ such that $F(x_1,\ldots,x_p)$ only depends on $q$ of the $p$ total features, say $x_{j_1},\ldots,x_{j_q}$. Then the marginal Shapley values for $F$ may be computed by only focusing on $x_{j_1},\ldots,x_{j_q}$: if $j \in \{j_1,\ldots,j_q\}$, then:
    $$\phi_j(x^*)=\sum_{S \subseteq \{j_1,\ldots,j_q\}\setminus \{j\}}\frac{1}{q\binom{q-1}{|S|}}\Big(\mathbb{E}[F(x_1,\ldots,x_p)|\text{do}(x_j=x_j^*,x_S=x_S^*)] -\mathbb{E}[F(x_1,\ldots,x_p)|\text{do}(x_S=x_S^*)]\Big).$$
    If $j \notin \{j_1,\ldots,j_q\}$, then $\phi_j(x^*)=0$.
\end{lem}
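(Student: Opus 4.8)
The plan is to reduce the statement to a binomial identity that follows from Lemma~\ref{lem:binomialsum}. The conceptual ingredient is that the marginal (interventional) expectation $\mathbb{E}[F(x)\mid\text{do}(x_S=x_S^*)]$ — which fixes $x_k=x_k^*$ for $k\in S$ and averages $F$ over the marginal law of the remaining coordinates — cannot see any intervention on coordinates that $F$ does not use; everything then becomes a matter of counting how the weights $\tfrac{1}{p\binom{p-1}{|S|}}$ aggregate when we collapse subsets of $\{1,\dots,p\}\setminus\{j\}$ onto subsets of $J:=\{j_1,\dots,j_q\}\setminus\{j\}$.

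First I would record the structural observation: since $F(x_1,\dots,x_p)$ depends only on the coordinates in $\{j_1,\dots,j_q\}$, for any $S$ the value $\mathbb{E}[F(x)\mid\text{do}(x_S=x_S^*)]$ equals an average of $F$ in which only the coordinates of $S\cap\{j_1,\dots,j_q\}$ are pinned and only the coordinates of $\{j_1,\dots,j_q\}\setminus S$ are integrated out; hence it depends on $S$ solely through $S\cap\{j_1,\dots,j_q\}$ (and the matching fixed values). This immediately settles the case $j\notin\{j_1,\dots,j_q\}$: then $(S\cup\{j\})\cap\{j_1,\dots,j_q\}=S\cap\{j_1,\dots,j_q\}$, so $\mathbb{E}[F(x)\mid\text{do}(x_S=x_S^*,x_j=x_j^*)]=\mathbb{E}[F(x)\mid\text{do}(x_S=x_S^*)]$ for every $S$, every summand of $\phi_j(x^*)$ vanishes, and $\phi_j(x^*)=0$.

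For $j\in\{j_1,\dots,j_q\}$, put $J=\{j_1,\dots,j_q\}$ and, for $S'\subseteq J\setminus\{j\}$, define $\Delta(S')=\mathbb{E}[F(x)\mid\text{do}(x_{S'}=x_{S'}^*,x_j=x_j^*)]-\mathbb{E}[F(x)\mid\text{do}(x_{S'}=x_{S'}^*)]$. By the observation above, the summand attached to a general $S\subseteq\{1,\dots,p\}\setminus\{j\}$ is exactly $\Delta(S\cap J)$. I would then group the Shapley sum by the value $S'=S\cap J$: the sets $S$ with $S\cap J=S'$ are precisely $S=S'\sqcup T$ with $T$ ranging over subsets of the $p-q$ indices outside $J$, so there are $\binom{p-q}{t}$ of them with $|T|=t$, and then $|S|=|S'|+t$. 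Substituting turns $\phi_j(x^*)$ into $\sum_{S'\subseteq J\setminus\{j\}}\Delta(S')\sum_{t=0}^{p-q}\binom{p-q}{t}\frac{1}{p\binom{p-1}{|S'|+t}}$, so the lemma follows once we show, for $0\le s\le q-1$,
$$\sum_{t=0}^{p-q}\binom{p-q}{t}\frac{1}{p\binom{p-1}{s+t}}=\frac{1}{q\binom{q-1}{s}} .$$

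The remaining work is this identity. I would prove it from Lemma~\ref{lem:binomialsum} by expanding into factorials: writing $p-1-s-t=(q-1-s)+\big((p-q)-t\big)$, each term $\binom{p-q}{t}/\binom{p-1}{s+t}$ factors as $\frac{(p-q)!\,s!\,(q-1-s)!}{(p-1)!}\binom{s+t}{t}\binom{(q-1-s+p-q)-t}{(p-q)-t}$, and summing over $t$ and applying Lemma~\ref{lem:binomialsum} with its $a=s$, $b=q-1-s+(p-q)$, $c=p-q$ (legal since $s\le q-1$) collapses the sum to $\binom{p}{p-q}$; the factorial bookkeeping then gives $\frac{p}{q\binom{q-1}{s}}$, and dividing by $p$ finishes it. The main obstacle is exactly this bookkeeping — one has to keep the roles of $p,q,s,t$ straight so that the summand is presented in the precise shape $\binom{a+u}{u}\binom{b-u}{c-u}$ that Lemma~\ref{lem:binomialsum} expects. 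As a cross-check I would note the Beta-integral shortcut $1/\binom{n}{k}=(n+1)\int_0^1 x^k(1-x)^{n-k}\,dx$, under which the left-hand side becomes $\int_0^1 x^s(1-x)^{q-1-s}\big(x+(1-x)\big)^{p-q}\,dx=B(s+1,q-s)=\frac{1}{q\binom{q-1}{s}}$, but I would keep the combinatorial argument in the text so the proof stays self-contained with Lemma~\ref{lem:binomialsum}.
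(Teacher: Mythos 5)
Your proof is correct, but it takes a different route from the paper's. The paper reduces to the case of removing a single irrelevant coordinate ($q=p-1$): it pairs each subset $S\not\ni p$ with $S\cup\{p\}$, uses $\Delta\mathbb{E}_{S\cup\{p\}}=\Delta\mathbb{E}_S$, and combines the two weights by an elementary factorial computation, $\tfrac{1}{p\binom{p-1}{|S|}}+\tfrac{1}{p\binom{p-1}{|S|+1}}=\tfrac{1}{(p-1)\binom{p-2}{|S|}}$, then iterates to strip off the irrelevant features one at a time. You instead collapse all $p-q$ irrelevant coordinates in one pass, grouping subsets by $S'=S\cap\{j_1,\dots,j_q\}$ (there are $\binom{p-q}{t}$ sets $S$ with $|S|=|S'|+t$ over each $S'$) and reducing the lemma to the weight identity $\sum_{t=0}^{p-q}\binom{p-q}{t}\tfrac{1}{p\binom{p-1}{s+t}}=\tfrac{1}{q\binom{q-1}{s}}$, which you verify correctly via Lemma~\ref{lem:binomialsum} with $a=s$, $b=p-1-s$, $c=p-q$ (your factorization of $\binom{p-q}{t}/\binom{p-1}{s+t}$ and the final bookkeeping check out, including the hypothesis $c\le b$, i.e.\ $s\le q-1$), with the Beta-integral computation as an independent confirmation. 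Your treatment of the case $j\notin\{j_1,\dots,j_q\}$ — every contrast vanishes because the interventional expectation depends on $S$ only through $S\cap\{j_1,\dots,j_q\}$ — is the same observation the paper makes. The trade-off: your argument is single-pass and yields the general-$q$ weight collapse explicitly, at the cost of invoking the Vandermonde-type Lemma~\ref{lem:binomialsum}; the paper's argument needs only Pascal-level arithmetic per step but relies on iterating the one-feature-removal statement, which is slightly less self-contained as written (the iteration is asserted rather than formalized as an induction).
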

\begin{proof}
Without loss of generality, we may assume that $j_1=1,\ldots,j_q=q$. Furthermore, it suffices to prove this Lemma for $q=p-1$. If we then iterate the argument multiple times and remove all non-contributing features one by one, the argument is generally proven.\\
Let us also use the notation:
    $$\Delta\mathbb{E}_S:=\mathbb{E}[F(x_1,\dots,x_p)|\text{do}(x_j=x_j^*,x_S=x_S^*)]-\mathbb{E}[F(x_1,\dots,x_p)|\text{do}(x_S=x_S^*)].$$
Note that $\phi_p(x^*)=0$ is trivial, since the independence of $F$ from $x_p$ means that $\Delta \mathbb{E}_S=0 \forall S$.\\
For the remaining cases, let us assume that we need to compute the Shapley value for the first feature, for simplicity.\\
Now we can write:
\begin{align*}
    \phi_1(x^*) &=\sum_{S \subseteq \{2,\ldots,p\}}\frac{1}{p\binom{p-1}{|S|}}\Delta\mathbb{E}_S\\
    &=\sum_{\substack{S \subseteq \{2,\ldots,p\}\\ S \not\ni p}}\frac{1}{p\binom{p-1}{|S|}}\Delta\mathbb{E}_S+\sum_{\substack{S \subseteq \{2,\ldots,p\}\\ S \ni p}}\frac{1}{p\binom{p-1}{|S|}}\Delta\mathbb{E}_S
\end{align*}
    Now let us write the subsets $S \subseteq \{2,\ldots,p\}$ containing $p$ as $S=Z \cup \{p\}$. For these subsets, $|S|=1+|Z|$. Furthermore, since $F$ does not depend on $x_p$, we know that $\Delta\mathbb{E}_S=\Delta\mathbb{E}_Z$.
    $$\phi_1(x^*)=\sum_{S \subseteq \{2,\ldots,p-1\}}\frac{1}{p\binom{p-1}{|S|}}\Delta\mathbb{E}_S+\sum_{Z \subseteq \{2,\ldots,p-1\}}\frac{1}{p\binom{p-1}{1+|Z|}}\Delta\mathbb{E}_Z.$$
    Now both summations are over the same subsets, so we can join them:
    \begin{align*}
        \phi_j(x^*)&=\sum_{S \subseteq \{2,\ldots,p-1\}}\frac{1}{p}\Big(\frac{1}{\binom{p-1}{|S|}}+\frac{1}{\binom{p-1}{1+|S|}}\Big)\Delta\mathbb{E}_S\\
        &=\sum_{S \subseteq \{2,\ldots,p-1\}}\frac{1}{p!}\Big(|S|!(p-1-|S|)!+(|S|+1)!(p-2-|S|)!\Big)\Delta\mathbb{E}_S\\
        &=\sum_{S \subseteq \{2,\ldots,p-1\}}\frac{|S|!(p-2-|S|)!}{p!}\Big((p-1-|S|)+(|S|+1)\Big)\Delta\mathbb{E}_S\\
        &=\sum_{S \subseteq \{2,\ldots,p-1\}}\frac{|S|!(p-2-|S|)! \cdot p}{p!}\Delta\mathbb{E}_S\\
        &=\sum_{S \subseteq \{2,\ldots,p-1\}}\frac{|S|!(p-2-|S|)!}{(p-1)!}\Delta\mathbb{E}_S\\
        &=\sum_{S \subseteq \{2,\ldots,p-1\}}\frac{1}{(p-1)\binom{p-2}{|S|}}\Delta\mathbb{E}_S.
    \end{align*}
    This concludes the proof, as the last step is precisely the formula for Shapley values for the features $x_1,\ldots,x_{p-1}$.
\end{proof}

\begin{thm*}
Assume to have a dataset $\mathcal{T}$ of size $n$. Consider a 0-1 coded rule decomposed as the product of single conditions and thus of the form  $r(x_1,\ldots,x_p)=\prod_{k=1}^pR_k(x_k)$, with ${R_k:\mathbb{R} \rightarrow \{0,1\}}$. Consider any real-valued function ${\varphi: \mathbb{R} \rightarrow \mathbb{R}}$. Given $j,\ell \in \{1,\ldots,p\}$ and a datapoint $x^*$, the contribution of the $j$-th feature to the prediction $\varphi(x_\ell^*)r(x^*)$ as defined by marginal Shapley values is unbiasedly estimated by:
$$\widehat{\phi}_j(x^*)=\frac{1}{n}\sum_{\substack{t \in \mathcal{T} \text{s.t.}\\ R_k(t_k)=1 \vee R_k(x_k^*)=1 \,\forall k}}\frac{\big(R_j(x_j^*)-R_j(t_j)\big)\cdot \Big(\frac{\varphi(x_\ell^*)R_\ell(x_\ell^*)}{p-1-q(x^*)+R_\ell(x^*_\ell)+R_j(x^*_j)}+\frac{\varphi(t_\ell)R_l(t_\ell)}{p-1-q(t)+R_\ell(t_\ell)+R_j(t_j)}\Big)}{\binom{2p-q(x^*)-q(t)-2+R_\ell(x^*_\ell)+R_j(x^*_j)+R_\ell(t_\ell)+R_j(t_j)}{p-1-q(x^*)+R_\ell(x^*_\ell)+R_j(x^*_j)}}$$
if $j \neq \ell$ and by:
$$\widehat{\phi}_j(x^*)=\frac{1}{n(p-q(x^*)+R_j(x^*_j))}\sum_{\substack{t \in \mathcal{T} \text{s.t.}\\ R_k(t_k)=1 \vee R_k(x^*_k)=1 \,\forall k}}\frac{\varphi(x_j^*)R_j(x_j^*)-\varphi(t_j)R_j(t_j)}{\binom{2p-q(x^*)-q(t)-1+R_j(x^*_j)+R_j(t_j)}{p-q(x^*)+R_j(x^*_j)}}$$
if $j=\ell$, where $q: \mathbb{R} \rightarrow \mathbb{N}$ is defined as $q(x)=\sum_{j=1}^pR_j(x_j)$ and $\vee$ is the logical \enquote{or} operator.
\end{thm*}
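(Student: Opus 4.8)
The plan is to substitute plug-in (sample-mean) estimates of the interventional expectations into the Shapley formula of Equation~\ref{eq:shapleyformula}, to exploit the product structure of the term $F(x):=\varphi(x_\ell)\prod_{k=1}^pR_k(x_k)$ in order to see which contributions vanish, to interchange the sum over coalitions $S$ with the sum over data points $t\in\mathcal T$, and finally to collapse the resulting inner sum over coalition \emph{sizes} using the Vandermonde-type identity of Lemma~\ref{lem:binomialsum}. Unbiasedness is then automatic: $\phi_j(x^*)$ is a fixed linear functional of the quantities $\mathbb E[F(x)\mid\mathrm{do}(x_S=x_S^*)]$, each of which is replaced by the sample mean $\tfrac1n\sum_{t\in\mathcal T}F(x^*_S\cup t_{\overline S})$, where $x^*_S\cup t_{\overline S}$ denotes the hybrid point with $k$-th coordinate $x_k^*$ if $k\in S$ and $t_k$ otherwise and $\overline S=\{1,\dots,p\}\setminus S$; by linearity of expectation the resulting statistic is unbiased for $\phi_j(x^*)$.

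First I would fix $j$ and write $\widehat{\mathbb E}_S=\tfrac1n\sum_{t\in\mathcal T}F(x^*_S\cup t_{\overline S})$, so that $\widehat\phi_j(x^*)=\sum_{S\subseteq\{1,\dots,p\}\setminus\{j\}}\tfrac{1}{p\binom{p-1}{|S|}}\bigl(\widehat{\mathbb E}_{S\cup\{j\}}-\widehat{\mathbb E}_S\bigr)$. Since $F$ is a product, $F(x^*_S\cup t_{\overline S})$ is zero unless $R_k(x_k^*)=1$ for every $k\in S$ and $R_k(t_k)=1$ for every $k\notin S$; setting $A_t=\{k:R_k(x_k^*)=1\}$ and $B_t=\{k:R_k(t_k)=1\}$, a nonzero contribution requires $\overline{B_t}\subseteq S\subseteq A_t$, which is possible only when $A_t\cup B_t=\{1,\dots,p\}$, i.e. when $R_k(t_k)=1\vee R_k(x_k^*)=1$ for all $k$ --- precisely the restriction on the $t$-sum in the statement. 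I would then swap the two summations, fixing such a compatible $t$ and summing over the admissible coalitions $S$. Comparing the two terms of the contrast in the four cases for $(R_j(x_j^*),R_j(t_j))$ shows the net effect is the factor $R_j(x_j^*)-R_j(t_j)$ (the ``both one'' case gives equal hybrid evaluations when $j\ne\ell$, hence a zero contrast; the ``both zero'' case is impossible for a compatible $t$), while the $\varphi$-factor equals $\varphi(x_\ell^*)$ on coalitions with $\ell\in S$ and $\varphi(t_\ell)$ otherwise, which by construction of $F$ amounts to carrying the combined factors $\varphi(x_\ell^*)R_\ell(x_\ell^*)$ and $\varphi(t_\ell)R_\ell(t_\ell)$.

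For the inner sum over admissible $S$ with $t$ fixed, I would parametrize $S$ by its size: $S$ is the union of a forced block $\overline{B_t\cup\{j\}}$ with a free choice of $w$ indices from $(A_t\cap B_t)\setminus\{j\}$, so the number of admissible $S$ of a given size is a single binomial coefficient $\binom{|\text{free set}|}{w}$, and when $j\ne\ell$ I would further split this free choice according to whether it contains $\ell$. Writing $\tfrac1{\binom{p-1}{|S|}}=\tfrac{|S|!(p-1-|S|)!}{(p-1)!}$ and pairing the two factorials with the binomial that counts the admissible $S$ turns each piece of the inner sum into $\sum_w\binom{s_0+w}{w}\binom{(p-1-s_0)-w}{c-w}$ times an explicit factorial prefactor, and Lemma~\ref{lem:binomialsum} collapses this to $\binom{p}{c}$. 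Cancelling the prefactor against $\binom pc$ reduces each piece to a single ratio of factorials, which after the bookkeeping described below is exactly the claimed reciprocal binomial, with the outer normalisations $p-q(x^*)+R_j(x_j^*)$ (respectively $p-1-q(x^*)+R_\ell(x_\ell^*)+R_j(x_j^*)$ and $p-1-q(t)+R_\ell(t_\ell)+R_j(t_j)$ in the case $j\ne\ell$) emerging as the $t$-independent factors that can be pulled outside the $t$-sum.

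The main obstacle is the last step: the sizes of the forced block and the free set, and hence the arguments of the binomial coefficients produced by Lemma~\ref{lem:binomialsum}, depend on a case analysis according to whether each of $R_j(x_j^*),R_j(t_j),R_\ell(x_\ell^*),R_\ell(t_\ell)$ is $0$ or $1$ and whether $j,\ell$ lie in $A_t\cap B_t$, in $A_t\setminus B_t$, or in $B_t\setminus A_t$; the terms $+R_j(\cdot)$ and $+R_\ell(\cdot)$ in the final formula are precisely these corrections, and the identities needed to bring the binomial arguments into the stated form (e.g.\ that $p-1-s_0-c$ and $p-c$ take the required values, which boils down to $(1-R_j(x_j^*))(1-R_j(t_j))=0$) hold \emph{because} $t$ is compatible. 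Thus compatibility is used twice: once to restrict the range of the $t$-sum and once to simplify the surviving terms. One cannot shortcut this by discarding irrelevant coordinates via Lemma~\ref{lem:uselessfeatures}, since $\varphi$ genuinely depends on the $\ell$-th coordinate; instead the corrections should be carried symbolically and the degenerate sub-cases (empty free set, or $j$ or $\ell$ not free) checked separately, where the binomial sum is trivial. Finally, the case $j=\ell$ needs no $\ell$-split and is exactly the computation behind Theorem~\ref{thm:ourFormula} with the $0$--$1$ factor $R_j$ replaced by the real-valued factor $\varphi\,R_j$, yielding the second displayed formula.
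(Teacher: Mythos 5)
Your proposal follows essentially the same route as the paper's own proof: plug in sample means for the interventional expectations (unbiasedness by linearity), use the product structure to restrict to compatible datapoints $t$, swap the coalition and data sums, split according to whether $\ell$ is intervened when $j\neq\ell$, collapse the resulting sum over coalition sizes with Lemma~\ref{lem:binomialsum}, and treat $j=\ell$ as the Theorem~\ref{thm:ourFormula} computation with $\varphi\, R_j$ in place of $R_j$. One small wording slip: the normalisation $p-1-q(t)+R_\ell(t_\ell)+R_j(t_j)$ is $t$-dependent and must remain inside the data sum (as in the stated formula), so it cannot be \enquote{pulled outside}; this does not affect the substance of the argument.
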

\begin{proof}
For simplicity of notation, let us assume that $\ell=1$.\\

Let us first cover the case $j=\ell=1$ and write $S':=\{2,\ldots,p\}\setminus S$, for any subset $S \subseteq \{2,\ldots,p\}$. For any subset of active players $S$, we have:
\begin{align*}
    \mathbb{E}[\varphi(x_1)r(x)|\text{do}(x_S=x_S^*)]&=\mathbb{E}[\varphi(x_1)\prod_{k=1}^pR_k(x_k)|\text{do}(x_S=x_S^*)]\\
    &=\prod_{k\in S}R_k(x^*_k)\cdot\mathbb{E}[\varphi(x_1)\prod_{k\in \{1,\ldots,p\} \setminus S}R_k(x_k)|\text{do}(x_S=x_S^*)]\\
    &=\prod_{k\in S}R_k(x^*_k)\cdot\mathbb{E}[\varphi(x_1)\prod_{k\in \{1,\ldots,p\} \setminus S}R_k(x_k)].
\end{align*}
Analogously, we have:
$$\mathbb{E}[\varphi(x_1)r(x)|\text{do}(x_S=x_S^*,x_1=x_1^*)]=\varphi(x_1^*)\prod_{k\in S\cup\{1\}}R_k(x^*_k)\cdot \mathbb{E}[\prod_{k\in S'}R_k(x_k)]$$
We write these products more compactly by grouping up the indices:
define $R_S(x_S):=\prod_{k\in S}R_k(x_k)$ and, consequently, $R_{S'}(x_{S'}):=\prod_{k\in S'}R_k(x_k)$. Then $\phi_1(x^*)$ may be re-written as:
\begin{align*}
\phi_1(x^*)&=\sum_{S \subseteq \{2,\ldots,p\}}w(S)\Big(\mathbb{E}[\varphi(x_1)r(x)|\text{do}(x_S=x_S^*,x_1=x_1^*)]-\mathbb{E}[\varphi(x_1)r(x)|\text{do}(x_S=x_S^*)]\Big)\\
& \!\begin{multlined}[t]
     = \sum_{S \subseteq \{2,\ldots,p\}}w(S)\mathbb{E}[\varphi(x_1)r(x)|\text{do}(x_S=x_S^*,x_1=x_1^*)]\\
     -\sum_{S \subseteq \{2,\ldots,p\}}w(S)\mathbb{E}[\varphi(x_1)r(x)|\text{do}(x_S=x_S^*)]
     \end{multlined}\\
& \!\begin{multlined}[t]
     = \sum_{S \subseteq \{2,\ldots,p\}}w(S) \varphi(x_1^*)R_1(x_1^*)R_S(x_S^*)\mathbb{E}[R_{S'}(x_{S'})]\\
     -\sum_{S \subseteq \{2,\ldots,p\}}w(S) R_S(x_S^*)\mathbb{E}[\varphi(x_1)R_{S'\cup\{1\}}(x_{S'\cup\{1\}})].
     \end{multlined}
\end{align*}
Let us treat the two summations separately; define the following:
$$\mathcal{A}=\sum_{S \subseteq \{2,\ldots,p\}}w(S) \varphi(x_1^*)R_1(x_1^*)R_S(x_S^*)\mathbb{E}[R_{S'}(x_{S'})],$$
$$\mathcal{B}=\sum_{S \subseteq \{2,\ldots,p\}}w(S) R_S(x_S^*)\mathbb{E}[\varphi(x_1)R_{S'\cup\{1\}}(x_{S'\cup\{1\}})].$$
Note that the expectations that appear in these formulas may be unbiasedly estimated by their sample means over the set $\mathcal{T}$. Let us focus on the first summation $\mathcal{A}$, which is therefore approximated by:
\begin{align*}
\widehat{\mathcal{A}}&=\sum_{S \not \ni 1}w(S)\varphi(x_1^*) R_1(x_1^*)R_S(x_S^*)\widehat{\mathbb{E}}[R_{S'}(x_{S'})]\\
&=\sum_{S \not \ni 1}w(S) \varphi(x_1^*) R_1(x_1^*)R_S(x_S^*)\frac{1}{n}\sum_{t \in \mathcal{T}}R_{S'}(t_{S'})\\
&=\varphi(x_1^*) R_1(x_1^*) \cdot \frac{1}{n}\sum_{t \in \mathcal{T}}\sum_{S \not \ni 1}w(S) R_S(x_S^*)R_{S'}(t_{S'}).
\end{align*}
Now let us focus on $\mathcal{C}:=\sum_{S \not \ni 1}w(S) R_{S}(x_{S}^*)R_{S'}(t_{S'})$, for a fixed datapoint $t$. For any datapoint x, define the following:
$$\Omega_1(x):=\{k \in \{2,\ldots,p\} \big| R_k(x_k)=1\}, \qquad \qquad q_1(x):=|\Omega_1(x)|=\sum_{k=2}^pR_k(x_k)=q(x)-R_1(x_1).$$
By definition of $\Omega_1$, we have:
\begin{align*}
    R_S(x_S^*)R_{S'}(t_{S'})\neq 0 &\iff \begin{cases}
    R_S(x_S^*)=1\\
    R_{S'}(t_{S'})=1
\end{cases} \\
&\iff
\begin{cases}
    S \subseteq \Omega_1(x^*)\\
    S' \subseteq \Omega_1(t)
\end{cases}\\
&\iff
\begin{cases}
    S \subseteq \Omega_1(x^*)\\
    \Omega_1(t)' \subseteq S
\end{cases},
\end{align*}
where $\Omega_1(t)'$ is meant as the complementary set of $\Omega_1(t)$ with respect to $\{2,\ldots,p\}$. This means that $w(S) R_{S}(x_{S}^*)R_{S'}(z_{S'})$ only gives a non-zero effect for the datapoints $t$ such that $\Omega_1(t)' \subseteq \Omega_1(x^*)$, in which case the (only) subsets $S$ that contribute are the ones such that $\Omega_1(t)' \subseteq S \subseteq \Omega_1(x^*)$. Since all such sets $S$ contain $\Omega_1(t)'$, they can all be uniquely identified by the indices that they have \textit{besides} those in $\Omega_1(t)'$. In other words, each $S$ may be (uniquely) re-written as $S=\Omega_1(t)' \cup Z$, with $Z \subseteq \Omega_1(x^*) \setminus \Omega_1(t)'$. For every size $|Z|=m$, there are exactly $\binom{|\Omega_1(x^*)\setminus\Omega_1(t)'|}{m}=\binom{q_1(x^*)+q_1(t)-p+1}{m}$ possible choices of $Z$, and they all have the same contribution $w(S)=\frac{1}{p\binom{p-1}{|S|}}=\frac{1}{p\binom{p-1}{p-1-q_1(t)+l}}$.\\
This means that we can write:
\begin{align*}
\mathcal{C}&=\sum_{S \not \ni 1}w(S) R_S(x_S^*)R_{S'}(t_{S'})\\
&=\sum_{m=0}^{|\Omega_1(x^*)\setminus\Omega_1(t)|}\frac{\binom{q_1(x^*)+q_1(t)-p+1}{m}}{p\binom{p-1}{m+p-1-q_1(t)}}\\
&=\sum_{m=0}^{q_1(x^*)+q_1(t)-p+1}\frac{\binom{q_1(x^*)+q_1(t)-p+1}{m}}{p\binom{p-1}{m+p-1-q_1(t)}}\\
& \!\begin{multlined}[t]
     =\frac{(q_1(x^*)+q_1(t)-p+1)!}{p!}\\
     \cdot \sum_{m=0}^{q_1(x^*)+q_1(t)-p+1}\frac{(q_1(t)-m)!(p-1-q_1(t)+m)!}{(q_1(x^*)+q_1(t)-p+1-m)!m!}
     \end{multlined}\\
& \!\begin{multlined}[t]
     =\frac{(q_1(x^*)+q_1(t)-p+1)!}{p!}\\
     \cdot \sum_{m=0}^{q_1(x^*)+q_1(t)-p+1}\Bigg[\binom{q_1(t)-m}{q_1(x^*)+q_1(t)-p+1-m}(p-1-q_1(x^*))!\\
     \cdot \binom{p-1-q_1(t)+m}{m}(p-1-q_1(t))!\Bigg]
     \end{multlined}\\
& \!\begin{multlined}[t]
     =\frac{(q_1(x^*)+q_1(t)-p+1)!(p-1-q_1(t))!(p-1-q_1(x^*))!}{p!}\\
     \cdot \sum_{m=0}^{q_1(x^*)+q_1(t)-p+1}\binom{q_1(t)-m}{q_1(x^*)+q_1(t)-p+1-m}\binom{p-1-q_1(t)+m}{m}.
     \end{multlined}
\end{align*}

Using Lemma~\ref{lem:binomialsum} with $a=p-1-q_1(t),b=q_1(t),c=q_1(x^*)+q_1(t)-p+1$, we conclude:

\begin{align*}
\mathcal{C}&=\sum_{S \not \ni 1}w(S) R_S(x_S^*)R_{S'}(t_{S'})\\
& \!\begin{multlined}[t]
     =\frac{(q_1(x^*)+q_1(t)-p+1)!(p-1-q_1(t))!(p-1-q_1(x^*))!}{p!}\\
     \cdot \binom{p}{q_1(x^*)+q_1(t)-p+1}
     \end{multlined}\\
&=\frac{(p-1-q_1(t))!(p-1-q_1(x^*))!}{(2p-q_1(x^*)-q_1(t)-1)!}\\
&=\frac{1}{(p-q_1(x^*))}\cdot \frac{1}{\binom{2p-q_1(x^*)-q_1(t)-1}{p-q_1(x^*)}}.
\end{align*}

This allows us to conclude that:
$$\widehat{\mathcal{A}}=\varphi(x_1^*)R_1(x_1^*) \cdot \frac{1}{n(p-q_1(x^*))}\sum_{\substack{t \in \mathcal{T} \text{ s.t.}\\ \Omega_1(t)' \subseteq \Omega_1(x^*)}}\frac{1}{\binom{2p-q_1(x^*)-q_1(t)-1}{p-q_1(x^*)}}.$$
The sum is only over the datapoints with $\Omega_1(t)' \subseteq \Omega_1(x^*)$, as the other datapoints have a null effect. In a similar fashion, let us compute $\widehat{\mathcal{B}}$:
\begin{align*}
    \widehat{\mathcal{B}}&=\sum_{S \not \ni 1}w(S) R_S(x_S^*)\mathbb{E}[\varphi(x_1)R_{S'\cup\{p\}}(x_{S'\cup\{p\}})]\\
    &=\sum_{S \not \ni 1}w(S) R_S(x_S^*)\frac{1}{n}\sum_{t \in \mathcal{T}}\varphi(t_1)R_{S'\cup\{p\}}(t_{S'\cup\{p\}})\\
    &=\sum_{S \not \ni 1}w(S) R_S(x_S^*)\frac{1}{n}\sum_{t \in \mathcal{T}}R_S(x_S^*)\varphi(t_1)R_{S'}(t_{S'})R_1(t_1)\\
    &=\frac{1}{n}\sum_{t \in \mathcal{T}}\varphi(t_1)R_1(t_1)\sum_{S \not \ni 1}w(S) R_S(x_S^*)R_{S'}(t_{S'}).
\end{align*}
As we can see, the expression now contains the exact same summation as before, which immediately allows us to conclude that:
$$\widehat{\mathcal{B}}=\frac{1}{n(p-q_1(x^*))}\sum_{\substack{t \in \mathcal{T} \text{ s.t.}\\ \Omega_1(t)' \subseteq \Omega_1(x^*)}}\frac{\varphi(t_1)R_1(t_1)}{\binom{2p-q_1(x^*)-q_1(t)-1}{p-q_1(x^*)}}.$$
Combining the two expressions together gives us the formula:
$$\widehat{\phi_1}(x^*)=\widehat{\mathcal{A}}-\widehat{\mathcal{B}}=\frac{1}{n(p-q_1(x^*))}\sum_{\substack{t \in \mathcal{T} \text{ s.t.}\\ \Omega_1(t)' \subseteq \Omega_1(x^*)}}\frac{\varphi(x_1^*)R_1(x_1^*)-\varphi(t_1)R_1(t_1)}{\binom{2p-q_1(x^*)-q_1(t)-1}{p-q_1(x^*)}}.$$
To obtain the formula stated in the Theorem, notice that the condition $\Omega_1(t)' \subseteq \Omega_1(x^*)$ in the summation may be replaced with the condition $R_k(t_k)=1 \vee R_k(x_k^*)=1 \,\forall k$: the only datapoints for which the two conditions differ are the points for which $R_1(t_1)=R_1(x^*_1)=0$. For such datapoints, the summand is null anyway and therefore does not affect the formula.\\

Now let us consider the case $j \neq \ell = 1$. For simplicity, let's assume $j=p$. For any subset $V \subseteq \{2,\ldots,p-1\}$, define $V''$ as $\{2,\ldots,p-1\} \setminus V$. By separating the cases $1 \notin S$ from the cases $1 \in S$ (which therefore is of the form $\{1\} \cup V$ for $V \not\ni 1$), Shapley values may be re-written as:
\begin{align*}
\hspace*{-4cm}\phi_p(x^*)&=\sum_{S \subseteq \{1,\ldots,p-1\}}w(S)\Big(\mathbb{E}[\varphi(x_1)r(x)|\text{do}(x_S=x_S^*,x_p=x_p^*)]-\mathbb{E}[\varphi(x_1)r(x)|\text{do}(x_S=x_S^*)]\Big)\\
& = \sum_{V \subseteq \{2,\ldots,p-1\}}w(V\cup\{1\})\Big(\mathbb{E}[\varphi(x_1)r(x)|\text{do}(x_V=x_V^*,x_1=x_1^*,x_p=x_p^*)]-\mathbb{E}[\varphi(x_1)r(x)|\text{do}(x_V=x_V^*,x_1=x_1^*)]\Big)\\
     &\qquad+\sum_{V \subseteq \{2,\ldots,p-1\}}w(V)\Big(\mathbb{E}[\varphi(x_1)r(x)|\text{do}(x_V=x_V^*,x_p=x_p^*)]-\mathbb{E}[\varphi(x_1)r(x)|\text{do}(x_V=x_V^*)]\Big) \\
& \!\begin{multlined}[t]
     = \sum_{V \subseteq \{2,\ldots,p-1\}}w(V\cup\{1\})\mathbb{E}[\varphi(x_1)r(x)|\text{do}(x_V=x_V^*,x_1=x_1^*,x_p=x_p^*)]\\
     - \sum_{V \subseteq \{2,\ldots,p-1\}}w(V\cup\{1\})\mathbb{E}[\varphi(x_1)r(x)|\text{do}(x_V=x_V^*,x_1=x_1^*)]\\
     + \sum_{V \subseteq \{2,\ldots,p-1\}}w(V)\mathbb{E}[\varphi(x_1)r(x)|\text{do}(x_V=x_V^*,x_p=x_p^*)]\\
     - \sum_{V \subseteq \{2,\ldots,p-1\}}w(V)\mathbb{E}[\varphi(x_1)r(x)|\text{do}(x_V=x_V^*)]\\
     \end{multlined}\\
& \!\begin{multlined}[t]
     = \sum_{V \subseteq \{2,\ldots,p-1\}}w(V\cup\{1\})\varphi(x_1^*)R_1(x_1^*)R_V(x_V^*)R_p(x_p^*)\mathbb{E}[R_{V''}(x_{V''})] \\
     \qquad - \sum_{V \subseteq \{2,\ldots,p-1\}}w(V\cup\{1\})\varphi(x_1^*)R_1(x_1^*)R_V(x_V^*)\mathbb{E}[R_{V''\cup\{p\}}(x_{V''\cup\{p\}})]\\
     + \sum_{V \subseteq \{2,\ldots,p-1\}}w(V)R_V(x_V^*)R_p(x_p^*)\mathbb{E}[\varphi(x_1)R_{V''\cup\{1\}}(x_{V''\cup\{1\}})] \\
     - \sum_{V \subseteq \{2,\ldots,p-1\}}w(V)R_V(x_V^*)\mathbb{E}[\varphi(x_1)R_{V''\cup\{1,p\}}(x_{V''\cup\{1,p\}})]\\
     \end{multlined}\\
\end{align*}
Let us treat the four summations separately; define the following:
$$\mathcal{D}=\sum_{V \subseteq \{2,\ldots,p-1\}}w(V\cup\{1\})\varphi(x_1^*)R_1(x_1^*)R_V(x_V^*)R_p(x_p^*)\mathbb{E}[R_{V''}(x_{V''})],$$
$$\mathcal{E}=\sum_{V \subseteq \{2,\ldots,p-1\}}w(V\cup\{1\})\varphi(x_1^*)R_1(x_1^*)R_V(x_V^*)\mathbb{E}[R_{V''\cup\{p\}}(x_{V''\cup\{p\}})],$$
$$\mathcal{F}=\sum_{V \subseteq \{2,\ldots,p-1\}}w(V)R_V(x_V^*)R_p(x_p^*)\mathbb{E}[\varphi(x_1)R_{V''\cup\{1\}}(x_{V''\cup\{1\}})],$$
$$\mathcal{G}=\sum_{V \subseteq \{2,\ldots,p-1\}}w(V)R_V(x_V^*)\mathbb{E}[\varphi(x_1)R_{V''\cup\{1,p\}}(x_{V''\cup\{1,p\}})].$$

With a similar argument as above, they can be approximated without bias by:
\begin{align*}
    \widehat{\mathcal{D}} &= \sum_{V \subseteq \{2,\ldots,p-1\}}w(V\cup\{1\})\varphi(x_1^*)R_1(x_1^*)R_V(x_V^*)R_p(x_p^*)\frac{1}{n}\sum_{t \in \mathcal{T}}R_{V''}(t_{V''})\\
    &= \frac{1}{n}\sum_{t \in \mathcal{T}}x_1^*R_1(x_1^*)R_p(x_p^*)\sum_{V \subseteq \{2,\ldots,p-1\}}w(V\cup\{1\})R_V(x_V^*)R_{V''}(t_{V''})\\
\end{align*}
\begin{align*}
    \widehat{\mathcal{E}} &= \sum_{V \subseteq \{2,\ldots,p-1\}}w(V\cup\{1\})\varphi(x_1^*)R_1(x_1^*)R_V(x_V^*)\frac{1}{n}\sum_{t \in \mathcal{T}}R_{V''\cup\{p\}}(t_{V''\cup\{p\}})\\
    &= \frac{1}{n}\sum_{t \in \mathcal{T}}x_1^*R_1(x_1^*)R_p(t_p)\sum_{V \subseteq \{2,\ldots,p-1\}}w(V\cup\{1\})R_V(x_V^*)R_{V''}(t_{V''})\\
\end{align*}
\begin{align*}
    \widehat{\mathcal{F}} &= \sum_{V \subseteq \{2,\ldots,p-1\}}w(V)R_V(x_V^*)R_p(x_p^*)\frac{1}{n}\sum_{t \in \mathcal{T}}\varphi(t_1)R_{V''\cup\{1\}}(t_{V''\cup\{1\}})\\
    &= \frac{1}{n}\sum_{t \in \mathcal{T}}\varphi(t_1)R_1(t_1)R_p(x_p^*)\sum_{V \subseteq \{2,\ldots,p-1\}}w(V)R_V(x_V^*)R_{V''}(t_{V''})\\
\end{align*}
\begin{align*}
    \widehat{\mathcal{G}} &= \sum_{V \subseteq \{2,\ldots,p-1\}}w(V)R_V(x_V^*)\frac{1}{n}\sum_{t \in \mathcal{T}}\varphi(t_1)R_{V''\cup\{1,p\}}(t_{V''\cup\{1,p\}})\\
    &= \frac{1}{n}\sum_{t \in \mathcal{T}}\varphi(t_1)R_1(t_1)R_p(t_p)\sum_{V \subseteq \{2,\ldots,p-1\}}w(V)R_V(x_V^*)R_{V''}(t_{V''})\\
\end{align*}

In order to compute these, we re-write the two summations over the sets $V$ that appear above: for any datapoint $x$, define the following:
$$\Omega_{1,p}(x):=\{j \in \{2,\ldots,p-1\} \big| R_j(x)=1\}, \qquad \qquad q_{1,p}(x)=|\Omega_{1,p}(x)|=\sum_{k=2}^{p-1}R_k(x_k).$$
Then the terms $R_V(x_V^*)R_{V''}(t_{V''})$ are again non-zero exactly when $\Omega_{1,p}(x^*)$ contains $\Omega_{1,p}(t)''$, and only for the sets in between these two, so:
\begin{align*}
    \mathcal{H}&:=\sum_{V \subseteq \{2,\ldots,p-1\}}w(V\cup\{1\})R_V(x_V^*)R_{V''}(t_{V''})\\
    &=\sum_{m=0}^{|\Omega_{1,p}(x^*)\setminus\Omega_{1,p}(t)''|}\frac{\binom{q_{1,p}(x^*)+q_{1,p}(t)-p+2}{m}}{p\binom{p-1}{m+p-1-q_{1,p}(t)}}\\
&=\sum_{m=0}^{q_{1,p}(x^*)+q_{1,p}(t)-p+2}\frac{\binom{q_{1,p}(x^*)+q_{1,p}(t)-p+2}{m}}{p\binom{p-1}{m+p-1-q_{1,p}(t)}}\\
& \!\begin{multlined}[t]
     =\frac{(q_{1,p}(x^*)+q_{1,p}(t)-p+2)!}{p!}\\
     \cdot \sum_{m=0}^{q_{1,p}(x^*)+q_{1,p}(t)-p+2}\frac{(q_{1,p}(t)-m)!(p-1-q_{1,p}(t)+m)!}{(q_{1,p}(x^*)+q_{1,p}(t)-p+2-m)!m!}
     \end{multlined}\\
& \!\begin{multlined}[t]
     =\frac{(q_{1,p}(x^*)+q_{1,p}(t)-p+2)!}{p!}\\
     \cdot \sum_{m=0}^{q_{1,p}(x^*)+q_{1,p}(t)-p+2}\Bigg[\binom{q_{1,p}(t)-m}{q_{1,p}(x^*)+q_{1,p}(t)-p+2-m}(p-2-q_{1,p}(x^*))!\\
     \cdot \binom{p-1-q_{1,p}(t)+m}{m}(p-1-q_{1,p}(t))!\Bigg]
     \end{multlined}\\
& \!\begin{multlined}[t]
     =\frac{(q_{1,p}(x^*)+q_{1,p}(t)-p+2)!(p-1-q_{1,p}(t))!(p-2-q_{1,p}(x^*))!}{p!}\\
     \cdot \sum_{m=0}^{q_{1,p}(x^*)+q_{1,p}(t)-p+2}\binom{q_{1,p}(t)-m}{q_{1,p}(x^*)+q_{1,p}(t)-p+2-m}\binom{p-1-q_{1,p}(t)+m}{m}.
     \end{multlined}
\end{align*}

Using Lemma~\ref{lem:binomialsum} with $a=p-1-q_{1,p}(t),b=q_{1,p}(t)$ and $c=q_{1,p}(x^*)+q_{1,p}(t)-p+2$, we conclude:

\begin{align*}
    \mathcal{H}&=\sum_{V \subseteq \{2,\ldots,p-1\}}w(V\cup\{1\})R_V(x_V^*)R_{V''}(t_{V''})\\
& \!\begin{multlined}[t]
     =\frac{(q_{1,p}(x^*)+q_{1,p}(t)-p+2)!(p-1-q_{1,p}(t))!(p-2-q_{1,p}(x^*))!}{p!}\\
     \cdot \binom{p}{q_{1,p}(x^*)+q_{1,p}(t)-p+2}
     \end{multlined}\\
&=\frac{(p-1-q_{1,p}(t))!(p-2-q_{1,p}(x^*))!}{(2p-q_{1,p}(x^*)-q_{1,p}(t)-2)!}\\
&=\frac{1}{(p-1-q_{1,p}(x^*))}\cdot \frac{1}{\binom{2p-q_{1,p}(x^*)-q_{1,p}(t)-2}{p-1-q_{1,p}(x^*)}}.
\end{align*}
Similarly, we can write:
\begin{align*}
    \mathcal{I}&:=\sum_{V \subseteq \{2,\ldots,p-1\}}w(V)R_V(x_V^*)R_{V''}(t_{V''})\\
    &=\sum_{m=0}^{|\Omega_{1,p}(x^*)\setminus\Omega_{1,p}(t)|}\frac{\binom{q_{1,p}(x^*)+q_{1,p}(t)-p+2}{m}}{p\binom{p-1}{m+p-2-q_{1,p}(t)}}\\
&=\sum_{m=0}^{q_{1,p}(x^*)+q_{1,p}(t)-p+2}\frac{\binom{q_{1,p}(x^*)+q_{1,p}(t)-p+2}{m}}{p\binom{p-1}{m+p-2-q_{1,p}(t)}}\\
& \!\begin{multlined}[t]
     =\frac{(q_{1,p}(x^*)+q_{1,p}(t)-p+2)!}{p!}\\
     \cdot \sum_{m=0}^{q_{1,p}(x^*)+q_{1,p}(t)-p+2}\frac{(q_{1,p}(t)+1-m)!(p-2-q_{1,p}(t)+m)!}{(q_{1,p}(x^*)+q_{1,p}(t)-p+2-m)!m!}
     \end{multlined}\\
& \!\begin{multlined}[t]
     =\frac{(q_{1,p}(x^*)+q_{1,p}(t)-p+2)!}{p!}\\
     \cdot \sum_{m=0}^{q_{1,p}(x^*)+q_{1,p}(t)-p+2}\Bigg[\binom{q_{1,p}(t)+1-m}{q_{1,p}(x^*)+q_{1,p}(t)-p+2-m}(p-1-q_{1,p}(x^*))!\\
     \cdot \binom{p-2-q_{1,p}(t)+m}{m}(p-2-q_{1,p}(t))!\Bigg]
     \end{multlined}\\
& \!\begin{multlined}[t]
     =\frac{(q_{1,p}(x^*)+q_{1,p}(t)-p+2)!(p-2-q_{1,p}(t))!(p-1-q_{1,p}(x^*))!}{p!}\\
     \cdot \sum_{m=0}^{q_{1,p}(x^*)+q_{1,p}(t)-p+2}\binom{q_{1,p}(t)+1-m}{q_{1,p}(x^*)+q_{1,p}(t)-p+2-l}\binom{p-2-q_{1,p}(t)+mssssssssw}{m}.
     \end{multlined}
\end{align*}

Using Lemma~\ref{lem:binomialsum} with $a=p-2-q_{1,p}(t),b=q_{1,p}(t)+1$ and $c=q_{1,p}(x^*)+q_{1,p}(t)-p+2$, we conclude:

\begin{align*}
\mathcal{I}&=\sum_{V \subseteq \{2,\ldots,p-1\}}w(V)R_V(x_V^*)R_{V''}(t_{V''})\\
& \!\begin{multlined}[t]
     =\frac{(q_{1,p}(x^*)+q_{1,p}(t)-p+2)!(p-2-q_{1,p}(t))!(p-1-q_{1,p}(x^*))!}{p!}\\
     \cdot \binom{p}{q_{1,p}(x^*)+q_{1,p}(t)-p+2}
     \end{multlined}\\
&=\frac{(p-2-q_{1,p}(t))!(p-1-q_{1,p}(x^*))!}{(2p-q_{1,p}(x^*)-q_{1,p}(t)-2)!}\\
&=\frac{1}{(p-1-q_{1,p}(t))}\cdot \frac{1}{\binom{2p-q_{1,p}(x^*)-q_{1,p}(t)-2}{p-1-q_{1,p}(x^*)}}.
\end{align*}

These allow us to conclude that:
$$\widehat{\mathcal{D}}=\frac{1}{n}\sum_{\substack{t \in \mathcal{T} \text{s.t.}\\ \Omega_{1,p}(t)' \subseteq \Omega_{1,p}(x^*)}}\frac{1}{\binom{2p-q_{1,p}(x^*)-q_{1,p}(t)-2}{p-1-q_{1,p}(x^*)}}\cdot \frac{\varphi(x_1^*)R_1(x_1^*)R_p(x_p^*)}{p-1-q_{1,p}(x^*)}$$
$$\widehat{\mathcal{E}}=\frac{1}{n}\sum_{\substack{t \in \mathcal{T} \text{s.t.}\\ \Omega_{1,p}(t)' \subseteq \Omega_{1,p}(x^*)}}\frac{1}{\binom{2p-q_{1,p}(x^*)-q_{1,p}(t)-2}{p-1-q_{1,p}(x^*)}}\cdot \frac{\varphi(x_1^*)R_1(x_1^*)R_p(t_p)}{p-1-q_{1,p}(x^*)}$$
$$\widehat{\mathcal{F}}=\frac{1}{n}\sum_{\substack{t \in \mathcal{T} \text{s.t.}\\ \Omega_{1,p}(t)' \subseteq \Omega_{1,p}(x^*)}}\frac{1}{\binom{2p-q_{1,p}(x^*)-q_{1,p}(t)-2}{p-1-q_{1,p}(x^*)}}\cdot \frac{\varphi(t_1)R_1(t_1)R_p(x_p^*)}{p-1-q_{1,p}(t)}$$
$$\widehat{\mathcal{G}}=\frac{1}{n}\sum_{\substack{t \in \mathcal{T} \text{s.t.}\\ \Omega_{1,p}(t)' \subseteq \Omega_{1,p}(x^*)}}\frac{1}{\binom{2p-q_{1,p}(x^*)-q_{1,p}(t)-2}{p-1-q_{1,p}(x^*)}}\cdot \frac{\varphi(t_1)R_1(t_1)R_p(t_p)}{p-1-q_{1,p}(t)}$$

Putting the four pieces together, we obtain:
\begin{align*}
    \widehat{\phi_p}(x^*)&=\widehat{\mathcal{D}}-\widehat{\mathcal{E}}+\widehat{\mathcal{F}}-\widehat{\mathcal{G}}\\
    &=\frac{1}{n}\sum_{\substack{t \in \mathcal{T} \text{s.t.}\\ \Omega_{1,p}(t)' \subseteq \Omega_{1,p}(x^*)}}\frac{\varphi(x_1^*)R_1(x_1^*)\frac{R_p(x_p^*)-R_p(t_p)}{p-1-q_{1,p}(x^*)}+\varphi(t_1)R_1(t_1)\frac{R_p(x_p^*)-R_p(t_p)}{p-1-q_{1,p}(t)}}{\binom{2p-q_{1,p}(x^*)-q_{1,p}(t)-2}{p-1-q_{1,p}(x^*)}}.
\end{align*}

To obtain the formula stated in the Theorem, notice that $q(x)=q_1(x)+R_1(x)+R_p(x_p)$ and that the condition $\Omega_{1,p}(t)' \subseteq \Omega_{1,p}(x^*)$ in the summation may be replaced with the condition $R_k(t_k)=1 \vee R_k(x_k^*)=1 \,\forall k$: the only datapoints for which the two conditions differ are the points for which $R_1(t_1)=R_1(x^*_1)=0$ or $R_p(t_p)=R_p(x^*_p)=0$. For such datapoints, the effect is null and thus does not affect the formula.\\
\end{proof}

The theorem discussed in the paper is a specific case of the theorem above:

\begin{cor}
\label{cor:ourFormulaCor}
Assume to have a dataset $\mathcal{T}$ of size $n$. Consider a 0-1 coded rule decomposed as the product of single conditions and thus of the form $r(x_1,\ldots,x_p)=\prod_{k=1}^pR_k(x_k)$, with ${R_k:\mathbb{R} \rightarrow \{0,1\}}$. Given $j \in \{1,\ldots,p\}$ and a datapoint $x^*$, the contribution of the $j$-th feature to the prediction $\hat{a} \cdot r(x^*)$ as defined by marginal Shapley values is unbiasedly estimated by:
$$\widehat{\phi}_j(x^*)=\hat{a} \cdot \Bigg(\frac{1}{n(p-q(x^*)+R_j(x^*_j))}\sum_{\substack{t \in \mathcal{T} \text{s.t.}\\ R_k(t_k)=1 \vee R_k(x^*_k)=1 \,\forall k}}\frac{R_j(x_j^*)-R_j(t_j)}{\binom{2p-q(x^*)-q(t)-1+R_j(x^*_j)+R_j(t_j)}{p-q(x^*)+R_j(x^*_j)}}\Bigg),$$
where $q: \mathbb{R}^p \rightarrow \mathbb{N}$ is defined as $q(x)=\sum_{j=1}^pR_l(x_l)$ and $\vee$ is the logical \enquote{or} operator.
\end{cor}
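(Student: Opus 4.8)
The plan is to obtain this Corollary as the constant-$\varphi$ special case of the general Theorem established immediately above. I would take $\varphi:\mathbb{R}\to\mathbb{R}$ to be the constant function $\varphi(u)\equiv\hat{a}$, and apply that Theorem with the auxiliary index chosen as $\ell:=j$. With this choice the function being decomposed, namely $\varphi(x_\ell)\,r(x)=\hat{a}\prod_{k=1}^{p}R_k(x_k)=\hat{a}\,r(x)$, equals $\hat{a}$ times the rule; hence the ``contribution of the $j$-th feature to the prediction $\varphi(x_\ell^{*})r(x^{*})$'' that the Theorem quantifies is, by definition, the contribution of the $j$-th feature to $\hat{a}\cdot r(x^{*})$, i.e.\ exactly the quantity $\widehat{\phi}_j(x^{*})$ appearing in the Corollary.

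The next step is a direct substitution into the $j=\ell$ branch of the Theorem's formula. There the numerator is $\varphi(x_j^{*})R_j(x_j^{*})-\varphi(t_j)R_j(t_j)$; under $\varphi\equiv\hat{a}$ this becomes $\hat{a}\bigl(R_j(x_j^{*})-R_j(t_j)\bigr)$, and because $\hat{a}$ is a constant it can be pulled out of the sum over $t\in\mathcal{T}$. The remaining ingredients -- the prefactor $\frac{1}{n\,(p-q(x^{*})+R_j(x_j^{*}))}$, the binomial denominator $\binom{2p-q(x^{*})-q(t)-1+R_j(x_j^{*})+R_j(t_j)}{p-q(x^{*})+R_j(x_j^{*})}$, and the index set $\{t\in\mathcal{T}: R_k(t_k)=1\vee R_k(x_k^{*})=1\ \forall k\}$ of the summation -- contain no occurrence of $\varphi$ and therefore transfer verbatim. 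Reassembling these pieces yields exactly the displayed identity of the Corollary, and the unbiasedness claim is inherited from the Theorem, since the only approximation step in its proof (replacing each interventional expectation $\mathbb{E}[\varphi(x_1)r(x)\,|\,\mathrm{do}(\cdots)]$ by its empirical mean over $\mathcal{T}$) is unbiased whatever $\varphi$ is.

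I do not expect a genuine obstacle: all of the substance already lives in the general Theorem and, through it, in the Vandermonde-type identity of Lemma~\ref{lem:binomialsum}. The only point that requires a little care is to confirm that every step of that Theorem's proof -- the factorization of the interventional expectations afforded by the product form $r=\prod_k R_k$, the passage to sample means, and the combinatorial summation over subsets $S$ -- is carried out for an \emph{arbitrary} $\varphi:\mathbb{R}\to\mathbb{R}$ and hence stays valid for the constant choice $\varphi\equiv\hat{a}$. Beyond that, the ``hard part'' is purely bookkeeping: making sure the index shift $R_j(x_j^{*})$ inside the binomial coefficient and the restriction defining the summation range are matched so that the specialization is literally term-by-term identical to the claimed formula. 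As a cheap internal check one could instead apply the $j\ne\ell$ branch with any $\ell\ne j$ and verify that the resulting (more elaborate) expression collapses to the same value, confirming that the nominal attachment of a constant $\varphi$ to a feature $\ell$ is immaterial.
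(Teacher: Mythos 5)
Your proposal is correct and matches the paper's own proof: the paper likewise derives the Corollary by specializing the general theorem to the constant function $\varphi\equiv\hat{a}$ with $\ell=j$, so that the $j=\ell$ branch gives a numerator $\hat{a}\bigl(R_j(x_j^*)-R_j(t_j)\bigr)$ and the constant factors out of the sum. No gaps; the extra sanity check via the $j\neq\ell$ branch is not needed but does no harm.
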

\begin{proof}
We may use the theorem above for $\varphi \equiv \hat{a}$. Then $r(x)=\varphi(x_j)r(x)$ has Shapley values:
\begin{align*}
\widehat{\phi}_j(x^*)&=\frac{1}{n(p-q(x^*)+R_j(x^*_j))}\sum_{\substack{t \in \mathcal{T} \text{s.t.}\\ R_k(t_k)=1 \vee R_k(x^*_k)=1 \,\forall k}}\frac{\varphi(x_j^*)R_l(x_j^*)-\varphi(t_j)R_l(t_j)}{\binom{2p-q(x^*)-q(t)-1+R_j(x^*_j)+R_j(t_j)}{p-q(x^*)+R_j(x^*_j)}}\\
&=\frac{\hat{a}}{n(p-q(x^*)+R_j(x^*_j))}\sum_{\substack{t \in \mathcal{T} \text{s.t.}\\ R_k(t_k)=1 \vee R_k(x^*_k)=1 \,\forall k}}\frac{R_l(x_j^*)-R_l(t_j)}{\binom{2p-q(x^*)-q(t)-1+R_j(x^*_j)+R_j(t_j)}{p-q(x^*)+R_j(x^*_j)}}
\end{align*}
\end{proof}

\begin{thm*}
Assume to have a dataset $\mathcal{T}$ of size $n$. Consider a 0-1 coded rule decomposed as the product of single conditions and thus of the form  $r(x_1,\ldots,x_p)=\prod_{k=1}^pR_k(x_k)$, with ${R_k:\mathbb{R} \rightarrow \{0,1\}}$. Consider any real-valued function $\varphi: \mathbb{R} \rightarrow \mathbb{R}$. Given $\ell,j,j' \in \{1,\ldots,p\}$ with $j \neq j',j \neq \ell$ and a datapoint $x^*$, the interaction of the $j$-th and the $j'$-th features within the prediction $\varphi(x^*_\ell)r(x^*)$ as defined by marginal interaction Shapley values is unbiasedly estimated by:
\begin{align*}
\widehat{\phi}_{j,j'}(x^*) \!\begin{multlined}[t]
    =\frac{1}{n}\sum_{\substack{t \in \mathcal{T} \text{s.t.}\\ R_k(t_k)=1 \vee R_k(x_k^*)=1 \,\forall k}}\frac{\frac{\varphi(x_\ell^*)R_\ell(x_\ell^*)}{p-2-q(x^*)+R_\ell(x_\ell^*)+R_{j'}(x_{j'}^*)+R_j(x_j^*)}+\frac{\varphi(t_\ell)R_\ell(t_\ell)}{p-2-q(t)+R_\ell(t_\ell)+R_{j'}(t_{j'})+R_j(t_j)}}{\binom{2p-q(x^*)+R_\ell(x_\ell^*)+R_{j'}(x_{j'}^*)+R_j(x_j^*)-q(t)+R_\ell(t_\ell)+R_{j'}(t_{j'})+R_j(t_j)-4}{p-2-q(x^*)+R_\ell(x_\ell^*)+R_{j'}(x_{j'}^*)+R_j(x_j^*)}}\\
    \cdot \Big(R_{j'}(x_{j'}^*)R_j(x_j^*)-R_{j'}(x_{j'}^*)R_j(t_j)-R_{j'}(t_{j'})R_j(x_j^*)+R_{j'}(t_{j'})R_j(t_j)\Big).
    \end{multlined}
\end{align*}
if $j' \neq \ell$ and by:
\begin{align*}
\widehat{\phi}_{j,j'}(x^*) \!\begin{multlined}[t]
    =\frac{1}{n(p-1-q(x^*)+R_j(x^*_j)+R_{j'}(x^*_{j'}))}\\
    \cdot \sum_{\substack{t \in \mathcal{T} \text{s.t.}\\ R_k(t_k)=1 \vee R_k(x^*_k)=1 \,\forall k}}\frac{\varphi(x_{j'}^*)\big(R_{j'}(x_{j'}^*)R_j(x_j^*)-R_{j'}(x_{j'}^*)R_j(t_j)\big)-\varphi(t_{j'})\big(R_{j'}(t_{j'})R_j(x_j^*)-R_{j'}(t_{j'})R_j(t_j)\big)}{\binom{2p-q(x^*)-q(t)+R_{j'}(x^*_{j'})+R_j(x^*_j)+R_{j'}(t_{j'})+R_j(t_j)-3}{p-q(x^*)+R_{j'}(x^*_{j'})+R_j(x^*_j)}}
    \end{multlined}
\end{align*}
if $j'=\ell$, where $q: \mathbb{R} \rightarrow \mathbb{N}$ is defined as $q(x)=\sum_{k=1}^pR_k(x_k)$ and $\vee$ is the logical \enquote{or} operator.
\end{thm*}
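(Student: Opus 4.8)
The plan is to mimic, step by step, the proof of the preceding theorem (the single‑feature version for $\varphi(x_\ell^*)r(x^*)$), replacing the first‑difference structure of $\phi_j$ by the second‑difference structure of $\phi_{j,j'}$ in Equation~\ref{eq:shapleyformulaInt}, and to reuse the combinatorial evaluations already carried out there. By relabelling features I will assume $\ell=1$ throughout and treat the two stated cases separately. In both cases the first move is the same: expand each interventional expectation $\mathbb{E}[\varphi(x_1)r(x)\mid\text{do}(\cdots)]$ using $r(x)=\prod_k R_k(x_k)$, factor out $\prod_{k\in S}R_k(x_k^*)$ for the intervened coordinates, replace the remaining expectation over the non‑intervened coordinates by its sample mean over $\mathcal T$, and then swap the order of the summation over $t\in\mathcal T$ with the summation over subsets $S$.

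For the case $j'=\ell$ (WLOG $j'=\ell=1$, $j=p$), the subset $S$ ranges over $\{2,\ldots,p-1\}$, so all four terms of the second difference carry the \emph{same} Shapley weight $w(S)$ and, after the swap, the \emph{same} inner sum $\sum_{S\subseteq\{2,\ldots,p-1\}}w(S)\,R_S(x_S^*)R_{S'}(t_{S'})$; this is exactly the quantity $\mathcal C$ evaluated in the previous proof, only with one additional index ($p$) excluded from the free set, hence evaluated again via Lemma~\ref{lem:binomialsum}. Recombining the four pieces — using that $\varphi(x_1)\to\varphi(x_1^*)$ precisely in the two terms where $x_1$ is intervened and $R_p(x_p)\to R_p(x_p^*)$ precisely in the two where $x_p$ is — produces the numerator $\varphi(x_{j'}^*)\big(R_{j'}(x_{j'}^*)R_j(x_j^*)-R_{j'}(x_{j'}^*)R_j(t_j)\big)-\varphi(t_{j'})\big(R_{j'}(t_{j'})R_j(x_j^*)-R_{j'}(t_{j'})R_j(t_j)\big)$, and the summation condition $\Omega(t)'\subseteq\Omega(x^*)$ is rewritten as $R_k(t_k)=1\vee R_k(x_k^*)=1\ \forall k$ since the datapoints where these differ contribute nothing.

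For the case $j'\neq\ell$ (WLOG $\ell=1$, $j=p$, $j'=p-1$), $S$ ranges over $\{1,\ldots,p-2\}$, so I split each of the four terms of the second difference according to whether $1\in S$, obtaining eight summations — the analogues of $\mathcal D,\mathcal E,\mathcal F,\mathcal G$ from the previous proof, doubled. Each reduces, after passing to sample means and swapping sums, to a sum over $t$ times one of the two inner subset‑sums $\sum_V w(V\cup\{1\})R_V(x_V^*)R_{V''}(t_{V''})$ and $\sum_V w(V)R_V(x_V^*)R_{V''}(t_{V''})$ over $V\subseteq\{2,\ldots,p-2\}$; these are precisely $\mathcal H$ and $\mathcal I$ from the previous proof with the extra index $p-1$ removed, both evaluated by Lemma~\ref{lem:binomialsum}. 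Regrouping the eight pieces, the four ``difference'' terms collapse to $R_{j'}(x_{j'}^*)R_j(x_j^*)-R_{j'}(x_{j'}^*)R_j(t_j)-R_{j'}(t_{j'})R_j(x_j^*)+R_{j'}(t_{j'})R_j(t_j)$ and the split on $1\in S$ collapses to the additive pair $\frac{\varphi(x_\ell^*)R_\ell(x_\ell^*)}{p-2-q(x^*)+\cdots}+\frac{\varphi(t_\ell)R_\ell(t_\ell)}{p-2-q(t)+\cdots}$, after which the same rewriting of the summation condition finishes the proof.

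The main obstacle I anticipate is the index bookkeeping: with two features (not one) removed from the free index set, and with the weight alternating between $w(V)$ and $w(V\cup\{1\})$, matching the $\pm1,\pm2,\pm3,\pm4$ offsets in the upper and lower arguments of the final binomial coefficient requires careful tracking of how $q(x)$ decomposes as $q_{1,p-1,p}(x)+R_1(x_1)+R_{p-1}(x_{p-1})+R_p(x_p)$, and one must verify that $\mathcal H$ and $\mathcal I$ coincide up to the harmless prefactor $\frac{1}{p-2-q(x^*)+\cdots}$ versus $\frac{1}{p-2-q(t)+\cdots}$ so that the eight pieces amalgamate into a single fraction. As before, Lemma~\ref{lem:binomialsum} is the crucial device that collapses each inner subset‑sum to one reciprocal binomial coefficient; everything else is the same routine expansion and regrouping used in the preceding proof.
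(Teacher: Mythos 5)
Your proposal is correct and follows essentially the same route as the paper's own proof: expanding the interventional expectations, passing to sample means, swapping the sums over $t$ and over subsets, evaluating the inner subset sums via Lemma~\ref{lem:binomialsum} (with the four-term common-weight structure when $j'=\ell$ and the eight-term split on $1\in S$ with the two inner sums $\sum_V w(V\cup\{1\})\cdots$ and $\sum_V w(V)\cdots$ when $j'\neq\ell$), then recombining and rewriting the summation condition. The bookkeeping issues you flag (the decomposition $q(x)=q_{1,p-1,p}(x)+R_1(x_1)+R_{p-1}(x_{p-1})+R_p(x_p)$ and the differing prefactors in $x^*$ and $t$) are exactly the points the paper's proof handles, so no gap remains.
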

\begin{proof}
For simplicity of notation, let us assume that $\ell=1$ and $j=p$.\\

Let us first cover the case $j'=\ell=1$ and write $S':=\{2,\ldots,p-1\}\setminus S$, for any subset $S \subseteq \{2,\ldots,p-1\}$. For any subset of active players $S$, we have:
\begin{align*}
    \mathbb{E}[\varphi(x_1)r(x)|\text{do}(x_S=x_S^*)]&=\mathbb{E}[\varphi(x_1)\prod_{k=1}^pR_k(x_k)|\text{do}(x_S=x_S^*)]\\
    &=\prod_{k\in S}R_k(x^*_k)\mathbb{E}[\varphi(x_1)\prod_{k\in \{1,\ldots,p\} \setminus S}R_k(x_k)|\text{do}(x_S=x_S^*)]\\
    &=\prod_{k\in S}R_k(x^*_k)\mathbb{E}[\varphi(x_1)\prod_{k\in \{1,\ldots,p\} \setminus S}R_k(x_k)].
\end{align*}
Analogously, we have:
$$\mathbb{E}[\varphi(x_1)r(x)|\text{do}(x_S=x_S^*,x_1=x_1^*,x_p=x_p^*)]=\varphi(x_1^*)\prod_{k\in S\cup\{1,p\}}R_k(x^*_k)\mathbb{E}[\prod_{k\in S'}R_k(x_k)]$$
$$\mathbb{E}[\varphi(x_1)r(x)|\text{do}(x_S=x_S^*,x_1=x_1^*)]=\varphi(x_1^*)\prod_{k\in S\cup\{1\}}R_k(x^*_k)\mathbb{E}[\prod_{k\in S' \cup \{p\}}R_k(x_k)]$$
$$\mathbb{E}[\varphi(x_1)r(x)|\text{do}(x_S=x_S^*,x_p=x_p^*)]=\prod_{k\in S\cup\{p\}}R_k(x^*_k)\mathbb{E}[\varphi(x_1)\prod_{k\in S' \cup \{1\}}R_k(x_k)]$$
We write these products more compactly by grouping up the indices:
define $R_S(x_S):=\prod_{k\in S}R_k(x_k)$ and, consequently, $R_{S'}(x_{S'}):=\prod_{k\in S'}R_k(x_k)$. Then $\phi_{1,p}(x^*)$ may be re-written as:
\begin{align*}
\phi_{1,p}(x^*)&=\!\begin{multlined}[t]
     \sum_{S \subseteq \{2,\ldots,p-1\}}w(S)\Big(\mathbb{E}[\varphi(x_1)r(x)|\text{do}(x_S=x_S^*,x_1=x_1^*,x_p=x_p^*)]\\
     -\mathbb{E}[\varphi(x_1)r(x)|\text{do}(x_S=x_S^*,x_1=x_1^*)]-
     \mathbb{E}[\varphi(x_1)r(x)|\text{do}(x_S=x_S^*,x_p=x_p^*)]\\
     +\mathbb{E}[\varphi(x_1)r(x)|\text{do}(x_S=x_S^*)]\Big)
     \end{multlined}\\
& \!\begin{multlined}[t]
     = \sum_{S \subseteq \{2,\ldots,p-2\}}w(S)\mathbb{E}[\varphi(x_1)r(x)|\text{do}(x_S=x_S^*,x_1=x_1^*,x_p=x_p^*)]\\
     -\sum_{S \subseteq \{2,\ldots,p-2\}}w(S)\mathbb{E}[\varphi(x_1)r(x)|\text{do}(x_S=x_S^*,x_1=x_1^*)]\\
     -\sum_{S \subseteq \{2,\ldots,p-2\}}w(S)\mathbb{E}[\varphi(x_1)r(x)|\text{do}(x_S=x_S^*,x_p=x_p^*)]\\
     +\sum_{S \subseteq \{2,\ldots,p-2\}}w(S)\mathbb{E}[\varphi(x_1)r(x)|\text{do}(x_S=x_S^*)]
     \end{multlined}\\
& \!\begin{multlined}[t]
     = \sum_{S \subseteq \{2,\ldots,p-2\}}w(S) \varphi(x_1^*)R_1(x_1^*)R_p(x_p^*)R_S(x_S^*)\mathbb{E}[R_{S'}(x_{S'})]\\
     -\sum_{S \subseteq \{2,\ldots,p-2\}}w(S) \varphi(x^*_1)R_1(x^*_1)R_S(x_S^*)\mathbb{E}[R_{S'\cup\{p\}}(x_{S'\cup\{p\}})]\\
     -\sum_{S \subseteq \{2,\ldots,p-2\}}w(S) R_p(x^*_p)R_S(x_S^*)\mathbb{E}[\varphi(x_1)R_{S'\cup\{1\}}(x_{S'\cup\{1\}})]\\
     +\sum_{S \subseteq \{2,\ldots,p-2\}}w(S) R_S(x_S^*)\mathbb{E}[\varphi(x_1)R_{S'\cup\{1,p\}}(x_{S'\cup\{1,p\}})].
     \end{multlined}
\end{align*}
Let us treat the four summations separately; define the following:
$$\mathcal{A}=\sum_{S \subseteq \{2,\ldots,p-2\}}w(S) \varphi(x_1^*)R_1(x_1^*)R_p(x_p^*)R_S(x_S^*)\mathbb{E}[R_{S'}(x_{S'})],$$
$$\mathcal{B}=\sum_{S \subseteq \{2,\ldots,p-2\}}w(S) \varphi(x^*_1)R_1(x^*_1)R_S(x_S^*)\mathbb{E}[R_{S'\cup\{p\}}(x_{S'\cup\{p\}})],$$
$$\mathcal{C}=\sum_{S \subseteq \{2,\ldots,p-2\}}w(S) R_p(x^*_p)R_S(x_S^*)\mathbb{E}[\varphi(x_1)R_{S'\cup\{1\}}(x_{S'\cup\{1\}})],$$
$$\mathcal{D}=\sum_{S \subseteq \{2,\ldots,p-2\}}w(S) R_S(x_S^*)\mathbb{E}[\varphi(x_1)R_{S'\cup\{1,p\}}(x_{S'\cup\{1,p\}})].$$
Note that the expectations that appear in these formulas may be unbiasedly estimated by their sample means over the set $\mathcal{T}$. Let us focus on the first summation $\mathcal{A}$, which is therefore approximated by:
\begin{align*}
\widehat{\mathcal{A}}&=\sum_{S \subseteq \{2,\ldots,p-2\}}w(S)\varphi(x_1^*) R_1(x_1^*)R_p(x_p^*)R_S(x_S^*)\widehat{\mathbb{E}}[R_{S'}(x_{S'})]\\
&=\sum_{S \subseteq \{2,\ldots,p-2\}}w(S) \varphi(x_1^*) R_1(x_1^*)R_p(x_p^*)R_S(x_S^*)\frac{1}{n}\sum_{t \in \mathcal{T}}R_{S'}(t_{S'})\\
&=\varphi(x_1^*) R_1(x_1^*)R_p(x_p^*) \cdot \frac{1}{n}\sum_{t \in \mathcal{T}}\sum_{S \subseteq \{2,\ldots,p-2\}}w(S) R_S(x_S^*)R_{S'}(t_{S'}).
\end{align*}
Now let us focus on $\mathcal{E}:=\sum_{S \subseteq \{2,\ldots,p-2\}}w(S) R_{S}(x_{S}^*)R_{S'}(t_{S'})$, for a fixed datapoint $t$. For any datapoint x, define the following:
$$\Omega_{1,p}(x):=\{j \in \{2,\ldots,p-2\} \big| R_j(x)=1\}, \qquad \qquad q_{1,p}(x)=|\Omega_{1,p}(x)|.$$
By definition of $\Omega_{1,p}$, we have:
\begin{align*}
    R_S(x_S^*)R_{S'}(t_{S'})\neq 0 &\iff \begin{cases}
    R_S(x_S^*)=1\\
    R_{S'}(t_{S'})=1
\end{cases} \\
&\iff
\begin{cases}
    S \subseteq \Omega_{1,p}(x^*)\\
    S' \subseteq \Omega_{1,p}(t)
\end{cases}\\
&\iff
\begin{cases}
    S \subseteq \Omega_{1,p}(x^*)\\
    \Omega_{1,p}(t)' \subseteq S
\end{cases},
\end{align*}
where $\Omega_{1,p}(t)'$ is meant as the complementary set of $\Omega_{1,p}(t)$ with respect to $\{2,\ldots,p-2\}$. This means that $w(S) R_{S}(x_{S}^*)R_{S'}(z_{S'})$ only gives a non-zero effect for the datapoints $t$ such that $\Omega_{1,p}(t)' \subseteq \Omega_{1,p}(x^*)$, in which case the (only) subsets $S$ that contribute are the ones such that $\Omega_{1,p}(t)' \subseteq S \subseteq \Omega_{1,p}(x^*)$. Since all such sets $S$ contain $\Omega_{1,p}(t)'$, they can all be uniquely identified by the indices that they have \textit{besides} those in $\Omega_{1,p}(t)'$. In other words, each $S$ may be (uniquely) re-written as $S=\Omega_{1,p}(t)' \cup Z$, with $Z \subseteq \Omega_{1,p}(x^*) \setminus \Omega_{1,p}(t)'$. For every size $|Z|=m$, there are exactly $\binom{|\Omega_{1,p}(x^*)\setminus\Omega_{1,p}(t)'|}{m}=\binom{q_{1,p}(x^*)+q_{1,p}(t)-p+2}{m}$ possible choices of $Z$, and they all have the same effect $w(S)=\frac{1}{(p-1)\binom{p-2}{|S|}}=\frac{1}{(p-1)\binom{p-2}{p-2-q_{1,p}(t)+m}}$.\\
This means that we can write:
\begin{align*}
\mathcal{E}&=\sum_{S \subseteq \{2,\ldots,p-2\}}w(S) R_S(x_S^*)R_{S'}(t_{S'})\\
&=\sum_{m=0}^{|\Omega_{1,p}(x^*)\setminus\Omega_{1,p}(t)|}\frac{\binom{q_{1,p}(x^*)+q_{1,p}(t)-p+2}{m}}{(p-1)\binom{p-2}{p-2-q_{1,p}(t)+m}}\\
&=\sum_{m=0}^{q_{1,p}(x^*)+q_{1,p}(t)-p+2}\frac{\binom{q_{1,p}(x^*)+q_{1,p}(t)-p+2}{m}}{(p-1)\binom{p-2}{p-2-q_{1,p}(t)+m}}\\
& \!\begin{multlined}[t]
     =\frac{(q_{1,p}(x^*)+q_{1,p}(t)-p+2)!}{(p-1)!}\\
     \cdot \sum_{m=0}^{q_{1,p}(x^*)+q_{1,p}(t)-p+2}\frac{(q_{1,p}(t)-m)!(p-2-q_{1,p}(t)+m)!}{(q_{1,p}(x^*)+q_{1,p}(t)-p+2-m)!m!}
     \end{multlined}\\
& \!\begin{multlined}[t]
     =\frac{(q_{1,p}(x^*)+q_{1,p}(t)-p+2)!}{(p-1)!}\\
     \cdot \sum_{m=0}^{q_{1,p}(x^*)+q_{1,p}(t)-p+2}\Bigg[\binom{q_{1,p}(t)-m}{q_{1,p}(x^*)+q_{1,p}(t)-p+2-m}(p-2-q_{1,p}(x^*))!\\
     \cdot \binom{p-2-q_{1,p}(t)+m}{m}(p-2-q_{1,p}(t))!\Bigg]
     \end{multlined}\\
& \!\begin{multlined}[t]
     =\frac{(q_{1,p}(x^*)+q_{1,p}(t)-p+2)!(p-2-q_{1,p}(x^*))!(p-2-q_{1,p}(t))!}{(p-1)!}\\
     \cdot \sum_{l=0}^{q_{1,p}(x^*)+q_{1,p}(t)-p+2}\binom{q_{1,p}(t)-l}{q_{1,p}(x^*)+q_{1,p}(t)-p+2-l}\binom{p-2-q_{1,p}(t)+l}{l}.
     \end{multlined}
\end{align*}

Using Lemma~\ref{lem:binomialsum} with $a=p-2-q_{1,p}(t),b=q_{1,p}(t),c=q_{1,p}(x^*)+q_{1,p}(t)-p+2$, we conclude:

\begin{align*}
\mathcal{E}&=\sum_{S \subseteq \{2,\ldots,p-2\}}w(S) R_S(x_S^*)R_{S'}(t_{S'})\\
& \!\begin{multlined}[t]
     =\frac{(q_{1,p}(x^*)+q_{1,p}(t)-p+2)!(p-2-q_{1,p}(x^*))!(p-2-q_{1,p}(t))!}{(p-1)!}\\
     \cdot \binom{p-1}{q_{1,p}(x^*)+q_{1,p}(t)-p+2}
     \end{multlined}\\
&=\frac{(p-2-q_{1,p}(x^*))!(p-2-q_{1,p}(t))!}{(2p-q_{1,p}(x^*)-q_{1,p}(t)-3)!}\\
&=\frac{1}{(p-1-q_{1,p}(x^*))}\cdot \frac{1}{\binom{2p-q_{1,p}(x^*)-q_{1,p}(t)-3}{p-1-q_{1,p}(x^*)}}.
\end{align*}
This allows us to conclude that:
$$\widehat{\mathcal{A}}=\frac{1}{n(p-1-q_{1,p}(x^*))}\sum_{\substack{t \in \mathcal{T} \text{ s.t.}\\ \Omega_{1,p}(t)' \subseteq \Omega_{1,p}(x^*)}}\frac{\varphi(x_1^*)R_1(x_1^*)R_p(x^*_p)}{\binom{2p-q_{1,p}(x^*)-q_{1,p}(t)-1}{p-q_{1,p}(x^*)}}.$$
The sum is only over the datapoints with $\Omega_{1,p}(t)' \subseteq \Omega_{1,p}(x^*)$, as the other datapoints have a null effect. In a similar fashion, let us compute $\widehat{\mathcal{B}},\widehat{\mathcal{C}},\widehat{\mathcal{D}}$:
\begin{align*}
    \widehat{\mathcal{B}}&=\sum_{S \subseteq \{2,\ldots,p-2\}}w(S) \varphi(x_1^*)R_1(x_1^*)R_S(x_S^*)\frac{1}{n}\sum_{t \in \mathcal{T}}R_{S'\cup\{p\}}(t_{S'\cup\{p\}})\\
    &=\varphi(x_1^*)R_1(x_1^*)\frac{1}{n}\sum_{t \in \mathcal{T}}R_p(t_p)\sum_{S \subseteq \{2,\ldots,p-2\}}w(S)R_S(x_S^*)R_{S'}(t_{S'})\\
    &=\frac{1}{n(p-1-q_{1,p}(x^*))}\sum_{\substack{t \in \mathcal{T} \text{ s.t.}\\ \Omega_{1,p}(t)' \subseteq \Omega_{1,p}(x^*)}}\frac{\varphi(x_1^*)R_1(x_1^*)R_p(t_p)}{\binom{2p-q_{1,p}(x^*)-q_{1,p}(t)-1}{p-q_{1,p}(x^*)}},
\end{align*}
\begin{align*}
    \widehat{\mathcal{C}}&=\sum_{S \subseteq \{2,\ldots,p-2\}}w(S) R_p(x_p^*)R_S(x_S^*)\frac{1}{n}\sum_{t \in \mathcal{T}}\varphi(t_1)R_{S'\cup\{1\}}(t_{S'\cup\{1\}})\\
    &=R_p(x_p^*)\frac{1}{n}\sum_{t \in \mathcal{T}}\varphi(t_1)R_1(t_1)\sum_{S \subseteq \{2,\ldots,p-2\}}w(S)R_S(x_S^*)R_{S'}(t_{S'})\\
    &=\frac{1}{n(p-1-q_{1,p}(x^*))}\sum_{\substack{t \in \mathcal{T} \text{ s.t.}\\ \Omega_{1,p}(t)' \subseteq \Omega_{1,p}(x^*)}}\frac{\varphi(t_1)R_1(t_1)R_p(x^*_p)}{\binom{2p-q_{1,p}(x^*)-q_{1,p}(t)-1}{p-q_{1,p}(x^*)}},
\end{align*}
\begin{align*}
    \widehat{\mathcal{D}}&=\sum_{S \subseteq \{2,\ldots,p-2\}}w(S) R_S(x_S^*)\frac{1}{n}\sum_{t \in \mathcal{T}}\varphi(t_1)R_{S'\cup\{1,p\}}(t_{S'\cup\{1,p\}})\\
    &=\frac{1}{n}\sum_{t \in \mathcal{T}}\varphi(t_1)R_1(t_1)R_p(t_p)\sum_{S \subseteq \{2,\ldots,p-2\}}w(S)R_S(x_S^*)R_{S'}(t_{S'})\\
    &=\frac{1}{n(p-1-q_{1,p}(x^*))}\sum_{\substack{t \in \mathcal{T} \text{ s.t.}\\ \Omega_{1,p}(t)' \subseteq \Omega_{1,p}(x^*)}}\frac{\varphi(t_1)R_1(t_1)R_p(t_p)}{\binom{2p-q_{1,p}(x^*)-q_{1,p}(t)-1}{p-q_{1,p}(x^*)}},
\end{align*}
Combining the expressions together gives us the formula:
\begin{align*}
    \widehat{\phi}_{1,p}(x^*)&=\widehat{\mathcal{A}}-\widehat{\mathcal{B}}-\widehat{\mathcal{C}}+\widehat{\mathcal{D}}\\
    & \!\begin{multlined}[t]
     =\frac{1}{n(p-1-q_{1,p}(x^*))}\\
     \cdot \sum_{\substack{t \in \mathcal{T} \text{ s.t.}\\ \Omega_{1,p}(t)' \subseteq \Omega_{1,p}(x^*)}}\frac{\varphi(x^*_1)R_1(x^*_1)R_p(x^*_p)-\varphi(x^*_1)R_1(x^*_1)R_p(t_p)-\varphi(t_1)R_1(t_1)R_p(x^*_p)+\varphi(t_1)R_1(t_1)R_p(t_p)}{\binom{2p-q_{1,p}(x^*)-q_{1,p}(t)-1}{p-q_{1,p}(x^*)}}.
     \end{multlined}\\
\end{align*}
To obtain the formula stated in the Theorem, notice that $q(x)=q_{1,p}(x)+R_1(x)+R_p(x)$ and that the condition $\Omega_{1,p}(t)' \subseteq \Omega_{1,p}(x^*)$ in the summation may be replaced with the condition $R_k(t_k)=1 \vee R_k(x_k^*)=1 \,\forall k$: the only datapoints for which the two conditions differ are the points for which $R_1(t_1)=R_1(x^*_1)=0$ or $R_p(t_p)=R_p(x^*_p)=0$. For such datapoints, the effect is null and thus does not affect the formula.\\

Now let us consider the case $j' \neq \ell = 1$. For simplicity, let's assume $j'=p-1$. For any subset $V \subseteq \{2,\ldots,p-2\}$, define $V''$ as $\{2,\ldots,p-2\} \setminus V$. By separating the cases $1 \notin S$ from the cases $1 \in S$ (which therefore is of the form $\{1\} \cup V$ for $V \not\ni 1$), interaction Shapley values may be re-written as:
\begin{align*}
\phi_{p,p-1}(x^*) & \!\begin{multlined}[t]
     =\sum_{S \subseteq \{1,\ldots,p-2\}}w(S)\Big(\mathbb{E}[\varphi(x_1)r(x)|\text{do}(x_S=x_S^*,x_{p-1}=x_{p-1}^*,x_p=x_p^*)]\\
     -\mathbb{E}[\varphi(x_1)r(x)|\text{do}(x_S=x_S^*,x_{p-1}=x_{p-1}^*)] -\mathbb{E}[\varphi(x_1)r(x)|\text{do}(x_S=x_S^*,x_p=x_p^*)]\\
     +\mathbb{E}[\varphi(x_1)r(x)|\text{do}(x_S=x_S^*)]\Big)
     \end{multlined}\\
 & \!\begin{multlined}[t]
     =\sum_{V \subseteq \{2,\ldots,p-2\}}w(V\cup\{1\})\Big(\mathbb{E}[\varphi(x_1)r(x)|\text{do}(x_V=x_V^*,x_1=x_1^*,x_{p-1}=x_{p-1}^*,x_p=x_p^*)]\\
     -\mathbb{E}[\varphi(x_1)r(x)|\text{do}(x_V=x_V^*,x_1=x_1^*,x_{p-1}=x_{p-1}^*)] -\mathbb{E}[\varphi(x_1)r(x)|\text{do}(x_V=x_V^*,x_1=x_1^*,x_p=x_p^*)]\\
     +\mathbb{E}[\varphi(x_1)r(x)|\text{do}(x_V=x_V^*,x_1=x_1^*)]\Big)\\
     +\sum_{V \subseteq \{2,\ldots,p-2\}}w(V)\Big(\mathbb{E}[\varphi(x_1)r(x)|\text{do}(x_V=x_V^*,x_{p-1}=x_{p-1}^*,x_p=x_p^*)]\\
     -\mathbb{E}[\varphi(x_1)r(x)|\text{do}(x_V=x_V^*,x_{p-1}=x_{p-1}^*)] -\mathbb{E}[\varphi(x_1)r(x)|\text{do}(x_V=x_V^*,x_p=x_p^*)]\\
     +\mathbb{E}[\varphi(x_1)r(x)|\text{do}(x_V=x_V^*)]\Big)\\
     \end{multlined}\\
 & \!\begin{multlined}[t]
     =\sum_{V \subseteq \{2,\ldots,p-2\}}w(V\cup\{1\})\varphi(x^*_1)R_1(x^*_1)R_{p-1}(x^*_{p-1})R_p(x^*_p)R_V(x^*_V)\mathbb{E}[R_{V''}(x_{V''})]\\
     -\sum_{V \subseteq \{2,\ldots,p-2\}}w(V\cup\{1\})\varphi(x^*_1)R_1(x^*_1)R_{p-1}(x^*_{p-1})R_V(x^*_V)\mathbb{E}[R_{V'' \cup \{p\}}(x_{V'' \cup \{p\}})]\\
     -\sum_{V \subseteq \{2,\ldots,p-2\}}w(V\cup\{1\})\varphi(x^*_1)R_1(x^*_1)R_p(x^*_p)R_V(x^*_V)\mathbb{E}[R_{V'' \cup \{p-1\}}(x_{V'' \cup \{p-1\}})]\\
     +\sum_{V \subseteq \{2,\ldots,p-2\}}w(V\cup\{1\})\varphi(x^*_1)R_1(x^*_1)R_V(x^*_V)\mathbb{E}[R_{V'' \cup \{p-1,p\}}(x_{V'' \cup \{p-1,p\}})]\\
     +\sum_{V \subseteq \{2,\ldots,p-2\}}w(V)R_{p-1}(x^*_{p-1})R_p(x^*_p)R_V(x^*_V)\mathbb{E}[\varphi(x_1)R_{V'' \cup \{1\}}(x_{V'' \cup \{1\}})]\\
     -\sum_{V \subseteq \{2,\ldots,p-2\}}w(V)R_{p-1}(x^*_{p-1})R_V(x^*_V)\mathbb{E}[\varphi(x_1)R_{V'' \cup \{1,p\}}(x_{V'' \cup \{1,p\}})]\\
     -\sum_{V \subseteq \{2,\ldots,p-2\}}w(V)R_p(x^*_p)R_V(x^*_V)\mathbb{E}[\varphi(x_1)R_{V'' \cup \{1,p-1\}}(x_{V'' \cup \{1,p-1\}})]\\
     +\sum_{V \subseteq \{2,\ldots,p-2\}}w(V)R_V(x^*_V)\mathbb{E}[\varphi(x_1)R_{V'' \cup \{1,p-1,p\}}(x_{V'' \cup \{1,p-1,p\}})]\\
     \end{multlined}\\
\end{align*}
Let us treat the eight summations separately; define the following:
$$\mathcal{F}=\sum_{V \subseteq \{2,\ldots,p-2\}}w(V\cup\{1\})\varphi(x^*_1)R_1(x^*_1)R_{p-1}(x^*_{p-1})R_p(x^*_p)R_V(x^*_V)\mathbb{E}[R_{V''}(x_{V''})],$$
$$\mathcal{G}=\sum_{V \subseteq \{2,\ldots,p-2\}}w(V\cup\{1\})\varphi(x^*_1)R_1(x^*_1)R_{p-1}(x^*_{p-1})R_V(x^*_V)\mathbb{E}[R_{V'' \cup \{p\}}(x_{V'' \cup \{p\}})],$$
$$\mathcal{H}=\sum_{V \subseteq \{2,\ldots,p-2\}}w(V\cup\{1\})\varphi(x^*_1)R_1(x^*_1)R_p(x^*_p)R_V(x^*_V)\mathbb{E}[R_{V'' \cup \{p-1\}}(x_{V'' \cup \{p-1\}})],$$
$$\mathcal{I}=\sum_{V \subseteq \{2,\ldots,p-2\}}w(V\cup\{1\})\varphi(x^*_1)R_1(x^*_1)R_V(x^*_V)\mathbb{E}[R_{V'' \cup \{p-1,p\}}(x_{V'' \cup \{p-1,p\}})],$$
$$\mathcal{J}=\sum_{V \subseteq \{2,\ldots,p-2\}}w(V)R_{p-1}(x^*_{p-1})R_p(x^*_p)R_V(x^*_V)\mathbb{E}[\varphi(x_1)R_{V'' \cup \{1\}}(x_{V'' \cup \{1\}})],$$
$$\mathcal{K}=\sum_{V \subseteq \{2,\ldots,p-2\}}w(V)R_{p-1}(x^*_{p-1})R_V(x^*_V)\mathbb{E}[\varphi(x_1)R_{V'' \cup \{1,p\}}(x_{V'' \cup \{1,p\}})],$$
$$\mathcal{L}=\sum_{V \subseteq \{2,\ldots,p-2\}}w(V)R_p(x^*_p)R_V(x^*_V)\mathbb{E}[\varphi(x_1)R_{V'' \cup \{1,p-1\}}(x_{V'' \cup \{1,p-1\}})],$$
$$\mathcal{M}=\sum_{V \subseteq \{2,\ldots,p-2\}}w(V)R_V(x^*_V)\mathbb{E}[\varphi(x_1)R_{V'' \cup \{1,p-1,p\}}(x_{V'' \cup \{1,p-1,p\}})].$$

With a similar argument as above, they can be approximated without bias by:
\begin{align*}
    \widehat{\mathcal{F}} &= \sum_{V \subseteq \{2,\ldots,p-2\}}w(V\cup\{1\})\varphi(x^*_1)R_1(x^*_1)R_{p-1}(x^*_{p-1})R_p(x^*_p)R_V(x^*_V) \frac{1}{n}\sum_{t \in \mathcal{T}}R_{V''}(t_{V''})\\
    &= \frac{1}{n}\sum_{t \in \mathcal{T}}\varphi(x^*_1)R_1(x^*_1)R_{p-1}(x^*_{p-1})R_p(x^*_p)\sum_{V \subseteq \{2,\ldots,p-2\}}w(V\cup\{1\})R_V(x_V^*)R_{V''}(t_{V''})\\
\end{align*}
\begin{align*}
    \widehat{\mathcal{G}} &= \sum_{V \subseteq \{2,\ldots,p-2\}}w(V\cup\{1\})\varphi(x^*_1)R_1(x^*_1)R_{p-1}(x^*_{p-1})R_V(x^*_V)\frac{1}{n}\sum_{t \in \mathcal{T}}R_{V''\cup\{p\}}(t_{V''\cup\{p\}})\\
    &= \frac{1}{n}\sum_{t \in \mathcal{T}}\varphi(x^*_1)R_1(x^*_1)R_{p-1}(x^*_{p-1})R_p(t_p)\sum_{V \subseteq \{2,\ldots,p-2\}}w(V\cup\{1\})R_V(x_V^*)R_{V''}(t_{V''})\\
\end{align*}
\begin{align*}
    \widehat{\mathcal{H}} &= \sum_{V \subseteq \{2,\ldots,p-2\}}w(V\cup\{1\})\varphi(x^*_1)R_1(x^*_1)R_{p}(x^*_{p})R_V(x^*_V)\frac{1}{n}\sum_{t \in \mathcal{T}}R_{V''\cup\{p-1\}}(t_{V''\cup\{p-1\}})\\
    &= \frac{1}{n}\sum_{t \in \mathcal{T}}\varphi(x^*_1)R_1(x^*_1)R_{p-1}(t_{p-1})R_p(x^*_p)\sum_{V \subseteq \{2,\ldots,p-2\}}w(V\cup\{1\})R_V(x_V^*)R_{V''}(t_{V''})\\
\end{align*}
\begin{align*}
    \widehat{\mathcal{I}} &= \sum_{V \subseteq \{2,\ldots,p-2\}}w(V\cup\{1\})\varphi(x^*_1)R_1(x^*_1)R_V(x^*_V)\frac{1}{n}\sum_{t \in \mathcal{T}}R_{V''\cup\{p-1,p\}}(t_{V''\cup\{p-1,p\}})\\
    &=\frac{1}{n}\sum_{t \in \mathcal{T}}\varphi(x^*_1)R_1(x^*_1)R_{p-1}(t_{p-1})R_p(t_p)\sum_{V \subseteq \{2,\ldots,p-2\}}w(V\cup\{1\})R_V(x_V^*)R_{V''}(t_{V''})\\
\end{align*}
\begin{align*}
    \widehat{\mathcal{J}} &= \sum_{V \subseteq \{2,\ldots,p-2\}}w(V)R_{p-1}(x^*_{p-1})R_p(x^*_p)R_V(x^*_V)\frac{1}{n}\sum_{t \in \mathcal{T}}\varphi(t_1)R_{V''\cup\{1\}}(t_{V''\cup\{1\}})\\
    &= \frac{1}{n}\sum_{t \in \mathcal{T}}\varphi(t_1)R_1(t_1)R_{p-1}(x^*_{p-1})R_p(x^*_p)\sum_{V \subseteq \{2,\ldots,p-2\}}w(V)R_V(x_V^*)R_{V''}(t_{V''})\\
\end{align*}
\begin{align*}
    \widehat{\mathcal{K}} &= \sum_{V \subseteq \{2,\ldots,p-2\}}w(V)R_{p-1}(x^*_{p-1})R_V(x^*_V)\frac{1}{n}\sum_{t \in \mathcal{T}}\varphi(t_1)R_{V'' \cup \{1,p\}}(t_{V'' \cup \{1,p\}})\\
    &= \frac{1}{n}\sum_{t \in \mathcal{T}}\varphi(t_1)R_1(t_1)R_{p-1}(x_{p-1}^*)R_p(t_p)\sum_{V \subseteq \{2,\ldots,p-2\}}w(V)R_V(x_V^*)R_{V''}(t_{V''})\\
\end{align*}
\begin{align*}
    \widehat{\mathcal{L}} &= \sum_{V \subseteq \{2,\ldots,p-2\}}w(V)R_p(x^*_p)R_V(x^*_V)\frac{1}{n}\sum_{t \in \mathcal{T}}R_{V''\cup\{1,p-1\}}(t_{V''\cup\{1,p-1\}})\\
    &= \frac{1}{n}\sum_{t \in \mathcal{T}}\varphi(t_1)R_1(t_1)R_{p-1}(t_{p-1})R_p(x^*_p)\sum_{V \subseteq \{2,\ldots,p-2\}}w(V)R_V(x_V^*)R_{V''}(t_{V''})\\
\end{align*}
\begin{align*}
    \widehat{\mathcal{M}} &= \sum_{V \subseteq \{2,\ldots,p-2\}}w(V)R_V(x^*_V)\frac{1}{n}\sum_{t \in \mathcal{T}}\varphi(t_1)R_{V''\cup\{1,p-1,p\}}(t_{V''\cup\{1,p-1,p\}})\\
    &= \frac{1}{n}\sum_{t \in \mathcal{T}}\varphi(t_1)R_1(t_1)R_{p-1}(t_{p-1})R_p(t_p)\sum_{V \subseteq \{2,\ldots,p-2\}}w(V)R_V(x_V^*)R_{V''}(t_{V''})\\
\end{align*}

Notice that the way that the weights $w(V)$ are defined for interactions Shapley values is so that the summations over $V$ that appear in these expressions are the same as those computed for the proof of Theorem~\ref{thm:ourFormula}, except that here they use a set of $p-1$ features instead of a set of $p$ features. Using that same calculation, we thus deduce:
$$\widehat{\mathcal{F}}=\frac{1}{n(p-2-q_{1,p-1,p}(x^*))}\sum_{\substack{t \in \mathcal{T} \text{s.t.}\\ \Omega_{1,p-1,p}(t)' \subseteq \Omega_{1,p-1,p}(x^*)}}\frac{\varphi(x_1^*)R_1(x_1^*)R_{p-1}(x_{p-1}^*)R_p(x_p^*)}{\binom{2p-q_{1,p-1,p}(x^*)-q_{1,p-1,p}(t)-4}{p-2-q_{1,p-1,p}(x^*)}}$$
$$\widehat{\mathcal{G}}=\frac{1}{n(p-2-q_{1,p-1,p}(x^*))}\sum_{\substack{t \in \mathcal{T} \text{s.t.}\\ \Omega_{1,p-1,p}(t)' \subseteq \Omega_{1,p-1,p}(x^*)}}\frac{\varphi(x_1^*)R_1(x_1^*)R_{p-1}(x_{p-1}^*)R_p(t_p)}{\binom{2p-q_{1,p-1,p}(x^*)-q_{1,p-1,p}(t)-4}{p-2-q_{1,p-1,p}(x^*)}}$$
$$\widehat{\mathcal{H}}=\frac{1}{n(p-2-q_{1,p-1,p}(x^*))}\sum_{\substack{t \in \mathcal{T} \text{s.t.}\\ \Omega_{1,p-1,p}(t)' \subseteq \Omega_{1,p-1,p}(x^*)}}\frac{\varphi(x_1^*)R_1(x_1^*)R_{p-1}(t_{p-1})R_p(x_p^*)}{\binom{2p-q_{1,p-1,p}(x^*)-q_{1,p-1,p}(t)-4}{p-2-q_{1,p-1,p}(x^*)}}$$
$$\widehat{\mathcal{I}}=\frac{1}{n(p-2-q_{1,p-1,p}(x^*))}\sum_{\substack{t \in \mathcal{T} \text{s.t.}\\ \Omega_{1,p-1,p}(t)' \subseteq \Omega_{1,p-1,p}(x^*)}}\frac{\varphi(x_1^*)R_1(x_1^*)R_{p-1}(t_{p-1})R_p(t_p)}{\binom{2p-q_{1,p-1,p}(x^*)-q_{1,p-1,p}(t)-4}{p-2-q_{1,p-1,p}(x^*)}}$$
$$\widehat{\mathcal{J}}=\sum_{\substack{t \in \mathcal{T} \text{s.t.}\\ \Omega_{1,p-1,p}(t)' \subseteq \Omega_{1,p-1,p}(x^*)}}\frac{1}{n(p-2-q_{1,p-1,p}(t))}\cdot\frac{\varphi(t_1)R_1(t_1)R_{p-1}(x_{p-1}^*)R_p(x_p^*)}{\binom{2p-q_{1,p-1,p}(x^*)-q_{1,p-1,p}(t)-4}{p-2-q_{1,p-1,p}(x^*)}}$$
$$\widehat{\mathcal{K}}=\sum_{\substack{t \in \mathcal{T} \text{s.t.}\\ \Omega_{1,p-1,p}(t)' \subseteq \Omega_{1,p-1,p}(x^*)}}\frac{1}{n(p-2-q_{1,p-1,p}(t))}\cdot\frac{\varphi(t_1)R_1(t_1)R_{p-1}(x_{p-1}^*)R_p(t_p)}{\binom{2p-q_{1,p-1,p}(x^*)-q_{1,p-1,p}(t)-4}{p-2-q_{1,p-1,p}(x^*)}}$$
$$\widehat{\mathcal{L}}=\sum_{\substack{t \in \mathcal{T} \text{s.t.}\\ \Omega_{1,p-1,p}(t)' \subseteq \Omega_{1,p-1,p}(x^*)}}\frac{1}{n(p-2-q_{1,p-1,p}(t))}\cdot\frac{\varphi(t_1)R_1(t_1)R_{p-1}(t_{p-1})R_p(x_p^*)}{\binom{2p-q_{1,p-1,p}(x^*)-q_{1,p-1,p}(t)-4}{p-2-q_{1,p-1,p}(x^*)}}$$
$$\widehat{\mathcal{F}}=\sum_{\substack{t \in \mathcal{T} \text{s.t.}\\ \Omega_{1,p-1,p}(t)' \subseteq \Omega_{1,p-1,p}(x^*)}}\frac{1}{n(p-2-q_{1,p-1,p}(t))}\cdot \frac{\varphi(t_1)R_1(t_1)R_{p-1}(t_{p-1})R_p(t_p)}{\binom{2p-q_{1,p-1,p}(x^*)-q_{1,p-1,p}(t)-4}{p-2-q_{1,p-1,p}(x^*)}}$$
Putting the eight pieces together, we obtain:
\begin{align*}
    \widehat{\phi_p}(x^*)&=\widehat{\mathcal{F}}-\widehat{\mathcal{G}}-\widehat{\mathcal{H}}+\widehat{\mathcal{I}}+\widehat{\mathcal{J}}-\widehat{\mathcal{K}}-\widehat{\mathcal{L}}+\widehat{\mathcal{M}}\\
 & \!\begin{multlined}[t]
    =\frac{1}{n}\sum_{\substack{t \in \mathcal{T} \text{s.t.}\\ \Omega_{1,p-1,p}(t)' \subseteq \Omega_{1,p-1,p}(x^*)}}\frac{\frac{\varphi(x_1^*)R_1(x_1^*)}{p-2-q_{1,p-1,p}(x^*)}+\frac{\varphi(t_1)R_1(t_1)}{p-2-q_{1,p-1,p}(t)}}{\binom{2p-q_{1,p-1,p}(x^*)-q_{1,p-1,p}(t)-4}{p-2-q_{1,p-1,p}(x^*)}}\\
    \cdot \Big(R_{p-1}(x_{p-1}^*)R_p(x_p^*)-R_{p-1}(x_{p-1}^*)R_p(t_p)-R_{p-1}(t_{p-1})R_p(x_p^*)+R_{p-1}(t_{p-1})R_p(t_p)\Big).
    \end{multlined}
\end{align*}

To obtain the formula stated in the Theorem, notice that $q(x)=q_{1,p-1,p}(x)+R_1(x)+R_{p-1}(x)+R_p(x)$ and that the condition $\Omega_{1,p-1,p}(t)' \subseteq \Omega_{1,p-1,p}(x^*)$ in the summation may be replaced with the condition $R_k(t_k)=1 \vee R_k(x_k^*)=1 \,\forall k$: the only datapoints for which the two conditions differ are the points for which $R_k(t_k)=R_k(x^*_k)=0$ for either $k=1$, $k=p-1$ or $k=p$. For such datapoints, the effect is null and thus does not affect the formula.\\
\end{proof}

The theorem discussed in the paper is a specific case of the theorem above:

\begin{cor}
\label{cor:ourFormulaIntCor}
Assume to have a dataset $\mathcal{T}$ of size $n$. Consider a 0-1 coded rule decomposed as the product of single conditions and thus of the form $r(x_1,\ldots,x_p)=\prod_{k=1}^pR_k(x_k)$, with ${R_k:\mathbb{R} \rightarrow \{0,1\}}$. Given two different indices $j,j' \in \{1,\ldots,p\}$ and a datapoint $x^*$, the interaction of the $j$-th and the $j'$-th features within the prediction $\hat{a}\cdot r(x^*)$ as defined by marginal interaction Shapley values is unbiasedly estimated by:
\begin{align*}
\widehat{\phi}_{j,j'}(x^*) \!\begin{multlined}[t]
    =\frac{\hat{a}}{n(p-1-q(x^*)+R_j(x^*_j)+R_{j'}(x^*_{j'}))}\\
    \cdot \sum_{\substack{t \in \mathcal{T} \text{s.t.}\\ R_k(t_k)=1 \vee R_k(x^*_k)=1 \,\forall k}}\frac{R_{j'}(x_{j'}^*)R_j(x_j^*)-R_{j'}(x_{j'}^*)R_j(t_j)-R_{j'}(t_{j'})R_j(x_j^*)+R_{j'}(t_{j'})R_j(t_j)}{\binom{2p-q(x^*)-q(t)+R_{j'}(x^*_{j'})+R_j(x^*_j)+R_{j'}(t_{j'})+R_j(t_j)-3}{p-q(x^*)+R_{j'}(x^*_{j'})+R_j(x^*_j)}},
    \end{multlined}
\end{align*}
where $q: \mathbb{R}^p \rightarrow \mathbb{N}$ is defined as $q(x)=\sum_{j=1}^pR_l(x_l)$ and $\vee$ is the logical \enquote{or} operator.
\end{cor}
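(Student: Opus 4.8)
The plan is to obtain this corollary as the special case $\varphi \equiv \hat{a}$ of the general interaction Shapley value theorem proven immediately above, exactly mirroring the way Corollary~\ref{cor:ourFormulaCor} was deduced from its general counterpart. The key observation is that the prediction $\hat{a}\cdot r(x)$ can be written as $\varphi(x_\ell)\,r(x)$ for the constant function $\varphi(u)=\hat{a}$ and \emph{any} choice of index $\ell$, so we are free to pick $\ell$ so as to satisfy the hypotheses of the general theorem. Since $j\neq j'$ by assumption, I would take $\ell=j'$, which makes both requirements $j\neq j'$ and $j\neq\ell$ hold and places us in the branch $j'=\ell$ of the general theorem.

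First I would instantiate that $j'=\ell$ formula with $\varphi(x_{j'}^*)=\varphi(t_{j'})=\hat{a}$. The numerator $\varphi(x_{j'}^*)\big(R_{j'}(x_{j'}^*)R_j(x_j^*)-R_{j'}(x_{j'}^*)R_j(t_j)\big)-\varphi(t_{j'})\big(R_{j'}(t_{j'})R_j(x_j^*)-R_{j'}(t_{j'})R_j(t_j)\big)$ collapses to $\hat{a}\big(R_{j'}(x_{j'}^*)R_j(x_j^*)-R_{j'}(x_{j'}^*)R_j(t_j)-R_{j'}(t_{j'})R_j(x_j^*)+R_{j'}(t_{j'})R_j(t_j)\big)$, and the constant $\hat{a}$ pulls out of the sum. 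The prefactor $\frac{1}{n(p-1-q(x^*)+R_j(x^*_j)+R_{j'}(x^*_{j'}))}$, the summation domain $R_k(t_k)=1\vee R_k(x^*_k)=1\ \forall k$, and the binomial denominator $\binom{2p-q(x^*)-q(t)+R_{j'}(x^*_{j'})+R_j(x^*_j)+R_{j'}(t_{j'})+R_j(t_j)-3}{p-q(x^*)+R_{j'}(x^*_{j'})+R_j(x^*_j)}$ all carry over verbatim, which yields precisely the stated expression. Unbiasedness is inherited directly from the general theorem.

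Since the argument is a pure substitution, there is no real obstacle; the only point deserving a word of care is the legitimacy of the reparametrisation $\hat{a}\cdot r(x)=\varphi(x_\ell)\,r(x)$ together with the admissibility of the chosen $\ell$ — and $\ell=j'$ is admissible exactly because $j\neq j'$, so the hypothesis $j\neq\ell$ of the general theorem is met. One could alternatively pick any $\ell\notin\{j,j'\}$ and invoke the $j'\neq\ell$ branch, but then one would additionally have to verify that the two branches agree when $\varphi$ is constant; the choice $\ell=j'$ sidesteps this and makes the reduction immediate.
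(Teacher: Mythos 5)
Your proposal is correct and matches the paper's own proof: the paper likewise specializes the general interaction theorem to the constant function $\varphi \equiv \hat{a}$, using the $j'=\ell$ branch so that the numerator collapses to $\hat{a}$ times the four-term expression and $\hat{a}$ factors out of the sum. Your extra remark on why $\ell=j'$ is admissible (because $j\neq j'$) is a small clarification the paper leaves implicit, but the argument is the same.
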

\begin{proof}
We may use the theorem above for $\varphi \equiv \hat{a}$. Then $r(x)=\varphi(x_j)r(x)$ has Shapley values estimated by:

\begin{align*}
\widehat{\phi}_{j,j'}(x^*)
    &=\frac{1}{n(p-1-q(x^*)+R_j(x^*_j)+R_{j'}(x^*_{j'}))}\\
    & \qquad \cdot \sum_{\substack{t \in \mathcal{T} \text{s.t.}\\ R_k(t_k)=1 \vee R_k(x^*_k)=1 \,\forall k}}\frac{\varphi(x_{j'}^*)R_{j'}(x_{j'}^*)\big(R_j(x_j^*)-R_j(t_j)\big)-\varphi(t_{j'})R_{j'}(t_{j'})\big(R_j(x_j^*)-R_j(t_j)\big)}{\binom{2p-q(x^*)-q(t)+R_{j'}(x^*_{j'})+R_j(x^*_j)+R_{j'}(t_{j'})+R_j(t_j)-3}{p-q(x^*)+R_{j'}(x^*_{j'})+R_j(x^*_j)}}\\
    &=\frac{\hat{a}}{n(p-1-q(x^*)+R_j(x^*_j)+R_{j'}(x^*_{j'}))}\\
    & \qquad \cdot \sum_{\substack{t \in \mathcal{T} \text{s.t.}\\ R_k(t_k)=1 \vee R_k(x^*_k)=1 \,\forall k}}\frac{R_{j'}(x_{j'}^*)R_j(x_j^*)-R_{j'}(x_{j'}^*)R_j(t_j)-R_{j'}(t_{j'})R_j(x_j^*)+R_{j'}(t_{j'})R_j(t_j)}{\binom{2p-q(x^*)-q(t)+R_{j'}(x^*_{j'})+R_j(x^*_j)+R_{j'}(t_{j'})+R_j(t_j)-3}{p-q(x^*)+R_{j'}(x^*_{j'})+R_j(x^*_j)}}.
\end{align*}
\end{proof}

\end{document}